\theoremstyle{plain}
\newtheorem{theorem}{Theorem}[section]
\newtheorem{lemma}[theorem]{Lemma}
\newtheorem{corollary}[theorem]{Corollary}
\theoremstyle{definition}
\theoremstyle{remark}
\newtheorem{remark}[theorem]{Remark}
\begin{document}

%

%

\twocolumn[

\aistatstitle{Prior-Dependent Allocations for Bayesian Fixed-Budget Best-Arm Identification in Structured Bandits}

\aistatsauthor{Nicolas Nguyen$^*$ \And Imad Aouali$^*$ \And András György \And Claire Vernade}
\aistatsaddress{ University of Tübingen \And  Criteo AI Lab \\ CREST, ENSAE, IP Paris \And Google DeepMind \And University of Tübingen} ]

{\renewcommand{\thefootnote}{}\footnotetext{* Equal contribution.}}

\begin{abstract}
We study the problem of Bayesian fixed-budget best-arm identification (BAI) in structured bandits. We propose an algorithm that uses fixed allocations based on the prior information and the structure of the environment. We provide theoretical bounds on its performance across diverse models, including the first prior-dependent upper bounds for linear and hierarchical BAI. Our key contribution lies in introducing novel proof techniques that yield tighter bounds for multi-armed BAI compared to existing approaches. Our work provides new insights into Bayesian fixed-budget BAI in structured bandits, and extensive experiments demonstrate the consistent and robust performance of our method in practice across various settings. 
\end{abstract}

\section{INTRODUCTION}\label{sec:introduction}
Best arm identification (BAI) addresses the challenge of finding the optimal arm in a bandit environment \citep{lattimore2020bandit}, with wide-ranging applications in online advertising, drug discovery or hyperparameter tuning. BAI is commonly approached through two primary paradigms: \textit{fixed-confidence} and \textit{fixed-budget}. In the fixed-confidence setting \citep{even2006action,kaufmann2016complexity}, the objective is to find the optimal arm with a pre-specified confidence level. Conversely, fixed-budget BAI \citep[e.g., ][]{audibert2010best} involves identifying the optimal arm within a fixed number of observations. Within this fixed-budget context, two main metrics are used: the \emph{probability of error} (PoE) \citep{audibert2010best, karnin2013almost, carpentier2016tight}, which is the likelihood of incorrectly identifying the optimal arm, and the \emph{simple regret} \citep{bubeck2009pure, russo2016simple,komiyama2023rate}, which corresponds to the expected performance disparity between the chosen and the optimal arm. We study PoE minimization in fixed-budget BAI in the Bayesian setting \citep{atsidakou2022bayesian}, a problem closer to statistical hypothesis testing \citep{bernardo2002bayesian} than to regret minimization \citep{lattimore2020bandit}, and often viewed as a more difficult question in general \citep{carpentier2016tight,degenne2023existence}. 

\textbf{The question of adaptivity in fixed-budget BAI.} Existing algorithms for PoE minimization in fixed-budget BAI are largely frequentist and often employ elimination strategies, such as Sequential Halving (SH) \citep{karnin2013almost}, that can be derived for linear models using optimal designs \citep{azizi2021fixed,hoffman2014correlation,katz2020empirical,yang2022minimax}. 
These algorithms are called \emph{adaptive} in the sense that the sampling strategy progressively allocates more budget to higher-valued arms. That being said, BAI is a hard problem whose complexity remains hard to fully understand \citep{degenne2023existence}, and for which simple baselines can be deceivingly strong on some problem instances. \citet{wang2024universally} showed that for two-armed Bernoulli bandits, there is no algorithm that is strictly better than the uniform sampling algorithm in all instances when comparing the frequentist PoE.  
In fact, \citet{degenne2023existence} established that there is no superior adaptive algorithms in several BAI problems, including Gaussian BAI.
This questions the role of adaptivity for fixed-budget BAI problems \citep{qin2022open}.

In the Bayesian setting, however, when prior knowledge is available to the learner, the competitiveness of the uniform strategy must be reassessed. As a motivating example, consider a scenario with 3 arms, 
where prior information strongly suggests that either of the first two arms is more likely to be optimal than the third. How much budget should then be allocated to the seemingly suboptimal third arm? Moreover, if there is greater confidence in the first arm compared to the second, 
what is the optimal budget distribution between them? 
In this paper, we address the following general question: \emph{How to choose a prior-dependent fixed allocation of the budget to guarantee a low expected PoE?}

\textbf{Prior works on Bayesian BAI. }
Bayesian approaches have focused on the minimization of the simple regret \citep{azizi2023meta,komiyama2023rate}, or were studied under a frequentist lens \citep{hoffman2014correlation,yang2024improving}, which do not capture the advantages of knowing informative priors. Only recently, \citet{atsidakou2022bayesian} introduced a Bayesian version of Sequential Halving for BAI in the $K$-armed Gaussian multi-armed bandit (MAB) setting. They incorporate prior knowledge in two ways: the allocations in each elimination phase prioritize arms with higher observation noise variance, and the final decision is based on the posterior mean which is a function of the prior and of the observations. 
However, their analysis directly relies on averaging frequentist upper bounds over the prior and require assumptions on the prior distribution; unfortunately, for technical reasons,\footnote{We provide a detailed discussion in \cref{sec:app_related_work,sec:analysis}.} it does not generalize to more informative priors and structured bandits. In particular, to the best of our knowledge, it cannot be extended to linear bandits through optimal design. 
Our work provides a novel algorithm that naturally applies to structured bandits and gives a general answer to Bayesian fixed-budget BAI with non-adaptive allocation strategies.

\textbf{Contributions.} \textbf{1)} We present and analyze a fixed-budget BAI algorithm, which we call Prior-Informed BAI (\AdaBAI), that leverages prior information for efficient exploration. Our main contributions include establishing prior-dependent upper bounds on its expected PoE in multi-armed, linear, and hierarchical bandits. Specifically, in the MAB setting, our upper bound is tighter and valid under milder assumptions on the prior than that of \citet{atsidakou2022bayesian}. \textbf{2)} The proof techniques developed for \AdaBAI provide a fully Bayesian perspective, significantly diverging from existing proofs that rely on frequentist techniques. This gives a more comprehensive framework for understanding and analyzing Bayesian BAI algorithms, while also enabling us to relax previously held assumptions. \textbf{3)} As a result, unlike earlier approaches, our algorithms and proofs are naturally applicable to structured problems, such as linear and hierarchical bandits, leading to the first Bayesian BAI algorithm with a prior-dependent PoE bound in these settings. \textbf{4)}  We empirically evaluate \AdaBAI's variants in numerous setups. Our experiments on synthetic and real-world data show the generality of \AdaBAI and highlight its good performance.

\section{A NEW ALGORITHM CONCEPT}
\label{sec:background}
\textbf{Notation.} For any positive integer $K$, let $\Delta_K$ be the $K$-simplex and $\Delta_K^+ = \{\omega \in \Delta_K\,; \, \omega_i > 0 \,, \,  \forall i \in [K]\}$, where $[K]=\{1,2,\ldots,K\}$. For any positive-definite matrix $A \in \real^{d\times d}$ and vector $v \in \real^d$, we define $\normw{v}{A} = \sqrt{v^\top A v}$. Also, $\lambda_1(A)$ and $\lambda_d(A)$ denote the maximum and minimum eigenvalues of $A$, respectively. We denote by $e_i$ the $i$-th vector of the canonical basis, and by $\mathrm{KL}$ the Kullback-Leibler divergence.

\textbf{Background.} We consider scenarios involving $K$ arms. In each round $t \in [n]$, the agent selects an arm $A_t \in [K]$, and then receives a stochastic reward $Y_t \sim P(\cdot; \theta, A_t)$, where $\theta$ is an \emph{unknown} parameter vector and $P(\cdot; \theta, i)$ is the \emph{known} reward distribution of arm $i$, given $\theta$. We denote by $r(i; \theta)=\E{Y \sim P(\cdot; \theta, i)}{Y}$ the mean reward of arm $i$, given $\theta$.
We adopt the Bayesian view where $\theta$ is assumed to be sampled from a \emph{known} prior distribution $P_0$. Given a bandit instance characterized by $\theta \sim P_0$, the goal is to find the (unique) optimal arm $i_*(\theta)=\argmax_{i \in [K]} r(i; \theta)$ by interacting with the bandit instance for a fixed-budget of $n$ rounds.  
These interactions are summarized by \emph{the history} $H_n=\{(A_t, Y_t)\}_{t=1}^n$, and we denote $J_n \in [K]$ the arm selected by the agent after $n$ rounds. In this Bayesian setting, \citet{atsidakou2022bayesian} study the Bayesian risk, which they call \emph{expected PoE}:
\begin{align*}
    \mathcal{P}_n =\E{}{\mathbb{P}\big(J_n \neq i_*(\theta) \mid \theta \big)}\,,
\end{align*} 
that corresponds to the average PoE over instances sampled from the prior, $\theta \sim P_0$. This is different from the frequentist counterpart where the performance is assessed for a single $\theta$.

\subsection{Prior-Informed BAI}
\vspace{-0.1em}
We consider the following algorithm: \\[-0.8em]
\begin{minipage}{0.5\textwidth}
\begin{algorithm}[H]
\caption{Prior-Informed BAI: $\AdaBAI(\texttt{Alloc})$}
\label{alg:ada_bai_fixed}
\textbf{Input:} Budget $n$, prior $P_0$, function \texttt{Alloc} (e.g. Sec~\ref{subsec:allocations})
Compute allocations $\omega = \texttt{Alloc}(n, P_0)$.\\
\For{$i=1, \dots, K$}{Get $n_{i} = \lfloor \omega_i n \rfloor $ samples of arm $i$\\
Compute mean posterior reward $\E{}{r(i; \theta) \mid H_{n}}$}
Set $J_n = \argmax_{i \in [K]}\E{}{r(i; \theta) \mid H_{n}}$.
\end{algorithm}
\end{minipage}

\AdaBAI (\cref{alg:ada_bai_fixed}) takes as input
a prior distribution $P_0$, a budget $n$ and an allocation function \texttt{Alloc}. Then, \AdaBAI uses the function \texttt{Alloc} to compute a vector of prior-dependent allocation weights $\omega = (\omega_i)_{i \in [K]} \in \Delta_K$. This vector is central to the good performance of our method; choices how to compute it are discussed in details in the next section. After that, \AdaBAI  proceeds in a straightforward manner: it collects $n_{i} = \lfloor \omega_i n \rfloor$ samples for each arm $i$, and finally returns the arm with the highest mean posterior reward, defined as 
\begin{align}\label{eq:decision_rule}
\textstyle
    J_n = \argmax_{i \in [K]}\,\E{}{r(i; \theta) \mid H_{n}}\,.
\end{align}
This posterior expectation can be computed in closed-form in Gaussian bandits with Gaussian priors. Note that a fully Bayesian algorithm would set $J_n = \argmin_{j\in[K]}\mathbb{P}(j\neq i_*(\theta) \,|\, H_n)$; however, this cannot be computed explicitly even in the simple Gaussian setting, and we instead opt for a decision rule that is deterministic conditionally on observations, as in top-two based algorithms in the fixed-confidence setting \citep{jourdan2022top}.

Decision rule \eqref{eq:decision_rule} 
naturally generalizes to structured bandit settings: the structure is a direct prior information that is taken into account in the computation of the posterior updates and the allocation vector. Similarly, and despite additional technicalities, our novel proof scheme (see \cref{sec:analysis}) is preserved across all settings we consider. 

\textbf{Analysis overview.} For any prior-dependent allocation vector $\omega \in \Delta^+_K$ (computed by \texttt{Alloc}), we derive an upper bound on the PoE in multi-armed, linear and hierarchical bandits (see below \cref{th:upper bound,th:linear_model_upper_bound,th:upper_bound_hierarchical_model}) of the form 
\begin{equation}
\label{eq:general-analysis}
  \mathcal{P}_n \leq C(P_0, \omega, n),  
\end{equation}
where the bound $C$ depends on the prior $P_0$, the budget $n$ and the allocation weights $\omega$. The bound is valid for any $\omega \in \Delta^+_K$, but its guarantee, as well as the performance of the algorithm, weakens for worse choices of $\omega$. To alleviate this, a key aspect of our work is to derive good (prior-dependent) allocation strategies.


\section{ALLOCATIONS STRATEGIES}\label{subsec:allocations}
In this section, we propose three approaches to derive allocation strategies to be used in \AdaBAI, each leveraging the prior in different ways, and we show in the next section how they can be implemented in structured bandit models. 

\textbf{Allocation by optimization.} Since the bound \eqref{eq:general-analysis} is valid for any $\omega \in \Delta^+_K$, we can define the \emph{optimized allocation weights} as
\begin{talign}\label{eq:optimized_weights}
&\text{Opt}(n, P_0) = \argmin_{w \in \Delta_K^+} C(P_0, \omega, n) \,.
\end{talign}
We call the resulting algorithm $\AdaBAI(\text{Opt})$, where we set the allocation function $\texttt{Alloc} = \text{Opt}$. 
In general, \eqref{eq:optimized_weights} is non-convex and may require advanced numerical methods \citep[e.g., L-BFGS-B of ][]{2020SciPy-NMeth} to find a suitable optimum. 
Fortunately, this optimization is performed only once (offline) before interacting with the environment. 

In our experiments, as a heuristic, we define a mixture of optimized weights: \(\alpha \wOpt_i + (1 - \alpha)\frac{\mu_{0,i}\sigma_{0,i}}{\sum_{k \in [K]}\mu_{0,k}\sigma_{0,k}}\) where $\alpha \in (0, 1)$ whose performance is still covered by our upper bound. We tested various \(\alpha\) values (\cref{subsec:app_experiments_hyperparameters}) and found the best performance at \(\alpha \approx 0.5\). For notational simplicity, we still refer to this allocation weight (with \(\alpha=0.5\)) as \(\wOpt\).

\textbf{Allocation by G-optimal design.} 
Optimal experimental design \citep[Chapter 21]{lattimore2020bandit} is an optimization problem that returns an allocation over arms that balances exploration and exploitation, usually used in linear bandit algorithms with fixed-design \citep{abbasi-yadkori11improved}. We notice that this idea can also be leveraged to obtain allocations for \emph{Bayesian} BAI.
Finding an (approximate) Bayesian G-optimal design \citep[Chapter 4]{lopez2023optimal} is equivalent to maximizing the log-determinant of the regularized information matrix defined as 
\begin{align}
    \label{eq:optimal_design} 
    \!\!\text{G-opt}(n, P_0) = 
    \argmax_{\omega \in \Delta_K^+}\, \log \det\left(\frac{n}{\sigma^2}\Sigma_\omega + \Sigma_0^{-1}\!\right)\!,
\end{align}
where $(x_1,\dots x_k)$ are the action vectors (unit vectors in the MAB setting) and $\Sigma_\omega = \sum_{k \in [K]}\!\omega_k x_k x_k^\top$.
This leads to budget allocations that minimize the worst-case posterior variance in all directions.

Note that both allocation weights $\text{Opt}(n, P_0)$ \eqref{eq:optimized_weights} and $\text{G-opt}(n, P_0)$ \eqref{eq:optimal_design} are prior-dependent and fixed in advance before interacting with the environment (hence they are independent of the actual instantiation $\theta$ of the environment). Therefore, both enjoy the theoretical guarantees we derive in \cref{subsec:bound_MAB,subsec:bound_structured} that hold for any allocation weights fixed in advance.

\textbf{Allocation by warm-up.} Finally, we introduce an improper `adaptive' strategy that calls a possibly prior-dependent \emph{warm-up} policy
\(\pi_{\rm{w}}\) to interact with the bandit environment for \(n_{\rm{w}}\) rounds (the warm-up phase). This no longer corresponds exactly to the \AdaBAI algorithm as this \texttt{Alloc} strategy also interacts with the environment and outputs instance-dependent weights  \(\omega_{\pi_{\rm{w}}}\), thus our theoretical guarantees do not apply to this framework. 
The warm-up policy can be any decision-making strategy that (preferably) also uses the prior. Inspired by its strong performance in BAI \citep{lee2024thompson}, we use Thompson sampling for \(\pi_{\rm{w}}\). The allocation weights are then set proportional to the number of pulls for each arm during the warm-up phase: \(\wTS_i = \frac{1}{n_{\rm{w}}}\sum_{t \in [n_{\rm{w}}]}\I{A_t = i}\) for all \(i \in [K]\). These strategies are further discussed in \cref{subsec:app_differences_opt_and_TS}.

\vspace{-0.2cm}
\section{BANDIT SETTINGS}
\vspace{-0.2cm}
In this section, we present how to concretely implement our algorithm \AdaBAI for several major bandit settings using the three types of allocations discussed above, and prove the corresponding theoretical guarantees.
\subsection{Multi-Armed Bandits}
\label{subsec:bound_MAB}
In this setting, each component of $\theta = (\theta_i)_{i \in [K]}$ is sampled independently from the prior distribution. We focus on the Gaussian case where $\theta_i \sim \cN(\mu_{0, i}, \sigma_{0, i}^2)$, with $\mu_{0, i}$ and $\sigma_{0, i}^2$ being the \emph{known} prior reward mean and variance for arm $i$. Then, given $\theta$, the reward distribution of arm $i$ is $\cN(\theta_{i}, \sigma^2)$ where $\sigma^2$ is the (known) observation noise variance,\footnote{The noise variance $\sigma^2$ could be arm-dependent and we can provide a similar analysis, but we choose to keep notation simple.}
\vspace{-0.1cm}
\begin{talign}\label{eq:bayes_elim_model_gaussian}
   \theta_i &\sim \cN(\mu_{0, i}, \sigma_{0, i}^2)  & \forall i \in [K]\nonumber\\
    Y_t \mid \theta, A_t &\sim \cN(\theta_{A_t}, \sigma^2)  & \forall t \in [n]\,.
\end{talign}
Under \eqref{eq:bayes_elim_model_gaussian}, the posterior distribution of $\theta_i$ given $H_n$ is a Gaussian $\cN(\hat\mu_{{n}, i}, \sigma_{n, i}^2)$ \citep{Bishop2006} with
\begin{talign}\label{eq:bayes_elim_posterior}
     &\hat\sigma_{n, i}^{-2} = \sigma_{0, i}^{-2}  + n_i\sigma^{-2}\,, &
      \hat\mu_{n, i} = \hat{\sigma}_{n, i}^{2} 
      \left( \tfrac{\mu_{0, i}}{\sigma_{0, i}^{2}} +\tfrac{B_{n, i}}{\sigma^{2}}\right),
\end{talign}
where, defining 
$\mathcal{T}_i = \{t\in[n],\, A_t = i\}$ to be the set of rounds when arm $i$ is chosen, 
$n_i = |\mathcal{T}_i |$ is the number of times arm $i$ is chosen and $B_{n, i} = \sum_{t \in \mathcal{T}_i} Y_{t}$ is the sum of rewards of arm $i$. The \emph{mean posterior reward} in this case is $\E{}{r(i;\theta)\mid H_n} = \hat\mu_{n,i}$.

\textbf{PoE bound for MAB.} \cref{th:upper bound} presents an upper bound on the expected PoE of $\AdaBAI$ for MAB \eqref{eq:bayes_elim_model_gaussian} for a \emph{fixed and possibly prior-dependent} choice of positive allocation weights. The general proof scheme is presented in \cref{sec:analysis}, and the full proof is provided in \cref{subsec:app_proof_MAB}.
\begin{theorem}[Upper bound for MAB]\label{th:upper bound} For all $\omega \in \Delta^+_K$, the expected PoE of \emph{$\AdaBAI$} that uses allocation $\omega$ under the MAB problem \eqref{eq:bayes_elim_model_gaussian} is upper bounded as
\begin{talign*}
  \mathcal{P}_n &\leq \doublesum \frac{1}{\sqrt{1 + n \phi_{i,j}(\omega)}} \exp\left( -\frac{(\mu_{0, i} - \mu_{0, j})^2}{2(\sigma^2_{0, i} + \sigma^2_{0, j})}\right) \,,
\end{talign*}
where
\begin{align*}
\phi_{i, j}(\omega) = 
    \frac{\sigma^4_{0, i}\omega_i\left(\frac{\sigma^2}{n} + \omega_{j}\sigma^2_{0, j}\right) + \sigma^4_{0, j}\omega_{j}\left(\frac{\sigma^2}{n} + \omega_{i}\sigma^2_{0, i}\right)}{\sigma^2 \sigma^2_{0, i}\left(\frac{\sigma^2}{n} + \omega_{j}\sigma^2_{0, j}\right) + \sigma^2 \sigma^2_{0, j}\left(\frac{\sigma^2}{n} + \omega_{i}\sigma^2_{0, i}\right)}\,.
\end{align*}
In particular, $\phi_{i,j}(\omega)=\Omega(1)$ depends on the prior parameters and allocation weights, and  $\mathcal{P}_n = \mathcal{O}(1/\sqrt{n})$.
\end{theorem}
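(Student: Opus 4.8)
The plan is to work directly in the joint (prior~$\times$~noise) probability space rather than conditioning on $\theta$ and averaging afterwards; this is the genuinely Bayesian feature that distinguishes the argument from the frequentist-then-average approach. First, by the tower rule $\mathcal{P}_n = \mathbb{P}(J_n \neq i_*(\theta))$, where the probability is now over both $\theta \sim P_0$ and the observations $H_n$. Since $J_n = \argmax_i \hat\mu_{n,i}$ and $i_*(\theta) = \argmax_i \theta_i$, an error forces at least one pairwise ordering of the posterior means to disagree with the corresponding ordering of the $\theta_i$. I would therefore use the (almost sure) inclusion $\{J_n \neq i_*(\theta)\} \subseteq \bigcup_{i \neq j}\{\hat\mu_{n,i} > \hat\mu_{n,j},\ \theta_i < \theta_j\}$ (ties have probability zero under the Gaussian model) and a union bound to reduce the statement to controlling, for each pair, the confusion probability $p_{ij} := \mathbb{P}(\hat\mu_{n,i} > \hat\mu_{n,j},\ \theta_i < \theta_j)$.

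The key structural step is a Gaussian decomposition of each $\theta_i$ around its posterior mean. By conjugacy $(\theta_i, H_n)$ is jointly Gaussian and the posterior of $\theta_i$ is $\cN(\hat\mu_{n,i}, \hat\sigma_{n,i}^2)$ with a variance that does not depend on the observations; hence I can write $\theta_i = \hat\mu_{n,i} + \epsilon_i$ with $\epsilon_i \sim \cN(0,\hat\sigma_{n,i}^2)$ \emph{independent} of $\hat\mu_{n,i}$ (the residual of a conditional expectation is orthogonal to, and here independent of, the estimator), and marginally $\hat\mu_{n,i} \sim \cN(\mu_{0,i},\, \sigma_{0,i}^2 - \hat\sigma_{n,i}^2)$. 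For a fixed pair set $V = \hat\mu_{n,i} - \hat\mu_{n,j}$ and $U = \theta_i - \theta_j$; using independence across arms, $V \sim \cN(\delta_{ij}, s_{ij}^2)$ and $U = V + W$ with $W = \epsilon_i - \epsilon_j \sim \cN(0, \tau_{ij}^2)$ independent of $V$, where $\delta_{ij} = \mu_{0,i}-\mu_{0,j}$, $\tau_{ij}^2 = \hat\sigma_{n,i}^2 + \hat\sigma_{n,j}^2$ and $s_{ij}^2 = (\sigma_{0,i}^2+\sigma_{0,j}^2) - \tau_{ij}^2$. The confusion event is then $\{V > 0,\ W < -V\}$.

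Conditioning on $V$ and integrating gives $p_{ij} = \int_0^\infty \phi_V(v)\,\Phi(-v/\tau_{ij})\,dv$, at which point I would apply the tail bound $\Phi(-x) \le \tfrac12 e^{-x^2/2}$ and complete the square in the product of the two Gaussian exponentials. The constant term reduces to $-\delta_{ij}^2 / \big(2(s_{ij}^2 + \tau_{ij}^2)\big) = -\delta_{ij}^2/\big(2(\sigma_{0,i}^2+\sigma_{0,j}^2)\big)$, using $s_{ij}^2 + \tau_{ij}^2 = \sigma_{0,i}^2 + \sigma_{0,j}^2$, which is exactly the exponential factor in the statement; bounding the remaining truncated Gaussian integral by the full one leaves the prefactor $(1 + s_{ij}^2/\tau_{ij}^2)^{-1/2}$. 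It then remains a purely algebraic check, substituting $\hat\sigma_{n,i}^2 = \sigma_{0,i}^2\sigma^2/(\sigma^2 + n_i\sigma_{0,i}^2)$ with $n_i = \omega_i n$, that $s_{ij}^2/\tau_{ij}^2 = n\,\phi_{i,j}(\omega)$ exactly. Summing over the pairs then yields the claimed double sum, the factors $\tfrac12$ being absorbed against the two orderings of each unordered pair.

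The main obstacle is twofold. Conceptually, the crux is justifying the independent decomposition $\theta_i = \hat\mu_{n,i} + \epsilon_i$ and manipulating the full joint law correctly; this is what makes the proof Bayesian and removes the prior assumptions needed when averaging frequentist bounds. Technically, the effort lies in the completing-the-square step and in the algebraic verification that $s_{ij}^2/\tau_{ij}^2 = n\,\phi_{i,j}(\omega)$, which I expect to be the most error-prone part. A minor bookkeeping point is the floor $n_i = \lfloor \omega_i n\rfloor$ versus $\omega_i n$: since $p_{ij}$ decreases in the sample counts, one should use $n_i \ge \omega_i n - 1$ to retain a valid upper bound at the cost of lower-order factors. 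Finally, the asymptotic claims follow from the displayed formula: as $n \to \infty$, $\phi_{i,j}(\omega) \to \omega_i\omega_j(\sigma_{0,i}^2+\sigma_{0,j}^2)/(\sigma^2(\omega_i+\omega_j)) > 0$ since $\omega \in \Delta_K^+$, so $\phi_{i,j}(\omega) = \Omega(1)$, and with only $O(K^2)$ summands each of order $(n\phi_{i,j})^{-1/2}$ we obtain $\mathcal{P}_n = \mathcal{O}(1/\sqrt{n})$.
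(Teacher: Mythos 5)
Your proposal is correct and follows essentially the same route as the paper's proof: a pairwise decomposition of the error event, a Gaussian tail bound on $\mathbb{P}(\theta_i\geq\theta_j\mid H_n)$ using the (deterministic) posterior variances, and then integration of the resulting $\exp\bigl(-(\hat\mu_{n,i}-\hat\mu_{n,j})^2/(2(\hat\sigma^2_{n,i}+\hat\sigma^2_{n,j}))\bigr)$ against the Gaussian marginal of $\hat\mu_{n,i}-\hat\mu_{n,j}$, which is exactly the paper's Lemma on $\E{X}{e^{-X^2/(2a^2)}}$; your identity $s_{ij}^2/\tau_{ij}^2=n\,\phi_{i,j}(\omega)$ and $s_{ij}^2+\tau_{ij}^2=\sigma_{0,i}^2+\sigma_{0,j}^2$ both check out. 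The only cosmetic differences are that you phrase the swap of measures via the explicit independent decomposition $\theta_i=\hat\mu_{n,i}+\epsilon_i$ and a union bound over ordered pairs (the paper uses an exact partition over $\{i_*(\theta)=i,\,J_n=j\}$), and you retain the factor $\tfrac12$ from $\Phi(-x)\le\tfrac12 e^{-x^2/2}$, which only makes your bound marginally tighter.
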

As a sanity check, note that if the prior is informative, either with small prior variances $\sigma_{0, i}^2 \to 0$ or large prior gaps $|\mu_{0, i} - \mu_{0, j}| \to \infty$, then $\mathcal{P}_n \to 0$ for any fixed allocation weights $\omega \in \Delta_K^+$. 

\textbf{Tightness of our bound.}
First we compare our upper bound to the 
lower bound of \citet{atsidakou2022bayesian} for the 2-armed Gaussian setting with $\sigma^2_{0, 1}=\sigma^2_{0, 2}=\sigma_0^2$:
\begin{talign*}
    \frac{e^{-\frac{(\mu_{0, 1}-\mu_{0, 2})^2}{4\sigma^2_0}}}{\sqrt{1 + \frac{n\sigma^2_0}{2\sigma^2}}} \overset{\text{Th.~\ref{th:upper bound}}}{\geq} \mathcal{P}_n \overset{\text{LB}}{\geq}  \frac{e^{-\frac{(\mu_{0, 1}-\mu_{0, 2})^2}{4\sigma^2_0}}}{2e\sqrt{1 + \frac{2n(8\log(2n)+1)\sigma^2_0}{\sigma^2}}}
\end{talign*}
Comparing the formulas above, we can see that in 2-arm problems we achieve optimal guarantees up to log factors and multiplicative universal constants. 

To further test the tightness of our bound, we numerically compare it with that of \BE \citep{atsidakou2022bayesian} using the 3-armed bandit example in \cref{sec:introduction}, where one arm is a priori suboptimal, while one of the other two is a priori optimal, reflected by prior means \(\mu_0 = (1, 1.9, 2)\). We consider two scenarios: one with homogeneous variances \(\sigma_{0, 1}= \sigma_{0, 2}= \sigma_{0, 3}= 0.3\) and another with heterogeneous variances \((\sigma_{0, 1}, \sigma_{0, 2}, \sigma_{0, 3}) = (0.1, 0.5, 0.5)\). Since \BE's bound does not handle heterogeneous prior variances, we use an average prior variance \(\sigma^2_{0} = \frac{1}{K}\sum_{i \in [K]}\sigma^2_{0, i}\) for comparison. 
\AdaBAI is instantiated with the following allocation strategies: Opt, using\eqref{eq:optimized_weights} with \cref{th:upper bound}; Uniform, with $\omega_i=1/K$ for all $i$; Random, where $\omega_i$ are generated uniformly at random from $[0,1]$ and then normalized to sum up to 1; and Heuristic, which gives more weight to arms with higher prior means and variances as \(\omega_i \propto \mu_{0, i}\sigma_{0, i}\).

As shown in \cref{fig:upper_bound}, our bound is tighter than the upper bound of \citet{atsidakou2022bayesian} for any allocations and in both variance settings. This gain is due to two main reasons: our bound is tighter by a factor \(\log_2^{3/2}(K)\) because our algorithm does not rely on elimination phases. Moreover, our proof technique yields terms that explicitly depend on the structure of the prior, which provides deeper insights beyond the $\mathcal{O}(1/\sqrt{n})$ rate typically obtained by averaging the frequentist bound of $\mathcal{O}(e^{-n/f(\theta)})$ \citep{audibert2010best,carpentier2016tight} over a Gaussian prior, though it does not necessarily result in faster rates.

\begin{figure}
    \centering
    \includegraphics[width=0.8\linewidth]{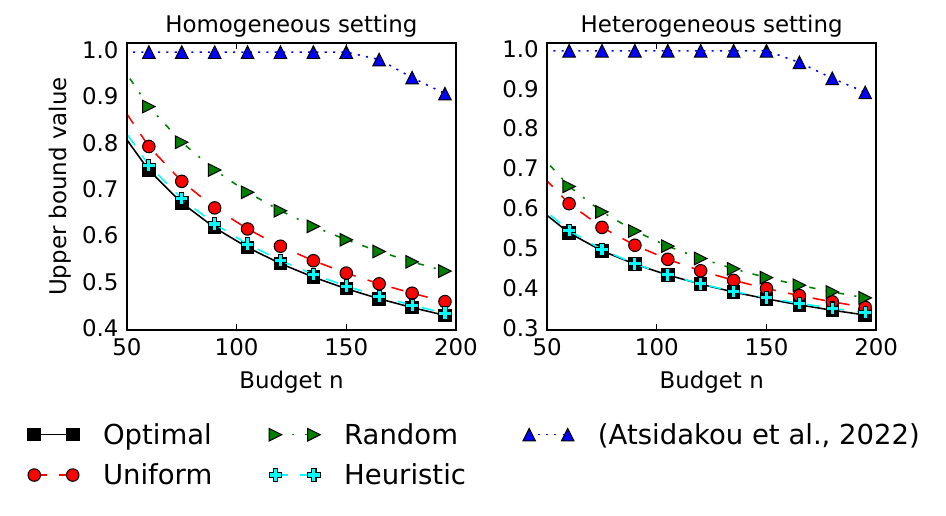}
\vspace{-2mm}
    \caption{Bound of \AdaBAI with different weights compared to \BE \citep{atsidakou2022bayesian}.}
    \label{fig:upper_bound}
\end{figure}

\textbf{Allocation by optimization: a simple example in the MAB setting.} To provide intuition, we present the explicit solution of \eqref{eq:optimized_weights} with \cref{th:upper bound} for the MAB setting with \(K=2\): denoting $\wOpt = \text{Opt}(n, P_0)$,
\begin{talign*}
    \wOpt_1 = \Pi_{[0, 1]}\left(\frac{1}{2} - \frac{(\sigma^2_{0, 2} - \sigma^2_{0, 1})\sigma^2}{2n\sigma^2_{0, 1}\sigma^2_{0, 2}}\right)\,, \; \text{and} \; \wOpt_2 = 1-\wOpt_1,
\end{talign*}
where \(\Pi_{[a, b]}(\cdot)\) denotes truncation into \([a,b]\). 
First, if the prior variances are equal (\(\sigma^2_{0, 1} = \sigma^2_{0, 2}\)), allocating an equal number of samples to each arm is optimal. Conversely, if \(\sigma^2_{0, 1} \ll \sigma^2_{0, 2}\) and the budget \(n\) is small, we get \(\wOpt_2 \gg \wOpt_1\), allocating most of the budget to the arm with higher prior variance (lower initial confidence). This is only relevant for small budgets since when \(n \to +\infty\), the optimal strategy converges to uniform allocation as the influence of the prior vanishes asymptotically. On the other hand, a saturation phenomenon happens for `too small' budgets: if $n<2\sigma^2 \left| \frac{\sigma_{0,2}^2}{\sigma_{0,1}^2} - \frac{\sigma_{0,1}^2}{\sigma_{0,2}^2}\right|$, the weight of the arm with larger prior variance is equal to 1 due to the truncation. Finally, this allocation strategy does not depend on the prior gap \(\Delta_0\), which is expected as identifying the optimal arm is equivalent to identifying the worst arm in the case of two arms. This equivalence does not hold when \(K>2\).


\textbf{Allocation by optimal design.} G-optimal design can be applied for MAB, by using $x_i = e_i$ for any $i \in [K]$ and $\Sigma_0 = {\rm{diag}}(\sigma_{0, i}^2)_{i \in [K]}$. 

\textbf{Allocation by warm-up.} Thompson Sampling is well-understood for MAB \citep{russo18tutorial} even beyond Gaussian priors so this allocation strategy is fairly versatil, though not covered by our theory since the resulting weights are instance-dependent.

\subsection{Linear Bandits}
\label{subsec:linear_bandit}
\label{subsec:bound_structured}
In linear bandits \citep{abbasi-yadkori11improved}, arms share a common low-dimensional representation through the parameter $\theta \in \real^d$. We denote $\cX = \{x_1,\dots x_K\}$ the set of arms where each $x_i \in \real^d$. 
In the Gaussian case, 
the reward distribution is parametrized by $\theta$,
\begin{talign}\label{eq:linear_gaussian}
   \theta &\sim \cN(\mu_{0}, \Sigma_0)  &\nonumber\\
    Y_t \mid \theta, A_t &\sim \cN(\theta^\top x_{A_t}, \sigma^2)  & \forall t \in [n]\,,
\end{talign}
for some prior mean $\mu_0 \in \real^d$ and variance $\Sigma_0 \in \real^{d\times d}$.
Similarly to \eqref{eq:bayes_elim_model_gaussian}, this model offers closed-form formulas \citep{agrawal13thompson}, where the posterior of $\theta$ given the history $H_n$ containing $n_i$ samples from arm $i$ is a Gaussian $\cN(\hat\mu_{{n}}, \Sigma_{n})$:
\begin{talign}\label{eq:linear_gaussian_posteriors}
     &\hat\Sigma_{n}^{-1} = \Sigma_0^{-1}  + \sigma^{-2} \sum_{i \in [K]}n_i x_{i} x_{i}^\top\nonumber\\
     &\hat\mu_{n} = \hat\Sigma_n \big( \Sigma_0^{-1}\mu_0 + \sigma^{-2} B_n \big)\,,
\end{talign}
where $B_{n} = \sum_{t \in [n]} Y_t x_{A_t}$. The mean posterior reward of arm $i$ is given by $\E{}{r(i;\theta)\mid H_n} = \hat\mu_{n}^\top x_{i}$. Note that the MAB \eqref{eq:bayes_elim_model_gaussian} can be recovered from \eqref{eq:linear_gaussian} when $x_i = e_i \in \real^K$ and $\Sigma_0 = {\rm{diag}}(\sigma_{0, i}^2)_{i \in [K]}$. 

\textbf{PoE bound for linear bandits.} Importantly, our error analysis extends to structured bandit problems. We begin with linear bandits in the next theorem and discuss the specific case of hierarchical models further below.


\begin{theorem}[Upper bound for linear bandits]
\label{th:linear_model_upper_bound}
Assume that $x_i \neq x_j$ for any $i \neq j$, and that $\mathcal{X}$ spans $\mathcal{R}^d$.
Then, for all $\omega \in \Delta^+_K$, the expected PoE of \emph{$\AdaBAI$} using allocation $\omega$ under the linear bandit problem \eqref{eq:linear_gaussian} is upper bounded as
\begin{talign*}
  \mathcal{P}_n &\leq \doublesum \frac{1}{\sqrt{1 + \frac{n c_{i,j}(\omega) }{ \lVert x_i - x_j\rVert^2_{\tilde\Sigma_n} }}}\exp\left(-\frac{( \mu_0^\top x_i - \mu_0^\top x_j)^2}{2\lVert x_{i} - x_{j}\rVert^2_{\Sigma_0}}\right)\,,
\end{talign*}
where $c_{i,j}(\omega)$ is given in \eqref{eq:linear_c_i_j}, and $\tilde\Sigma_n = (\frac{1}{n} \Sigma_0^{-1} + \sigma^{-2} \sum_{i \in [K]} w_i x_ix_i^\top)^{-1}$ is deterministic. 
\end{theorem}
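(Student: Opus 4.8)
The plan is to work directly with the joint law of $(\theta, H_n)$ and to reduce the Bayesian error to a union bound over pairs of arms, following the same scheme as the proof of \cref{th:upper bound} (detailed in \cref{sec:analysis}) but with the unit vectors $e_i - e_j$ replaced by the action differences $x_i - x_j$. By the tower rule, $\mathcal{P}_n = \mathbb{P}(J_n \neq i_*(\theta))$, where the probability is over $\theta \sim P_0$ and the rewards jointly. Since $J_n = \argmax_i \hat\mu_n^\top x_i$, whenever $J_n \neq i_*(\theta)$ there is a pair $(i,j)$ — namely $i = i_*(\theta)$, $j = J_n$ — on which the true and estimated orderings disagree, so
\begin{talign*}
\mathcal{P}_n \leq \sum_{i \neq j} \mathbb{P}\big(\theta^\top x_i > \theta^\top x_j,\ \hat\mu_n^\top x_i \leq \hat\mu_n^\top x_j\big)\,.
\end{talign*}
Writing $u = x_i - x_j$, each summand is the probability that the scalars $Z_1 = \theta^\top u$ and $Z_2 = \hat\mu_n^\top u$ have opposite signs, so the whole problem reduces to controlling a bivariate-Gaussian sign-disagreement probability.

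The second step establishes that $(Z_1,Z_2)$ is jointly Gaussian and computes its moments. The structural fact that makes the fully Bayesian argument work is that, for a fixed data-independent allocation, the posterior covariance $\hat\Sigma_n$ in \eqref{eq:linear_gaussian_posteriors} is deterministic. Hence the residual $W = \theta - \hat\mu_n$ satisfies $W \mid H_n \sim \cN(0,\hat\Sigma_n)$ with a law not depending on $H_n$, so $W$ is independent of $\hat\mu_n$; thus $(\theta,\hat\mu_n)$ is jointly Gaussian with common mean $\mu_0$ and $\theta = \hat\mu_n + W$. Projecting onto $u$ gives $\mathbb{E}[Z_1]=\mathbb{E}[Z_2]=\mu_0^\top u$, $\mathrm{Var}(Z_1)=\lVert u\rVert^2_{\Sigma_0}$, $\mathrm{Var}(Z_2)=\mathrm{Cov}(Z_1,Z_2)=\lVert u\rVert^2_{\Sigma_0-\hat\Sigma_n}$, and the increment $Z_1 - Z_2 = W^\top u \sim \cN(0,\lVert u\rVert^2_{\hat\Sigma_n})$ is independent of $Z_2$. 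The assumptions $x_i \neq x_j$ and $\mathcal{X}$ spanning $\real^d$ ensure $\Sigma_0,\hat\Sigma_n \succ 0$ and $u \neq 0$, so all these variances are strictly positive.

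The third step bounds the pairwise probability $\mathbb{P}(Z_1 > 0 \geq Z_2)$. Using the independent decomposition $Z_1 = Z_2 + (Z_1 - Z_2)$, I would integrate over $Z_2 \le 0$, bound the Gaussian survival function of the independent increment by $\tfrac12 e^{-t^2/2}$, and complete the square in the resulting single integral. Because $\mathrm{Var}(Z_2)+\mathrm{Var}(Z_1-Z_2)=\mathrm{Var}(Z_1)$, the completed-square constant collapses to exactly $-(\mu_0^\top u)^2/(2\lVert u\rVert^2_{\Sigma_0})$, producing the exponential factor, while the leftover normalizing integral contributes $\tfrac12\sqrt{\lVert u\rVert^2_{\hat\Sigma_n}/\lVert u\rVert^2_{\Sigma_0}}$ (after bounding a residual survival function by $1$). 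Summing the $(i,j)$ and $(j,i)$ contributions of each unordered pair removes the factor $\tfrac12$ and yields one term per pair. Finally, since $n_i = \lfloor \omega_i n\rfloor$, we have $\hat\Sigma_n^{-1}=\Sigma_0^{-1}+\sigma^{-2}\sum_i n_i x_i x_i^\top \approx n\,\tilde\Sigma_n^{-1}$; rewriting $\sqrt{\lVert u\rVert^2_{\hat\Sigma_n}/\lVert u\rVert^2_{\Sigma_0}} = (1+\lVert u\rVert^2_{\Sigma_0-\hat\Sigma_n}/\lVert u\rVert^2_{\hat\Sigma_n})^{-1/2}$ and absorbing the floor-induced discrepancy into the constant $c_{i,j}(\omega)$ of \eqref{eq:linear_c_i_j} gives the stated factor $\big(1+n c_{i,j}(\omega)/\lVert x_i - x_j\rVert^2_{\tilde\Sigma_n}\big)^{-1/2}$.

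I expect the main obstacle to be the bivariate-Gaussian integral of the third step: completing the square so that the mean-dependent contribution separates cleanly as $\exp(-(\mu_0^\top u)^2/(2\lVert u\rVert^2_{\Sigma_0}))$, and controlling the leftover survival-function factor by $1$ (rather than a mean-dependent quantity) so that the symmetric $(j,i)$ term can be added without cross terms. The secondary, bookkeeping obstacle is upgrading the approximation $\hat\Sigma_n^{-1}\approx n\tilde\Sigma_n^{-1}$ to an exact deterministic comparison of $\hat\Sigma_n$ with $\tfrac1n\tilde\Sigma_n$, which is precisely what pins down the definition of $c_{i,j}(\omega)$.
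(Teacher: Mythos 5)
Your proposal is correct and follows essentially the same route as the paper's proof: the pairwise union bound over $(i,j)$, the observation that the posterior covariance $\hat\Sigma_n$ is deterministic so that $\theta-\hat\mu_n$ is a $\cN(0,\hat\Sigma_n)$ increment independent of $\hat\mu_n$, the conditional Gaussian tail bound in the direction $x_i-x_j$, the expectation over $H_n$ via the Gaussian integral identity, and the law-of-total-variance collapse $\lVert u\rVert^2_{\hat\Sigma_n}+\lVert u\rVert^2_{\Cova(\hat\mu_n)}=\lVert u\rVert^2_{\Sigma_0}$ that produces the prior-dependent exponent. The only differences are cosmetic (you phrase the decomposition as a joint sign-disagreement probability rather than conditioning on $J_n=j$, and your retained factor $\tfrac12$ from the sharper survival bound is a harmless constant-level refinement), and the paper likewise treats $n_i=\omega_i n$ as exact so that $\hat\Sigma_n=\tfrac1n\tilde\Sigma_n$ holds without approximation.
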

In particular, $\mathcal{P}_n = \mathcal{O}(1/\sqrt{n})$ and we recover \cref{th:upper bound} when $x_i = e_i \in \real^K$ and $\Sigma_0 = {\rm{diag}}(\sigma_{0, i}^2)_{i \in [K]}$. Full expressions and proofs are given in \cref{subsec:app_proof_linear_bandit}.
Also, this bound captures the benefit of using informative priors since $\mathcal{P}_n \to 0$ when the prior variances are small, that is, $\Sigma_0\to 0_{L\times L}$, or when the prior gaps are large, in the sense that $|\mu_0^\top x_i - \mu_0^\top x_j| \to \infty$.

\textbf{Allocation by optimization. } Similarly to the MAB setting, the bound in \cref{th:linear_model_upper_bound} can be numerically optimized. This is a non-convex problem so there is no guarantee to reach a global optimum,\footnote{Our code is provided for reproducibility purposes.} and, in practice, initialization plays an important role. 

\textbf{Allocation by warm-up. } 
We use Linear Thompson Sampling \citep[e.g.,][]{abeille17linear} with the given informed prior $\mathcal{N}(\mu_0,\Sigma_0)$. 

\textbf{G-optimal design allocations. }
In this setting, the most appropriate allocation strategy is by Bayesian G-optimal design \eqref{eq:optimal_design}.
We specifically compute our upper bound with this choice below.
\begin{corollary}[Upper bound of $\AdaBAI(\text{G-opt})$]
\label{cor:linear_with_optimal_design}
Under the assumptions of \cref{th:linear_model_upper_bound},
    the expected PoE of \emph{$\AdaBAI(\text{G-opt})$} in problem \eqref{eq:linear_gaussian} with a diagonal covariance matrix $\Sigma_0$ is upper bounded as
    \begin{talign*}
        \mathcal{P}_n \leq \sum_{\substack{i,j \in [K] \\ i \neq j}} \frac{1}{\sqrt{1 + \frac{n}{2d\sigma^2}c_{i,j}}}\exp\left( -\frac{(\mu_0^\top x_i - \mu_0^\top x_j)^2}{ 2\lVert x_i - x_j \rVert^2_{\Sigma_0} }\right)\,,
    \end{talign*}
   where $c_{i,j}=\Omega(1)$. The full expressions and proofs are given in \cref{subsec:app_proof_linear_bandit}. 
\end{corollary}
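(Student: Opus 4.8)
The plan is to obtain the corollary as a direct specialization of Theorem~\ref{th:linear_model_upper_bound} to the allocation $\omega=\text{G-opt}(n,P_0)$, the only extra work being to control the deterministic quadratic form $\lVert x_i-x_j\rVert^2_{\tilde\Sigma_n}$ sitting in the denominator of each summand. The first step is the bookkeeping observation that $\tilde\Sigma_n$ is, up to the scalar $n$, the inverse of the very matrix whose log-determinant G-opt maximizes in \eqref{eq:optimal_design}. Writing $M(\omega)=\frac{n}{\sigma^2}\Sigma_\omega+\Sigma_0^{-1}$, one checks that $\tilde\Sigma_n^{-1}=\frac1n M(\omega)$, so $\tilde\Sigma_n=n\,M(\omega)^{-1}$ and therefore $\lVert x_i-x_j\rVert^2_{\tilde\Sigma_n}=n\,\lVert x_i-x_j\rVert^2_{M(\omega)^{-1}}$. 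This reduces everything to bounding the $M(\omega)^{-1}$-norms of the arms.

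Next I would invoke the first-order (Kiefer--Wolfowitz-type) optimality of the Bayesian D-/G-optimal design. Since $\omega\mapsto\log\det M(\omega)$ is concave with $\partial_{\omega_k}\log\det M(\omega)=\frac{n}{\sigma^2}\lVert x_k\rVert^2_{M(\omega)^{-1}}$, the KKT conditions over the simplex yield a multiplier $\lambda$ with $\frac{n}{\sigma^2}\lVert x_k\rVert^2_{M(\omega^\star)^{-1}}\le\lambda$ for every $k$, where $\omega^\star=\text{G-opt}(n,P_0)$. Multiplying by $\omega_k^\star$, summing over $k$, and using $\frac{n}{\sigma^2}\Sigma_{\omega^\star}=M(\omega^\star)-\Sigma_0^{-1}$ identifies $\lambda=d-\mathrm{tr}(M(\omega^\star)^{-1}\Sigma_0^{-1})\le d$. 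This gives the uniform guarantee $\lVert x_k\rVert^2_{M(\omega^\star)^{-1}}\le \sigma^2 d/n$ for all $k$, the regularized analogue of the classical $\max_x\lVert x\rVert^2_{V^{-1}}=d$ identity.

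From here the argument is mechanical. A triangle inequality in the $M(\omega^\star)^{-1}$-norm gives $\lVert x_i-x_j\rVert^2_{M(\omega^\star)^{-1}}\le 2\lVert x_i\rVert^2_{M(\omega^\star)^{-1}}+2\lVert x_j\rVert^2_{M(\omega^\star)^{-1}}\le 4\sigma^2 d/n$, hence $\lVert x_i-x_j\rVert^2_{\tilde\Sigma_n}\le 4\sigma^2 d$. Because $t\mapsto(1+a/t)^{-1/2}$ is increasing in $t>0$, replacing $\lVert x_i-x_j\rVert^2_{\tilde\Sigma_n}$ by this upper bound in Theorem~\ref{th:linear_model_upper_bound} only enlarges each term, producing the prefactor $\bigl(1+\frac{n}{2d\sigma^2}c_{i,j}\bigr)^{-1/2}$ once the absolute constant is folded into the redefined $c_{i,j}$ permitted by \eqref{eq:linear_c_i_j}; the exponential factors carry over unchanged. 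The diagonal-$\Sigma_0$ hypothesis enters only at this last stage, reducing the allocation-dependent factor $c_{i,j}(\omega^\star)$ to the clean instance-dependent constant $c_{i,j}=\Omega(1)$.

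I expect the regularized equivalence step to be the main obstacle: the textbook Kiefer--Wolfowitz theorem is stated for the unregularized information matrix $\sum_k\omega_k x_k x_k^\top$, so I must re-derive the optimality identity for $M(\omega)=\frac{n}{\sigma^2}\Sigma_\omega+\Sigma_0^{-1}$ and confirm that the $\Sigma_0^{-1}$ regularizer only tightens the bound through $\mathrm{tr}(M(\omega^\star)^{-1}\Sigma_0^{-1})\ge 0$, while also checking that $\lVert x_k\rVert^2_{M(\omega^\star)^{-1}}\le\sigma^2 d/n$ persists even when $\omega^\star$ falls on the boundary of the simplex (where the stationarity condition becomes the inequality $\frac{n}{\sigma^2}\lVert x_k\rVert^2_{M(\omega^\star)^{-1}}\le\lambda$). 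Tracking the exact numerical constant in $c_{i,j}$ through the diagonal-$\Sigma_0$ simplification is the only remaining point of care.
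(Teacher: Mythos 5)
Your proposal is correct and follows essentially the same route as the paper: the paper's Lemma~\ref{lemma:G_optimal_design_uuseful} is exactly your regularized Kiefer--Wolfowitz step (stated there via the concavity variational inequality $\langle\nabla f(\xi^*),\xi-\xi^*\rangle\le 0$ tested against Dirac measures and a Woodbury trace computation, which is the same argument as your KKT multiplier identity $\lambda=d-\mathrm{tr}(M(\omega^\star)^{-1}\Sigma_0^{-1})\le d$), after which both proofs bound $\lVert x_i-x_j\rVert^2$ by a multiple of $\max_k\lVert x_k\rVert^2$ and use monotonicity of $t\mapsto(1+a/t)^{-1/2}$. The only discrepancy is the constant in that last step: the paper writes $\lVert x_i-x_j\rVert^2_{\hat\Sigma_n}\le 2\max_k\lVert x_k\rVert^2_{\hat\Sigma_n}$ (which requires $x_i^\top\hat\Sigma_n x_j\ge 0$), whereas your Cauchy--Schwarz/triangle bound safely gives the factor $4$, costing only a universal constant absorbed into $c_{i,j}=\Omega(1)$.
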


\subsection{Hierarchical Bandits}\label{subsec:hierarchical_bandit}
Hierarchical (or \emph{mixed-effect}) models are rich models commonly used in practice where actions are typically structured into clusters, or \emph{effects} \citep{aouali2022mixed,Bishop2006,hong22hierarchical,wainwright2008graphical}. The correlations are then captured as 
\begin{talign}\label{eq:model_gaussian}
    \mu &\sim \cN(\nu, \Sigma)\nonumber\\
    \theta_i &\sim \cN(b_i^\top \mu, \sigma_{0, i}^2)  & \forall i \in [K]\nonumber\\
    Y_t \mid \mu, \theta, A_t &\sim \cN(\theta_{A_t}, \sigma^2)  & \forall t \in [n]\,
\end{talign} 
where $\mu = (\mu_\ell)_{\ell \in [L]}$ is an unknown latent parameter composed of $L$ effects $\mu_\ell$ and it is sampled from a multivariate Gaussian  with mean $\nu \in \real^L$ and covariance $\Sigma \in \real^{L \times L}$. Then, given $\mu$, the mean rewards $\theta_i$ are independently sampled as $\theta_i \sim \cN(b_i^\top \mu, \sigma_{0, i}^2)$, where $(b_i)_{i \in [K]}$ represent \emph{known mixing weights}. In particular, $b_i^\top \mu$ creates a linear mixture of the $L$ effects, with $b_{i, \ell}$ indicating a known score that quantifies the association between arm $i$ and the effect $\ell$. Concrete examples of $b_{i, \ell}$ are provided in \cref{subsec:app_motivating_examples}. Note that arm correlations arise because $\theta_i$ are derived from the same effect parameter $\mu$. Finally, given $\mu$ and $\theta$, the reward distribution of arm $i$ is similar to the MAB \eqref{eq:bayes_elim_model_gaussian} and only depends on $\theta_i$ as $\cN(\theta_{i}, \sigma^2)$.

Similarly to the linear case, an explicit marginal posterior distribution $\mathbb{P}(\theta_i \mid H_n)= \cN(\hat{\mu}_{n, i}, \hat{\sigma}_{n, i}^2)$ can be computed for closed-form posterior mean and variance parameters (see \cref{subsec:app_proof_technical}).

\paragraph{PoE bound for hierarchical bandits.} We derive a bound on the PoE that takes advantage of the hierarchical structure \eqref{eq:model_gaussian}. To the best of our knowledge, these are the first \emph{prior-dependent} bounds for fixed-budget Bayesian BAI in these settings.
\begin{theorem}[Upper bound for hierarchical bandits]
\label{th:upper_bound_hierarchical_model}
    For all $\omega \in \Delta^+_K$, the expected PoE of \emph{$\AdaBAI$} using allocaiton $\omega$ under the hierarchical bandit problem \eqref{eq:model_gaussian} is upper bounded as
    \begin{talign*}
    \mathcal{P}_n \leq \doublesum \frac{1}{\sqrt{1 + \frac{ c_{i,j}(\omega) }{\hat\sigma_{n,i}+\hat\sigma_{n,j}}}}\exp\left(\frac{-(\nu^\top b_i - \nu^\top b_j)^2}{2( \normw{b_i - b_j}{\Sigma}^2 + \sigma^2_{0, i}+\sigma^2_{0, j})}\right),
    \end{talign*}
    where $c_{i,j}(\omega)$ is given in \eqref{eq:hierarchical_c_i_j} and $\hat\sigma_{n,i}$ in \eqref{eq:posterior}. $c_{i,j} = \Omega(1)$ and $\hat\sigma^2_{n, i} = \Omega(1/n)$, and they depend on both the prior parameters and allocation weights. In particular, $\mathcal{P}_n = \mathcal{O}(1/\sqrt{n})$. Full expressions and proofs are given in \cref{subsec:app_proof_hierarchical}.
\end{theorem}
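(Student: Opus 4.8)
The plan is to follow the same fully Bayesian scheme used for \cref{th:upper bound,th:linear_model_upper_bound}, adapting it to the two-layer prior \eqref{eq:model_gaussian}. Since $r(i;\theta)=\theta_i$ here and the decision rule \eqref{eq:decision_rule} returns $J_n=\argmax_{i}\hat\mu_{n,i}$, an error on the ordered pair $(i,j)$ — the event $\{i_*(\theta)=i,\ J_n=j\}$ — forces both $\theta_i>\theta_j$ and $\hat\mu_{n,j}\ge\hat\mu_{n,i}$. Writing $D=\theta_i-\theta_j$ and $\hat D=\hat\mu_{n,i}-\hat\mu_{n,j}$, a union bound over the $K(K-1)$ ordered pairs gives $\mathcal{P}_n\le\doublesum \mathbb{P}(D>0,\ \hat D\le 0)$, so it suffices to bound each pairwise term.

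The first key step is to pin down the joint law of $(D,\hat D)$. Once the (deterministic) allocation $\omega$ is fixed, the triple $(\mu,\theta,H_n)$ is jointly Gaussian, so the posterior residual $\theta-\hat\mu_n$ is independent of $H_n$ and has a deterministic covariance equal to the posterior covariance; in particular $R:=D-\hat D=(\theta_i-\hat\mu_{n,i})-(\theta_j-\hat\mu_{n,j})$ is independent of the $H_n$-measurable quantity $\hat D$. By the tower rule $\E{}{\hat\mu_{n,i}}=\E{}{\theta_i}=\nu^\top b_i$, so $D$ and $\hat D$ share the mean $m:=\nu^\top b_i-\nu^\top b_j$. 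Marginalizing $\mu$ out of \eqref{eq:model_gaussian} gives $\mathrm{Var}(D)=\normw{b_i-b_j}{\Sigma}^2+\sigma_{0,i}^2+\sigma_{0,j}^2=:s_D^2$, which is exactly the exponent denominator, while $\mathrm{Var}(R)=\hat\sigma_{n,i}^2+\hat\sigma_{n,j}^2-2\hat\sigma_{n,ij}=:s_R^2$ is the posterior variance of $\theta_i-\theta_j$. Because $R\perp\hat D$ and $D=\hat D+R$, the variances split as $s_D^2=s_{\hat D}^2+s_R^2$, an identity I use crucially below.

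The second step bounds the pairwise probability by a Gaussian integral. Conditioning on $\hat D=v\le 0$ and using independence, $\mathbb{P}(D>0,\hat D\le0)=\int_{-\infty}^{0}\mathbb{P}(R>-v)\,f_{\hat D}(v)\,dv$. Applying the tail bound $\mathbb{P}(R>u)\le\tfrac12 e^{-u^2/(2s_R^2)}$ for $u\ge0$ and completing the square in the product of the two Gaussian factors, the cancellation $s_R^2+s_{\hat D}^2=s_D^2$ extracts precisely the constant $\exp(-m^2/(2s_D^2))$, and the leftover half-line Gaussian integral, bounded by the full-line one, contributes the normalizing factor $(1+s_{\hat D}^2/s_R^2)^{-1/2}$. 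This reproduces the claimed factorization with $c_{i,j}(\omega)=s_{\hat D}^2=s_D^2-s_R^2$ as displayed in \eqref{eq:hierarchical_c_i_j}; summing over pairs yields the stated bound.

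Finally, I would collect the asymptotics: standard posterior contraction for the Gaussian hierarchical model gives $\hat\sigma_{n,i}^2=\Omega(1/n)$, hence $s_R^2=\Omega(1/n)$ and $c_{i,j}(\omega)=s_{\hat D}^2\to s_D^2=\Omega(1)$, so each prefactor is $\mathcal{O}(1/\sqrt{n})$ and $\mathcal{P}_n=\mathcal{O}(1/\sqrt{n})$. I expect the main obstacle to be the explicit evaluation of the marginal and cross posterior quantities $\hat\sigma_{n,i}^2$ and $\hat\sigma_{n,ij}$ for the two-layer model: unlike the MAB case of \cref{th:upper bound}, the shared latent $\mu$ induces a nonzero posterior cross-covariance $\hat\sigma_{n,ij}$, and one must verify both $s_R^2=\hat\sigma_{n,i}^2+\hat\sigma_{n,j}^2-2\hat\sigma_{n,ij}>0$ and $c_{i,j}(\omega)=\Omega(1)$ uniformly in the realized data. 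This is exactly the ``additional technicality'' relative to the independent-arm setting, and it is resolved through the closed-form marginal posterior referenced in \eqref{eq:posterior}.
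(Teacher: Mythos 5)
Your proposal follows essentially the same route as the paper's proof: the same pairwise decomposition of $\mathcal{P}_n$ into events $\{i_*(\theta)=i,\,J_n=j\}$, a Gaussian tail bound on $\theta_i-\theta_j$ conditional on the data, integration over the Gaussian law of $\hat\mu_{n,i}-\hat\mu_{n,j}$ (your completed-square convolution is exactly what \cref{lemma:integration_gaussian} delivers), and the law-of-total-variance identity $\mathbb{V}(\theta_i-\theta_j)=\normw{b_i-b_j}{\Sigma}^2+\sigma^2_{0,i}+\sigma^2_{0,j}$ to produce the prior-dependent exponent, with $c_{i,j}(\omega)=\mathbb{V}(\hat\mu_{n,i}-\hat\mu_{n,j})$ identified correctly. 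The one substantive point of divergence is that you carry the posterior cross-covariance $\hat\sigma_{n,ij}$ induced by the shared latent $\mu$ through the argument, so your conditional variance is $s_R^2=\hat\sigma^2_{n,i}+\hat\sigma^2_{n,j}-2\hat\sigma_{n,ij}$, whereas the paper's conditional Hoeffding step and its denominator in the theorem use the sum of marginal posterior variances $\hat\sigma^2_{n,i}+\hat\sigma^2_{n,j}$; your version is the exact one, and it reproduces the stated prefactor $(1+c_{i,j}/(\hat\sigma^2_{n,i}+\hat\sigma^2_{n,j}))^{-1/2}$ as an upper bound only after verifying $\hat\sigma_{n,ij}\ge 0$ (equivalently $b_i^\top\breve{\Sigma}_n b_j\ge 0$), a check you should make explicit. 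What remains to turn the sketch into the theorem is the lengthy explicit evaluation of $c_{i,j}$ in \eqref{eq:hierarchical_c_i_j} from the closed-form posteriors \eqref{eq:cluster_posterior}--\eqref{eq:posterior}, which you correctly flag as the main technical burden but do not carry out.
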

The term $\normw{b_i - b_j}{\Sigma}^2 + \sigma^2_{0, i}+\sigma^2_{0, j}$ accounts for the prior uncertainty of both arms and effects. If the effects are deterministic ($\Sigma = 0_{L\times L}$) then our bound recovers the upper bound of MAB with prior mean $\mu_{0, i} = \nu^\top b_i$. On the other hand, if the arms are deterministic given the effects ($\sigma^2_{0, i}= 0$), the bound only depends on the effect covariance. Finally, if the prior is informative by its gap ($|\nu^\top b_i - \nu^\top b_j| \to \infty\,$) or by its variance ($\Sigma \to 0_{L\times L}$ and $\sigma^2_{0, i} \to 0$), then $\mathcal{P}_n \to 0$.

\textbf{Allocation by optimization.} Again, numerical optimization can be leveraged for the upper bound in \cref{th:upper_bound_hierarchical_model}, but in our experiments (see \cref{sec:experiments}) we see that this method reaches its limits due to the complexity of the bound to be optimized.

\textbf{Allocation by optimal design.} \eqref{eq:model_gaussian} is a special case of a linear bandit \eqref{eq:linear_gaussian}, by realizing that $\theta_i = b_i^\top \mu + \eta_i$ where $\eta_i \sim \cN(0,\sigma_{0,i}^2)$ and the $\eta_i$ are independent of $\mu$. Hence, \eqref{eq:model_gaussian} can be rewritten by replacing $\nu$ with $\bnu \in \real^{L+K}$ defined as $\bnu^\top=(\nu^\top,0,\ldots,0)$ and $\Sigma$ with a block-diagonal matrix $\bSigma= \langle \Sigma, \sigma_{0,1}^2,\ldots,\sigma_{0,K}^2 \rangle$ with actions $\bb_i \in \real^{L+K}$ defined as $\bb_i^\top= (b_i^\top,e_i^\top)$, leading to a linear bandit
\begin{talign}\label{eq:first_lb}
    \bmu &\sim \cN(\bnu, \bSigma)\nonumber\\
    Y_t \mid \bmu, A_t &\sim \cN(\bb^\top_{A_t} \bmu, \sigma^2)  & \forall t \in [n]\,.
\end{talign}
The only practical benefit of this point of view is to allow us to compute G-optimal design allocations \eqref{eq:optimal_design} and inherit the theoretical guaranties from the linear case. In general, adhering to the initial formulation in \eqref{eq:model_gaussian} and the subsequent derivations in \eqref{eq:posterior} is more computationally efficient and this motivates the concept of hierarchical bandits in the first place. Additional discussions can be found in \cref{subsec:app_remarks_hierarchical_models}.

\textbf{Allocation by warm-up.} A mixed-effect Thompson Sampling algorithm has been proposed recently by \citet{aouali2022mixed}, so we can directly use this algorithm as warm up with our given prior parameters. 



\subsection{Robustness to Prior Misspecification}\label{robustness_misspecification}
While the assumption of knowing the prior distribution is common in the Bayesian bandit literature \citep{russo18tutorial,kveton21metathompson}, we acknowledge that it may not always be the case in practice. \cref{lemma:misspecified_prior_Gaussian} quantifies the effect of prior misspecification when using the uniform allocation strategy yielding $\omega = (\frac{1}{K})_{i \in [K]}$, and when the prior variances are homogeneous. Its proof is provided in \cref{subsec:app_proof_prior_misspecification}. 
\begin{lemma}[Upper bound for $\AdaBAI$ with misspecified prior parameters]
\label{lemma:misspecified_prior_Gaussian}
The expected PoE of \emph{$\AdaBAI(\rm{Uni})$} under the MAB problem \eqref{eq:bayes_elim_model_gaussian} with misspecified priors $\cN(\tmu_{0,i},\tsigma^2_{0})$\footnote{Formally, this means that the mean posterior reward for arm $i$ is calculated using the misspecified prior $\cN(\tmu_{0,i},\tsigma^2_{0})$ instead of the true prior $\cN(\mu_{0,i},\sigma^2_{0})$.} is upper bounded as
    \begin{talign*}
        \mathcal{P}_n \leq \doublesum \frac{1}{\sqrt{1 + \frac{n\sigma^2_0}{K\sigma^2}}}\exp\left(-\frac{(\mu_{0,i} - \mu_{0,j})^2}{4\sigma^2_0}\right) d_n(i,j)\,, 
    \end{talign*}
 where
 \vspace{-0.5cm}
\begin{talign*}
d_n(i,j)=\exp\bigg(\frac{\mathrm{KL}(P_{ij}, \Tilde{P}_{ij})}{1+\frac{n\sigma^2_0}{K\sigma^2}}\bigg)\,,
\end{talign*}
with $P_{ij}=\cN(\mu_{0,i}-\mu_{0,j},2\sigma^2_0)$ and $\Tilde{P}_{ij}=\cN(\frac{\tsigma^2_0}{\sigma^2_0}(\tmu_{0,i}-\tmu_{0,j}), 2\sigma^2_0)$ are respectively the true and the misspecified prior gap distributions. Note that 
$\mathrm{KL}(P_{ij}, \Tilde{P}_{ij}) = \frac{\sigma^2_0}{\tsigma^4_0}\big(\frac{\tsigma^2_0}{\sigma^2_0}(\mu_{0,i}-\mu_{0,j})-(\tmu_{0,i}-\tmu_{0,j})\big)^2$.
\end{lemma}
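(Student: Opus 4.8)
The plan is to run the same pairwise argument as in the proof of \cref{th:upper bound}, but to isolate the single place where the wrong prior enters—the decision statistic—and to show that it only perturbs the threshold of the Bayesian comparison, at a controlled multiplicative cost. First I would apply the union bound over ordered pairs exactly as for \cref{th:upper bound}, reducing $\mathcal{P}_n$ to $\sum_{i\neq j}\mathbb{P}(\theta_i\ge\theta_j,\ \hat\mu_{n,j}\ge\hat\mu_{n,i})$, where $\hat\mu_{n,\cdot}$ is now the \emph{misspecified} posterior mean computed with $\cN(\tmu_{0,i},\tsigma^2_0)$ rather than the true prior.

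The key structural observation, which uses both the uniform allocation and the homogeneous variances, is that every arm then shares the same posterior-variance scaling $\tsigma^2_n$, so the misspecified comparison $\hat\mu_{n,i}\gtrless\hat\mu_{n,j}$ is an increasing affine transform of the comparison of the statistics $B_{n,i}+\tfrac{\sigma^2}{\tsigma^2_0}\tmu_{0,i}$. Consequently, writing $\hat D_{ij}$ for the \emph{correctly specified} posterior-mean gap $\hat\mu^{\mathrm{true}}_{n,i}-\hat\mu^{\mathrm{true}}_{n,j}$, the misspecified error event is exactly $\{\theta_i-\theta_j>0,\ \hat D_{ij}\le\tau_{ij}\}$ with a deterministic threshold shift $\tau_{ij}=\hat\sigma^2_{n}\big(\tfrac{\mu_{0,i}-\mu_{0,j}}{\sigma^2_0}-\tfrac{\tmu_{0,i}-\tmu_{0,j}}{\tsigma^2_0}\big)$. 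Misspecification thus enters only through this bias $\tau_{ij}$, and setting $\tmu=\mu,\ \tsigma_0=\sigma_0$ gives $\tau_{ij}=0$ and recovers the well-specified analysis.

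I would then reuse the Bayesian decomposition at the heart of \cref{th:upper bound}: writing $\theta_i-\theta_j=\hat D_{ij}+R_{ij}$ with $R_{ij}$ a centered Gaussian of variance $\sigma_R^2=2\hat\sigma^2_n$ independent of $\hat D_{ij}$ (this is exactly where the true posterior mean being the conditional expectation is essential), the pairwise probability becomes $\int_{-\infty}^{\tau_{ij}}\bar\Phi(-x/\sigma_R)\,f_{\hat D_{ij}}(x)\,dx$. Completing the square on the product of the sub-Gaussian tail and the Gaussian density reproduces the well-specified prefactor $1/\sqrt{1+n\sigma^2_0/(K\sigma^2)}$ (which depends only on the \emph{true} variance-reduction and is therefore unchanged) together with the well-specified exponent $\exp(-(\mu_{0,i}-\mu_{0,j})^2/4\sigma^2_0)$, while the shifted upper limit $\tau_{ij}$ produces the extra factor. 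I would collect that factor, identify it as $d_n(i,j)=\exp\big(\mathrm{KL}(P_{ij},\tilde P_{ij})/(1+n\sigma^2_0/(K\sigma^2))\big)$, and verify the stated closed form against the Gaussian KL formula; I would also note that $1+n\sigma^2_0/(K\sigma^2)=\sigma^2_0/\hat\sigma^2_n$ is precisely the factor by which $n$ observations dilute the prior, so the penalty decays to $1$ as $n\to\infty$, consistent with the data eventually overriding a wrong prior.

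The main obstacle is this last step, and specifically the regime $\tau_{ij}>0$ in which the bias pushes the comparison in the error-increasing direction: there the tail bound $\bar\Phi(-x/\sigma_R)\le\tfrac12 e^{-x^2/2\sigma^2_R}$ underlying the well-specified computation is no longer valid over the whole range $(-\infty,\tau_{ij}]$, so one cannot simply drop $\tau_{ij}$ and must account for the extra mass on $(0,\tau_{ij}]$. Handling this region so that the surplus factorizes as a clean multiplicative $d_n(i,j)$ rather than an additive remainder, and so that the KL appears with exactly the $1/(1+n\sigma^2_0/(K\sigma^2))$ weighting rather than, say, a raw $\tau^2_{ij}$ term, is the delicate part—and it is precisely where the homogeneity and uniform-allocation assumptions are used, since they keep $\tau_{ij}$ and $\sigma_R$ common across the pair and make the completed square separate cleanly.
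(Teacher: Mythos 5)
Your proposal takes a genuinely different route from the paper. The paper's proof of \cref{lemma:misspecified_prior_Gaussian} does \emph{not} reuse the Bayesian swap-of-measures machinery of \cref{th:upper bound}: it conditions on a fixed $\theta$, applies a union bound $\mathbb{P}(J_n \neq i_*(\theta)\mid\theta) \leq \sum_j \mathbb{P}(\tmu_{n,j} > \tmu_{n,i_*(\theta)}\mid\theta)$, rewrites the misspecified comparison as a condition on $B_{n,i}-B_{n,i_*(\theta)}$, and invokes a frequentist tail bound (Lemma 3 of \citealp{atsidakou2022bayesian}) that yields an exponential containing both a quadratic bias term and a term \emph{linear} in $\theta_i - \theta_{i_*(\theta)}$; the KL correction then emerges when this linear term is integrated against the Gaussian prior via \cref{lemma:integration_gaussian}. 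Your route instead stays fully Bayesian: you correctly reduce the misspecified decision to a deterministic threshold shift $\hat D_{ij}\le\tau_{ij}$ on the \emph{well-specified} posterior-mean gap (this reduction is sound and is arguably a cleaner way to see where misspecification enters), and you correctly identify that the prefactor and the main exponential are unchanged because they depend only on the true posterior geometry.

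However, as written your argument has a genuine gap, and you name it yourself: when $\tau_{ij}>0$ the pointwise bound $\bar\Phi(-x/\sigma_R)\le e^{-x^2/(2\sigma_R^2)}$ fails on $(0,\tau_{ij}]$, and you do not actually show how the excess mass there becomes the clean multiplicative factor $d_n(i,j)$. Announcing that this is ``the delicate part'' is not a proof of it — and it is precisely the content of the lemma, since $d_n(i,j)$ \emph{is} the misspecification penalty. The step can be closed (for instance, $\bar\Phi(-x/\sigma_R)\le \exp\bigl((\tau_{ij}^2-x^2)/(2\sigma_R^2)\bigr)$ holds on all of $(-\infty,\tau_{ij}]$, and with $\sigma_R^2=2\hat\sigma_n^2$ and $\hat\sigma_n^2=\sigma_0^2/(1+\tfrac{n\sigma_0^2}{K\sigma^2})$ the surplus $e^{\tau_{ij}^2/(2\sigma_R^2)}$ has exactly the $1/(1+\tfrac{n\sigma_0^2}{K\sigma^2})$ weighting of the stated $d_n(i,j)$), but you must carry out this step rather than defer it; the paper avoids the sign issue entirely by keeping the $\theta$-linear term inside the exponent and only completing the square at the prior-integration stage.
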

In case of well specified variances, the misspecification factor $d_n(i,j)$  depends only on the true and assumed prior gaps $(\mu_{0,i} - \mu_{0,j})$ and $(\tmu_{0,i} - \tmu_{0,j})$. If the assumed prior preserves the prior gaps, then  $d_n(i,j)=1$, and \cref{th:upper bound} is recovered. However, if the prior gaps are misspecified, then $(d_n(i,j)>1)$, which increases the bound obtained in \cref{th:upper bound}. This multiplicative factor does not affect the overall convergence rate of $\mathcal{O}(1/\sqrt{n})$, and $d_n=e^{\mathcal{O}(1/n)}$, indicating that the impact of prior misspecification becomes less significant as the budget increases. This is also observed empirically (see \cref{subsec:app_prior_misspecification}). That said, we give two solutions to circumvent the assumption of prior knowledge.

\textbf{Offline learning of priors.} Fixed-budget BAI can be significantly enhanced by incorporating readily available offline data as follows. We first estimate arm means using offline data (potentially noisy). 
Then, we model the uncertainty in these estimates with Gaussian covariance, creating a Gaussian prior for multi-armed and linear bandits. For hierarchical bandits, we fit a Gaussian Mixture Model (GMM) to the offline estimates, resulting in the desired hierarchical prior. For instance, in our MovieLens experiments (\cref{sec:experiments}), we derived the offline estimates through low-rank matrix factorization of the user-item ratings. 

\textbf{Online learning of priors using hierarchical models.} In the absence of offline data, one approach to address prior misspecification online is to use hierarchical models. For example, consider the MAB problem \eqref{eq:bayes_elim_model_gaussian}, where the prior means \(\mu_{0}\) are now unknown. Then we adopt the hierarchical model \eqref{eq:model_gaussian}, assuming that \(\mu_{0}\) is sampled from a known hyper-prior as \(\mu_0 \sim \cN(\nu, \Sigma)\), with $L=K$ and canonical mixing weights \(b_i = e_i \in \real^K\). 
This approach to learning the prior online has good empirical performance (\cref{sec:experiments}) and it can be extended to linear and hierarchical bandits.


\section{GENERAL PROOF SCHEME}
\label{sec:analysis}
We outline the key technical insights to derive our Bayesian proofs. The idea is general and can be applied to all our settings (MAB, linear and hierarchical bandits). Specific proofs for these settings are provided in \cref{sec:app_missing_proofs}.

\textbf{From Frequentist to Bayesian proof.} To analyze their algorithm in the MAB setting, \citet{atsidakou2022bayesian} rely on the strong restriction that $\sigma_{0,i}=\sigma_0$ for all arms $i\in [K]$ and tune their allocations as a function of the noise variance $\sigma^2$ such that
in the Gaussian setting, the posterior variances $\hat{\sigma}_{n, i}^2$ in \eqref{eq:bayes_elim_posterior} are equal for all arms $i \in [K]$. This assumption is needed to directly leverage results from \citet{karnin2013almost}, thus allowing them to bound the frequentist PoE as $\mathbb{P}\big(J_n \neq i_*(\theta) \mid \theta \big) \leq B(\theta)$ for a fixed instance $\theta$. 
Then, the expected PoE, $ \mathcal{P}_n = \E{}{\mathbb{P}\big(J_n \neq i_*(\theta) \mid \theta \big)}$, is bounded by directly computing $\E{\theta\sim P_0}{B(\theta)}$. We believe it is not possible to extend such technique for general choices of allocations $n_i$ and prior variances $\sigma^2_{0, i}$. Thus, we pursue an alternative approach, establishing the result in a fully Bayesian fashion. We start with a key observation.

\textbf{Key reformulation of the expected PoE.} We observe that the expected PoE can be reformulated as follows
\begin{align*}
\mathcal{P}_n &=  \E{}{\mathbb{P}\big(J_n \neq i_*(\theta) \mid \theta \big)}
=  \E{}{\mathbb{P}\big(J_n \neq i_*(\theta) \mid H_n \big)}.
\end{align*}
This swap of measures means that to bound $\mathcal{P}_n$, we no longer bound the probability of
$$J_n = \argmax_{i \in [K]}\E{}{r(i; \theta) \,|\, H_n} \neq   \argmax_{i \in [K]} r(i; \theta) = i_*(\theta)$$
\emph{for any fixed $\theta$}. Instead, we only need to bound that probability \emph{when $\theta$ is drawn from the posterior distribution}. Precisely, we bound the probability that the arm $i$ maximizing the posterior mean $\E{}{r(i; \theta) \mid H_n}$ differs from the arm $i$ maximizing the posterior sample $r(i;\theta) \mid H_n$. This is achieved by first noticing that $\mathcal{P}_n$ can be rewritten as (see proof in \cref{subsec:app_proof_MAB})
\begin{align*}
\mathcal{P}_n = \sum_{\substack{i,j \in [K] \\ i \neq j}} \E{}{\mathbb{P}\left(i_*(\theta) = i \mid  J_n = j, H_n\right)\I{J_n = j}}.
\end{align*}
The rest of the proof consists in bounding the above conditional probabilities for distinct $i$ and $j$, and this depends on the setting (MAB, linear or hierarchical). Roughly speaking, this is achieved as follows. $\mathbb{P}\left(i_*(\theta) = i \mid H_n, J_n = j\right)$ is the probability that arm $i$ maximizes the posterior sample $r(\cdot; \theta)$, given that arm $j$ maximizes the posterior mean $\E{}{r(\cdot; \theta) \mid H_n}$. We show that this probability decays exponentially with the squared difference $(\E{}{r(i; \theta)\,|\,H_n} - \E{}{r(j; \theta)\,|\,H_n})^2$. Taking the expectation of this term under the history $H_n$ gives the desired $\mathcal{O}(1/\sqrt{n})$ rate. This proof introduces a novel perspective for Bayesian BAI, distinguished by its tighter prior-dependent bounds on the expected PoE and ease of extension to more complex settings like linear and hierarchical bandits. However, its application to adaptive algorithms could be challenging, particularly due to the complexity of taking expectations under the history $H_n$ in that case. Also, extending this proof to non-Gaussian distributions is an interesting direction for future work, since in general the posterior variances depend on sample observations.

\textbf{Connections with Bayesian Decision Theory. } Note that the `key reformulation' above is inspired from a classical result in Bayesian Decision Theory \citep{robert2007bayesian} that we exploit in a different manner to analyse the Bayes risk of our (non-optimal but computable) decision rule.
\section{EXPERIMENTS}\label{sec:experiments}

\begin{figure*}
\centering
\begin{subfigure}[b]{\textwidth}
   \includegraphics[width=1\linewidth]{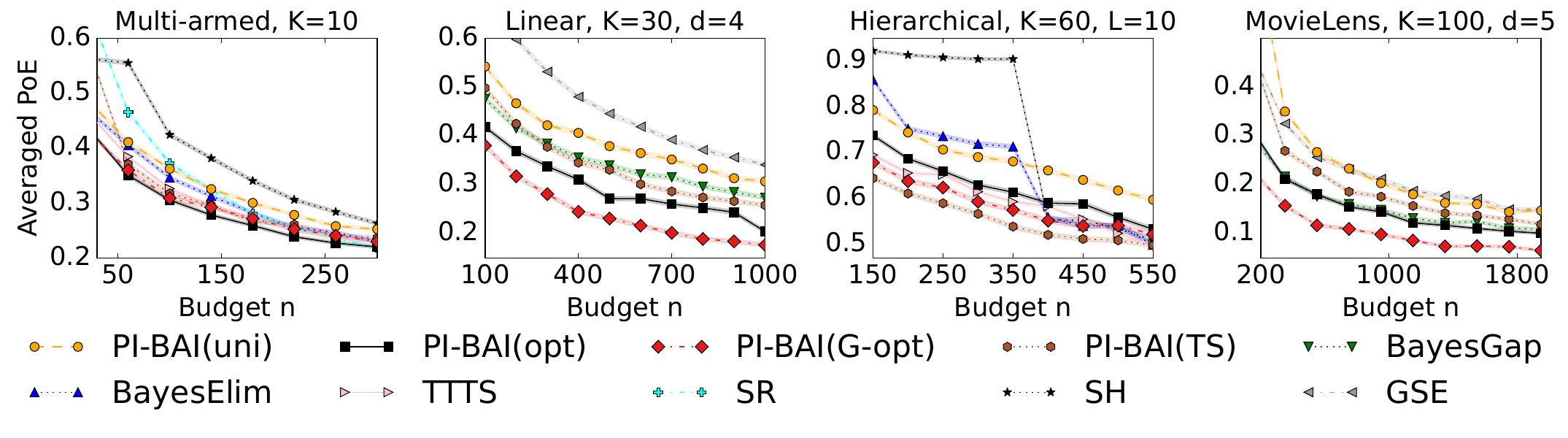}
   \caption{Comparison of \AdaBAI instantiated with different allocation weights to the baselines \BE, \texttt{TTTS}, \texttt{SR}, \texttt{SH} in multi-armed and hierarchical bandits, and to \texttt{BayesGap} and \texttt{GSE} in linear bandits and MovieLens.
   }
   \label{fig:main_standard_experiments} 
\end{subfigure}
\begin{subfigure}[b]{\textwidth}
   \includegraphics[width=1\linewidth]{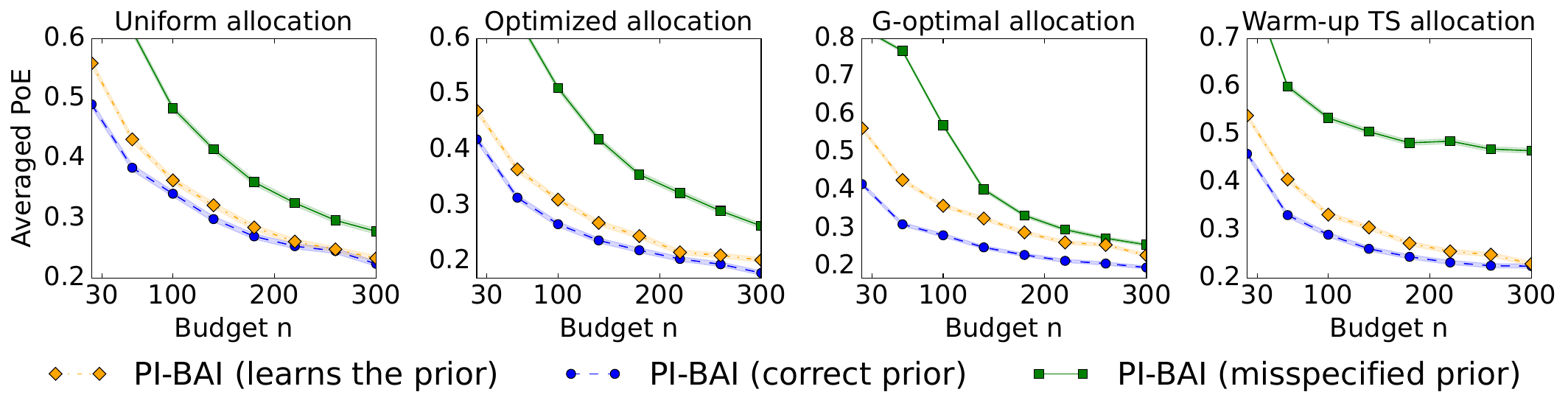}
   \caption{
   For a given allocation strategy, we compare $\AdaBAI$ with learned vs. correct vs. misspecified prior.
   }
   \label{fig:hierarchical_misspecification}
\end{subfigure}
\caption{Average PoE with varying $n$: comparison to baselines and impact of prior misspecification.}
\vspace{-0.3cm}
\end{figure*}
In all experiments, we set the observation noise to $\sigma=1$ and run algorithms $10^4$ times. We display the expected PoE along with its (narrow) standard error. Additional experiments and details are provided in \cref{sec:app_experiments}.
We consider four allocations for $\AdaBAI$ (discussed in \cref{subsec:allocations}): $\AdaBAI(\text{Uni})$ (uniform weights), $\AdaBAI(\text{Opt})$ (optimized weights), $\AdaBAI(\text{G-opt})$ (G-optimal design) and $\AdaBAI(\text{TS})$ (warmed-up with Thompson Sampling) with warm-up length $n_{\rm{w}}=K$. The tuning of $n_{\rm{w}}$ and the choice of the warm-up policy are discussed in \cref{subsec:app_experiments_hyperparameters}. The code is available in this \href{https://github.com/nguyenicolas/Prior_Dependent_Allocations_BAI.git}{Github repository}.


\textbf{MAB.} We set $K=10$, $\mu_{0, i} \sim \mathcal{U}([0, 1])$ and $\sigma_{0,i}$ are evenly spaced between $0.1$ and $0.5$. We compare \AdaBAI to Bayesian methods: top-two Thompson sampling (\texttt{TTTS}\footnote{\texttt{TTTS} does not come with theoretical guarantees in the fixed-budget setting.}) \citep{jourdan2022top,russo2016simple} and \BE \citep{atsidakou2022bayesian}, and to frequentist ones: Successive Rejects (\SR) \citep{audibert2010best} and Sequential Halving (\SH) \citep{karnin2013almost}.

\textbf{Linear bandit.} We set $d=4$, $K=30$, $\mu_{0,i} \sim \mathcal{U}([0, 1])$ and $\Sigma_0 = {\rm{diag}}(\sigma_{0, i}^2)_{i \in [K]}$, with $\sigma_{0,i}$ are evenly spaced between $0.1$ and $0.5$. We compare our methods with two algorithms designed for linear bandits: \BayesGap \citep{hoffman2014correlation} and \GSE \citep{azizi2021fixed}. These represent the current (tractable\footnote{\citet{katz2020empirical} has an algorithm with tighter bounds but it is not tractable.}) state-of-the-art approaches, leveraging G-optimal design for successive elimination. We did not include other methods that use the same elimination approach but have lower performance in these settings \citep{alieva2021robust,yang2022minimax}. 

\textbf{Hierarchical bandit.} We set \(K=60\) and \(L=10\), with mixing weights \(b_i\) sampled uniformly in $[0,1]$ and normalized to sum to 1. We set \(\nu_i \sim \mathcal{U}([-1, 1])\), and \(\Sigma\) and \(\Sigma_0\) are diagonal with entries evenly spaced in \([0.1^2, 0.5^2]\). The MAB frequentist baselines do not require priors, while the Bayesian ones need a Gaussian MAB prior \eqref{eq:bayes_elim_model_gaussian} since they are not suitable for a hierarchical prior. To obtain such a Gaussian MAB prior, we marginalize the effect parameters in \eqref{eq:model_gaussian}. Also, although there is a connection to linear bandits, we do not include those baselines as they do not perform well due to their inefficient representation of the structure, resulting in a high-dimensional ($K+L$) linear model.

\textbf{Results on synthetic data (first three columns of \cref{fig:main_standard_experiments}).} Overall, variants of $\AdaBAI$ outperform the baselines. The best variant depends on the specific setting, but $\AdaBAI(\text{G-opt})$ is the best overall, closely followed by $\AdaBAI(\text{Opt})$. In hierarchical experiments, $\AdaBAI(\text{TS})$ exceeds all baselines, highlighting its effectiveness in capturing the underlying structure. We emphasize that all Bayesian baselines in our experiments (\emph{e.g.}, \BE and \texttt{TTTS}) are implemented using the same informative prior. Overall, these results reaffirm that leveraging prior information is a powerful and practical tool to scale BAI to cases with a large number of arms and limited data. Additional experiments in various settings are in \cref{subsec:app_additional_settings}.

\textbf{MovieLens (fourth column of \cref{fig:main_standard_experiments}).} We also consider the real-world MovieLens dataset \citep{movielens}, which contains 1 million ratings from 6,040 users for 3,883 movies, forming a sparse rating matrix \(M\) of size \(6,040 \times 3,883\). To learn a suitable prior for our algorithm, we complete \(M\) using alternating least squares with rank \(d = 5\) (low-rank factorization), resulting in the decomposition \(M = U^\top V\), where rows \(U_i\) and \(V_j\) represent user \(i\) and movie \(j\), respectively. We then construct a linear Gaussian prior (\cref{subsec:linear_bandit}) by setting the mean \(\mu_0 = \frac{1}{3883} \sum_{j \in [3883]} V_j\) and the covariance \(\Sigma_0= \text{diag}(v)\), where \(v \in \mathbb{R}^d\) is the empirical variance of \(V_j\) for \(j \in [3883]\) along each dimension. 
This prior is fixed across all MovieLens experiments and does not depend on the rewards generated by the bandit instance, resulting in a prior that is not data-dependent.
We run $10^4$ simulations with $K=100$ randomly subsampled movies. In this experiment, the bandit instances are not sampled from a Gaussian prior, yet our algorithm employs a learned Gaussian prior, highlighting the robustness of our method to prior misspecification. Despite this mismatch, \AdaBAI with this prior performs very well, as seen in the fourth column of \cref{fig:main_standard_experiments}.


\textbf{Online learning of priors (\cref{fig:hierarchical_misspecification}).}
We consider the same MAB setting as before with \(K=10\), \(\mu_{0,i}\sim\mathcal{U}([0,1])\), and \(\sigma_{0,i}\) evenly spaced in \([0.1, 0.5]\). We compare three variants of $\AdaBAI$: First, $\AdaBAI$ using the correct prior mean \(\mu_{0,i}\). Second, $\AdaBAI$ with a misspecified prior \(\tilde{\mu}_0 = \mu_0 + 0.5\epsilon\) where \(\epsilon\sim\cN(0, 1)\). Third, $\AdaBAI$ that learns the prior online through the hierarchical model with \(b_i = e_i \in \mathbb{R}^K\), \(\nu_i \sim \mathcal{U}([0, 1])\), and \(\Sigma = I_K\), as described in the last paragraph of \cref{robustness_misspecification}. This comparison is conducted when $\AdaBAI$ variants are instantiated with either Uniform, Optimized, G-optimal or TS allocation weights. As expected, prior misspecification affects $\AdaBAI$'s performance, but this effect diminishes as the budget \(n\) grows, validating our theory. Additionally, $\AdaBAI$ in the hierarchical model effectively learns the correct prior \(\mu_0\), with performance converging to that of $\AdaBAI$ with the correct prior \(\mu_0\). Finally, $\AdaBAI$ with theoretically grounded allocations (Uniform, Optimized, G-optimal) is more robust to prior misspecification than $\AdaBAI$ with TS warm-up, likely because the misspecified prior is used twice: by the warm-up policy TS and by $\AdaBAI$.

\section{CONCLUSION}\label{sec:conclusion}
We revisit the Bayesian fixed-budget BAI for PoE minimization and propose a simple yet effective algorithm for multi-armed, linear, and hierarchical bandits. Our novel proof technique provides a generic, prior-dependent upper bound on the expected PoE that is tighter than existing bounds for MAB. It presents the first bound on the expected PoE for linear and hierarchical bandits. Finally, we address prior misspecification through both theoretical and empirical analysis. Our work illuminates the trade-off between adaptivity and generality in BAI algorithm design and opens several avenues for future research; the competitiveness of this non-adaptive algorithm with elimination schemes is of broader interest, especially as it matches known lower bounds. Theoretically, it would be valuable to explore prior-dependent lower bounds beyond the simple \(K=2\) case.

\subsection*{Acknowledgements}
N. Nguyen and C. Vernade are funded by the Deutsche Forschungsgemeinschaft (DFG) under both the project
468806714 of the Emmy Noether Programme and under Germany’s Excellence Strategy – EXC number 2064/1 – Project number 390727645. Both also thank the international Max Planck Research School for Intelligent Systems
(IMPRS-IS).
\bibliography{biblio}
\bibliographystyle{plainnat}

\newpage
\onecolumn
\section*{Checklist}

 \begin{enumerate}
 \item For all models and algorithms presented, check if you include:
 \begin{enumerate}
   \item A clear description of the mathematical setting, assumptions, algorithm, and/or model. Yes
   \item An analysis of the properties and complexity (time, space, sample size) of any algorithm. Yes
   \item (Optional) Anonymized source code, with specification of all dependencies, including external libraries. Yes
 \end{enumerate}

 \item For any theoretical claim, check if you include:
 \begin{enumerate}
   \item Statements of the full set of assumptions of all theoretical results. Yes
   \item Complete proofs of all theoretical results. Yes
   \item Clear explanations of any assumptions. Yes
 \end{enumerate}

 \item For all figures and tables that present empirical results, check if you include:
 \begin{enumerate}
   \item The code, data, and instructions needed to reproduce the main experimental results (either in the supplemental material or as a URL). Yes
   \item All the training details (e.g., data splits, hyperparameters, how they were chosen). Yes
    \item A clear definition of the specific measure or statistics and error bars (e.g., with respect to the random seed after running experiments multiple times). Yes
    \item A description of the computing infrastructure used. (e.g., type of GPUs, internal cluster, or cloud provider). Yes
 \end{enumerate}

 \item If you are using existing assets (e.g., code, data, models) or curating/releasing new assets, check if you include:
 \begin{enumerate}
   \item Citations of the creator If your work uses existing assets. Yes
   \item The license information of the assets, if applicable. Not Applicable
   \item New assets either in the supplemental material or as a URL, if applicable. Not Applicable
   \item Information about consent from data providers/curators. Not Applicable
   \item Discussion of sensible content if applicable, e.g., personally identifiable information or offensive content. Not Applicable
 \end{enumerate}

 \item If you used crowdsourcing or conducted research with human subjects, check if you include:
 \begin{enumerate}
   \item The full text of instructions given to participants and screenshots. Not Applicable
   \item Descriptions of potential participant risks, with links to Institutional Review Board (IRB) approvals if applicable. Not Applicable
   \item The estimated hourly wage paid to participants and the total amount spent on participant compensation. Not Applicable
 \end{enumerate}
 \end{enumerate}
 
\newpage
\appendix
\section*{ORGANIZATION OF THE APPENDIX}
The supplementary material is organized as follows. In \cref{sec:app_related_work}, we mention additional existing works relevant to our work, and discuss in depth the differences between our work and them. In \cref{sec:app_discussions}, we provide additional general additional remarks. In \cref{sec:app_missing_proofs}, we give complete statements and proofs of our theoretical results. In \cref{sec:app_experiments}, we supply additional numerical experiments.

\section{EXTENDED RELATED WORK}\label{sec:app_related_work}
\textbf{Hierarchical Bayesian bandits.} Bayesian bandits algorithms under hierarchical models have been heavily studied in recent years \citep{aouali2024diffusion,aouali2022mixed,basu21noregrets,hong20latent,hong2022deep,hong22hierarchical,kveton21metathompson,nguyen2023lifelong,peleg2022metalearning}. All the aforementioned papers propose methods to explore efficiently in hierarchical bandit environments to minimize the (Bayesian) regret. Roughly speaking, the idea of all hierarchical Bayesian bandits is to take advantage of the correlations between arms (learned trough the posterior covariance) to have more information of the arms that are not pulled at each round. 
Beyond regret minimization, Bayesian structured models also found success in simple regret minimization \citep{azizi2023meta} and off-policy learning in contextual bandits \citep{aouali2024bayesian,hong2023multi}.

\textbf{Bayesian simple regret minimization and differences with \citet{komiyama2023rate}.} Bayesian simple regret (SR) minimization have been studied in \citet{komiyama2023rate} for Bernoulli bandits. However, we would like to emphasize that our work is very different from theirs. First, SR and PoE may yield to different rate in the Bayesian setting \citep{komiyama2023rate}. While this observation is not true in the frequentist setting \citep{audibert2010best}, \citet{komiyama2023rate} proved an asymptotic $\mathcal{O}(1/n)$ lower bound on the Bayesian SR, while \citet{atsidakou2022bayesian} proved a $\mathcal{O}(1/\sqrt{n})$ lower bound on the PoE. This is because the relationship between SR and PoE is not clear as in the frequentist setting. In fact, we have for a fixed environment $\theta\in\real^K$, 
\begin{align*}
    \mathrm{SR}(\theta) \leq \Delta_{\max}(\theta) \mathbb{P}(J_n \neq i_*(\theta)\mid\theta)\,,
\end{align*}
where $\Delta_{\max}(\theta) = \max_{i\in[K]}\theta_{i} - \min_{i\in[K]}\theta_i$. Integrating with respect to the prior distribution, we bound the expected simple regret as
\begin{align*}
    \E{}{\mathrm{SR}(\theta)}\leq \E{}{\Delta_{\max}(\theta) \mathbb{P}(J_n \neq i_*(\theta)\mid\theta)}\,,
\end{align*}
where the right hand side term involves two non-independent quantities. Note that upper bounding $\Delta_{\max}$ by a pure constant (\emph{e.g.} with high probability for unbounded support reward distributions) would lead to a suboptimal rate of $\mathcal{O}(1/\sqrt{n})$ for the simple regret.

The second difference between these two works is that ours consider the finite-time regime, while theirs derive asymptotic upper and lower bounds. We strongly believe that finite-time guarantees are more interesting in our setting because in practical scenarios, having access to an informative prior is beneficial in the data-poor regime in order to identify the best arm. Moreover, despite deriving Bayesian guarantees, their algorithm is frequentist in the sense that it does not use the prior distribution.

Finally, their work focus on Bernoulli distributions, while our work focus on Gaussian distributions. We believe that it is not straightforward to extend their method and analysis to distributions with unbounded support as Gaussians.

\textbf{Differences on the prior assumption with \citet{atsidakou2022bayesian}.} This latter work considers the K-arms Gaussian setting \eqref{eq:bayes_elim_model_gaussian} and restricts the prior to have equal prior variances $\sigma^2_{0,i}=\sigma^2_{0}$ for all $i\in[K]$. Since their algorithm (\BE) is elimination-based and operates in $R$ rounds, at each elimination round $r\in[R]$, they perform uniform allocation, $n_i = \frac{n}{KR}$. Coupling this with the assumption of prior variance yields to equal posterior covariance, $\sigma^2_{n,i} =\sigma^2_n$ for all $i\in[K]$, \emph{i.e.} arms have the same posterior uncertainty at the end of each round. This observation allows them to use frequentist-based analysis as in \citet{karnin2013almost}. Note that similarly as in frequentist analysis of \citet{karnin2013almost}, their analysis imposes to discard all previously collected observations at the end of each round, impacting the empirical performances of their method. We believe that it is not possible to adapt their analysis to heterogeneous prior variances, where each arm would have a different posterior uncertainty at the end of each round.

We now compare the upper bound of \BE in the setting where their results are valid, that is, when the prior variances are homogeneous, $\sigma^2_{0, i} = \sigma^2_0$. Their bound and ours read
\begin{talign*}
\mathcal{P}^{\rm{\BE}}_n \leq \sum_{\substack{i,j \in [K] \\ i \neq j}} \textcolor{blue}{\log_2(K)} \frac{e^{-\frac{(\mu_{0, i}-\mu_{0, j})^2}{4\sigma^2_0}}}{\sqrt{1 + \frac{n\sigma^2_0}{K\textcolor{blue}{\log_2(K)}\sigma^2}}}\,, \, \text{ and } \, \, \mathcal{P}^{\AdaBAI}_n \leq  \sum_{\substack{i,j \in [K] \\ i \neq j}}\frac{ \exp\big(-\frac{(\mu_{0, i}-\mu_{0, j})^2}{4\sigma^2_0}\big)}{\sqrt{1 + \frac{n\sigma^2_0}{K\sigma^2}}}\,.
\end{talign*}
By omitting elimination phases, we gain roughly a factor $\log^{3/2}_2(K)$ over the bound of \citet{atsidakou2022bayesian} (highlighted in blue). This additional factor can be significant when bounding a probability ($\log^{3/2}_2(K)\approx 6$ for $K=10$). This difference makes our bound smaller even in their setting with homogeneous prior variances and choosing uniform allocation weights for \AdaBAI.

\section{ADDITIONAL DISCUSSIONS}\label{sec:app_discussions}

\subsection{Motivating Practical Examples for Hierarchical Bandits}
\label{subsec:app_motivating_examples}
We discuss motivating examples for using hierarchical models in pure exploration settings. 

\textbf{Hyper-parameter tuning.} Here, the goal is to find the best configuration for a neural network using $n$ epochs \citep{li2017hyperband}. A configuration $i$ is represented by a scalar $\theta_i \in \real$ which quantifies the expected performance of a neural network with such configuration. Again, it is intuitive to learn all $\theta_i$ individually. Roughly speaking, this means running each configuration for $\lfloor \frac{n}{K} \rfloor$ epochs and selecting the one with the highest performance. This is statistically inefficient since the number of configurations can be combinatorially large. Fortunately, we can leverage the fact that configurations often share the values of many hyper-parameters. Precisely, a configuration is a combination of multiple hyper-parameters, each set to a specific value. Then we represent each hyper-parameter $\ell \in [L]$ by a scalar parameter $\mu_\ell \in \real$, and the configuration parameter $\theta_i$ is a mixture of its hyper-parameters $\mu_\ell$ weighted by their values. That is, $\theta_i = \sum_{\ell \in [L]} b_{i,\ell} \mu_\ell + \epsilon_i$, where $b_{i, \ell} $ is the value of hyper-parameter $\ell$ in configuration $i$ and $\epsilon_i$ is a random noise to incorporate uncertainty due to model misspecification.  

\textbf{Drug design.} In clinical trials, $K$ drugs are administrated to $n$ subjects, with the goal of finding the optimal drug design. Each drug is parameterized by its expected efficiency $\theta_i \in \mathbb{R}$. As in the previous example, it is intuitive to learn each $\theta_i$ individually by assigning a drug to $\lceil \frac{n}{K}\rceil$ subjects. However, this is inefficient when $K$ is large. We leverage the idea that drugs often share the same components: each drug parameter $\theta_i$ is a combination of component parameters $\mu_l$, each accounting for a specific dosage. More precisely, the parameter of drug $i$ can be modeled as $\theta_i = \sum_{l \in [L]} b_{i, l}\mu_l + \epsilon_i$ where $\epsilon_i$ accounts for uncertainty due to model misspecification. Similarly to the hyper-parameter tuning example, this models correlations between drugs and it can be leveraged for more efficient use of the whole budget of $n$ epochs.

\subsection{Additional Remarks on Hierarchical Models}
\label{subsec:app_remarks_hierarchical_models}
The two-level prior we consider has a shared latent parameter $\mu = (\mu_\ell)_{\ell \in [L]} \in \real^L$ representing $L$ \emph{effects} impacting each of the $K$ arm means:  
\begin{talign*}
    \mu &\sim Q \nonumber\\
    \theta_i &\sim P_{0, i}(\cdot ; \mu)  & \forall i \in [K]\nonumber\\
    Y_t \mid \mu, \theta, A_t &\sim P(\cdot; \theta_{A_t})  & \forall t \in [n]\,,
\end{talign*}
where $Q$ is the latent prior on $\mu \in \real^L$.

In the Gaussian setting \eqref{eq:model_gaussian}, the maximum likelihood estimate of the reward mean of action $i$, $B_{n, i}/n_i$, contributes to \eqref{eq:cluster_posterior} proportionally to its precision $n_i / (n_i \sigma^2_{0, i} + \sigma^2)$ and is weighted by its mixing weight $b_i$. \eqref{eq:conditional_posterior} is a standard Gaussian posterior, and \eqref{eq:posterior} takes into account the information of the conditional posterior. Finally, \eqref{eq:posterior} takes into account the arm correlation through its dependence on $\hat\sigma_{n,i}$ and $\hat\mu_{n,i}$. While the properties  of conjugate priors are useful for inference, other models could be considered with approximate inference techniques \citep{clavier2023vits,phan2019thompson}.

\textbf{Link with linear bandit (cont.).}
The slightly unusual characteristic of \eqref{eq:first_lb} is that the prior distribution has correlated components. This can be addressed by the whitening trick \citep{Bishop2006}, defining
$\tmu = \bSigma^{-1/2} \bmu$ and $\tb_i = \bSigma^{1/2} \bb_i$, giving
\begin{talign}\label{eq:second_lb}
    \tmu &\sim \cN(\bSigma^{-1/2} \bnu, I_{L+K})\nonumber\\
    Y_t \mid \tmu, A_t &\sim \cN(\tb^\top_{A_t} \tmu, \sigma^2)  & \forall t \in [n]\,,
\end{talign}
where $I_{L+K}$ is the $(L+K)$-dimensional identity matrix. Then, \eqref{eq:second_lb} corresponds to a linear bandit model with $K$ arms and $d=K+L$ features. However, this model comes with some limitations. First, when computing posteriors under \eqref{eq:second_lb}, the time and space complexities are $\mathcal{O}((K+L)^3)$ and $\mathcal{O}((K+L)^2)$ respectively, compared to the $\mathcal{O}(K+L^3)$ and $\mathcal{O}(K+L^2)$ for our model \eqref{eq:posterior}.  
The feature dimension $d=K+L$ can be reduced to $d=K$ through the following QR decomposition: $\Tilde{B} = \big(\Tilde{b}_1,\dots,\Tilde{b}_K\big)\in \mathbb{R}^{(K+L) \times K}$ can be expressed as $\Tilde{B} = V R$, where $V\in\mathbb{R}^{(K+L) \times K}$ is an orthogonal matrix and $R\in \mathbb{R}^{K \times K}$. This leads to the following model $\check{\mu} \sim \cN(V^\top\bSigma^{-1/2} \bnu, V^\top V)$ and $Y_t \mid \check{\mu}, A_t \sim \cN(R^\top_{A_t}\check{\mu}, \sigma^2)$, yet the feature dimension $d$ remains at the order of $K$, and computational efficiency is not improved with respect to $K$. 

\textbf{From hierarchical bandit to MAB.}
Marginilizing the hyper-prior in \eqref{eq:model_gaussian} leads to a MAB model,
\begin{talign}
\label{eq:from_hier_to_mab}
    \theta_i &\sim \cN(b_i^\top \nu, \sigma^2_{0, i}+ b_i^\top \Sigma b_i) &\forall i \in [K] \nonumber\\
    Y_t \mid \mu, \theta, A_t &\sim \cN(\theta_{A_t}, \sigma^2) &\forall t \in [n]\,.
\end{talign}
In this marginalized model, the agent does not know $\mu$ and he doesn't want to model it. Therefore, only $\theta$ is learned. The marginalized prior variance $\sigma^2_{0, i} + b_i^\top \Sigma b_i$ accounts for the uncertainty of the not-modeled effects. From \eqref{eq:bayes_elim_posterior}, the corresponding posterior covariance of an arm $i \in [K]$ is
\begin{align*}
    \hat\sigma^{-2}_{n, i} = \frac{1}{\sigma^2_{0, i} + b_i^T \Sigma b_i} + \frac{\omega_i n}{\sigma^2}\,.
\end{align*}

We illustrate the benefit of using hierarchical models over MAB models with a toy experiment. In the first setting, we uniformly draw a vector $u \in [0, 1]$ and set $\sigma_0 = 0.1u$ and $\Sigma = 2I_L$.
In the second setting, we set $\sigma_0 = u$ and $\Sigma = 10^{-3}I_L$. In both settings, we consider $K=50$ arms, and $L=10$ effects. Each $\nu_i$ and $b_i$ are  sampled from $[-1, 1]$, and the allocation vector is set to uniform allocation, $\omega^{\rm{uni}}_i = \frac{1}{K}$ for any $i \in [K]$. \cref{fig:app_benefits_hier} shows the average posterior covariance $\frac{1}{K}\sum_{i \in [K]}\sigma^2_{n, i}$ across all arms for both the (marginalized) standard MAB model \eqref{eq:from_hier_to_mab} in blue and the hierarchical model \eqref{eq:model_gaussian} in red. The results show that the benefits of using hierarchical models are more pronounced when the initial uncertainty of the effects $\Sigma$ is greater than the initial uncertainty of the mean rewards $(\sigma^2_{0, i})_{i \in [K]}$.

\begin{figure}[H]
\centering
\includegraphics[width=10cm]{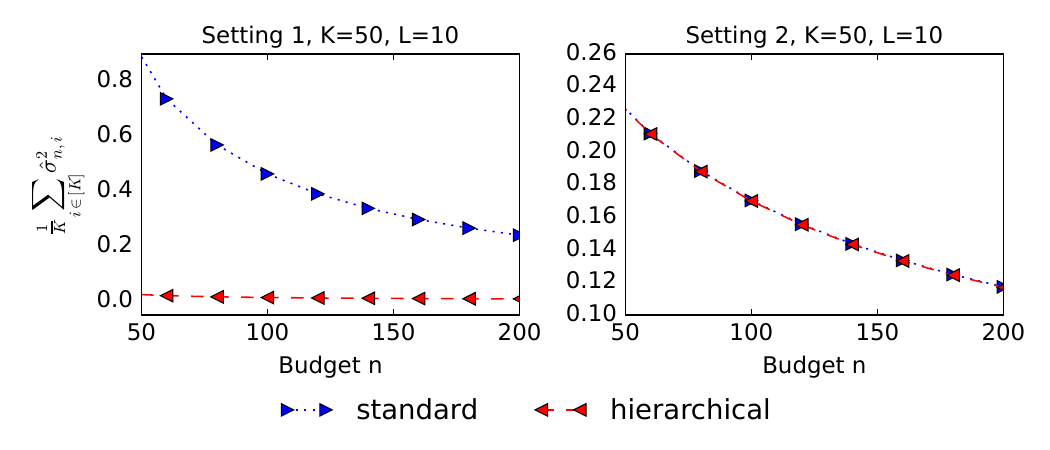}
\caption{Average posterior covariance across all arms for standard MAB and hierarchical model for two settings.} 
\label{fig:app_benefits_hier}
\end{figure}

\subsection{Beyond Gaussian Distributions}
\label{subsec:app_remarks_linear_models}

\textbf{Beyond linear models.} 
The standard linear model \eqref{eq:linear_gaussian} can be generalized beyond linear mean rewards. The \emph{Generalized Linear Bandit} (GLB) model with prior $P_0$ writes \citep{filippi10parametric,kveton2020randomized}
\begin{align}\label{eq:GLB_model}
   &\theta \sim P_0  \\
    &Y_t\mid\theta, A_t \sim P(.;\theta, A_t) \quad \forall t \in [n]\,,\nonumber
\end{align}
where the reward distribution $P(.;\theta, A_t)$ belongs to some exponential family with mean reward $r(A_t;\theta) = \phi(\theta^\top x_{A_t})$. $\phi$ is called the \emph{link function}. The \emph{log-likelihood} of such reward distribution can be written as
\begin{align*}
    \mathcal{L}_n(\theta) = \sum_{t=1}^n \log \mathrm{P}(Y_t;\theta, A_i) = \sum_{t=1}^n Y_t \theta^\top x_{A_t} - A(\theta^\top x_{A_t}) + h(Y_{t})\,,
\end{align*}
where $A$ is a \emph{log-partition} function and $h$ another function. Importantly, \eqref{eq:GLB_model} encompasses the \emph{logistic bandit} model with the particular link function $\phi(z) = \frac{1}{1 + e^{-z}}\,$. 

The main challenge of \eqref{eq:GLB_model} is that closed-form posterior generally does not exist. One method is to approximate the posterior distribution of $\theta$ given $H_n$ with Laplace approximation, that is, $\mathbb{P}(\theta \mid H_n)$ is approximated with a multivariate Gaussian distribution with mean $\thetaMAP$ and covariance  $\hat\Sigma_{\mathrm{Lap}}$,
\begin{align*}
    &\thetaMAP = \argmax_{\theta}\mathcal{L}_n(\theta)P_0(\theta)\,, &\hat\Sigma^{-1}_{\mathrm{Lap}} = \sum_{t=1}^n \dot{\phi}(\thetaMAP^\top x_{A_t}) x_{A_t}x_{A_t}^\top\,,
\end{align*}
where $\phi$ is assumed continuously differentiable and increasing. Note that $\thetaMAP$ can be computed efficiently by iteratively reweighted least squares \citep{wolke88iteratively}. 

\textbf{Logistic Bandit.}
In the particular case where the reward distribution is Bernoulli, the model writes
\begin{talign}\label{eq:glb_gaussian}
   \theta &\sim \cN(\mu_{0}, \Sigma_0)  &\nonumber\\
    Y_t \mid \theta, A_t &\sim \mathcal{B}(\phi(\theta^\top x_{A_t}))  & \forall t \in [n]\,,
\end{talign}
where $\phi$ is the logistic function. Then, using a result from \citet{spiegelhalter1990sequential}, the mean posterior reward can be approximated as 
\begin{align*}
    \E{\theta \sim \cN(\thetaMAP, \hat\Sigma_{\mathrm{Lap}})}{\phi(\theta^\top x_i)} \approx \frac{\phi(\thetaMAP^\top x_i)}{\sqrt{1+ \frac{\pi}{8} \normw{x_i}{\hat\Sigma_{\mathrm{Lap} }} }}\,,
\end{align*} 
and we set the decision after $n$ rounds as $J_n = \argmax_{i \in [K]} \frac{\phi(\thetaMAP^\top x_i)}{\sqrt{1+ \frac{\pi}{8} \normw{x_i}{\hat\Sigma_{\mathrm{Lap} }} }}\,$.

Proving an upper bound on the expected PoE of this algorithm is challenging. Particularly, upper bounding the expectation with respect to $H_n$ is hard because one needs to show that $\thetaMAP$ concentrates in norm towards its expectation $\E{H_n}{\thetaMAP}$. We leave this study for future work. However, we provide numerical experiments for this setting in \cref{subsec:experiments_GLB}.

\subsection{Differences Between Optimized and Warmed-up Weights.}
\label{subsec:app_differences_opt_and_TS}
To illustrate the differences between optimized weights ($\AdaBAI(\wOpt)$) and learned weights with Thompson sampling as a warm-up policy ($\AdaBAI(\wTS)$), we return to our motivating example in \cref{subsec:bound_MAB}, where $K=3$, $\mu_{0}=(1, 1.9, 2)$ and $\sigma_{0, i}=0.3$ for all $i \in [3]$. We set the budget as $n=100$. We repeat $10^4$ times the following experiment: we sample a bandit instance from the prior and run Thompson sampling for $n_{\rm{w}}=20$ rounds, then construct the allocation weights $\wTS$. Computing the weights $\wOpt$ by numerical optimization of \eqref{eq:optimized_weights} is done once at the beginning of these experiments.

\cref{fig:toy_example_allocations} shows an empirical comparison of the weights on 2 problem instances and on average over $10^4$ runs. We see that, in this example, both allocation strategies assign high weights to arms 2 and 3, while allocating a small weight to arm 1. This is because, based on the prior information, arm 1 is highly unlikely to be the optimal arm. Then, the primary objective revolves around selecting the optimal arm among arms 2 and 3. Also, while the allocation weights $\omega^{\textsc{ts}}_i$ vary with each bandit instance, their average values in all instances are similar to those of $\wOpt_i$. Thus $\AdaBAI(\wTS)$ is more adaptive than $\AdaBAI(\wOpt)$, while both have similar average behavior. 
\begin{figure}
  \centering
  \includegraphics[width=10cm]{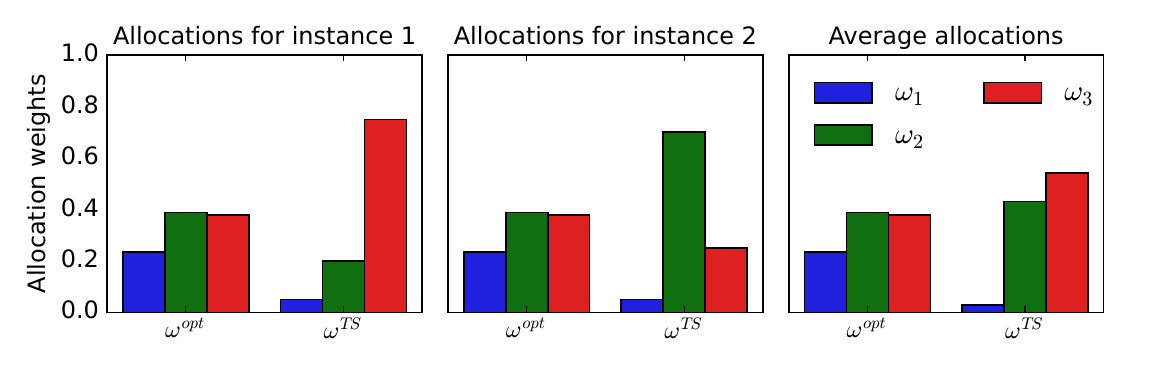}
  \caption{Allocations weights $\wOpt_i$ and $\wTS_i$.} 
  \label{fig:toy_example_allocations}
\end{figure}

\section{MISSING PROOFS}\label{sec:app_missing_proofs}
In this section, we give complete proof of our theoretical results. In \cref{subsec:app_proof_technical}, we give proofs for the Bayesian posterior derivations and we provide technical results. Then we provide the complete proofs of our results for MAB (\cref{subsec:app_proof_MAB}), linear bandits (\cref{subsec:app_proof_linear_bandit}) and hierarchical bandits (\cref{subsec:app_proof_hierarchical}).

\subsection{Technical Proofs and Posteriors Derivations}
\label{subsec:app_proof_technical}

\textbf{Bayesian computations for hierarchical model.}
We detail the computations of posterior distribution computations of the hierarchical Gaussian model,
\begin{align*}
    \mu &\sim \cN(\nu, \Sigma)\nonumber\\
    \theta_i &\sim \cN(b_i^\top \mu, \sigma_{0, i}^2)  & \forall i \in [K]\nonumber\\
    Y_t \mid \mu, \theta, A_t &\sim \cN(\theta_{A_t}, \sigma^2)  & \forall t \in [n]\,,
\end{align*} 
where we recall that $B_{n, i} = \sum_{t\in \mathcal{T}_i}Y_t$ and $\mathcal{T}_i = \{t\in [n],\, A_t=i\}$.

\begin{lemma}[Gaussian posterior update]
\label{lemma:gaussian_posterior_update}
For any $\rho \in \mathbb{R}, \mu \in \mathbb{R}^L, b \in \mathbb{R}^L$ and $\sigma, \sigma_{0} > 0, m \in \mathbb{N}$, we have
\begin{align*}
    \int_{\rho} \prod_{t \in [m]} \cN\left( Y_t ; \rho, \sigma^2 \right)\cN\left(\rho ; b^\top \mu, \sigma^2_{0}\right)\dint\rho \propto \cN\left(\mu ; \mu_m, \Sigma_m\right)\,,
\end{align*}
with
\begin{align*}
    \Sigma_m = \frac{m}{m\sigma^2_{0}+\sigma^2}b b^\top ; \quad \mu_m = \Sigma_m^{-1} \frac{\sum_{t \in [m]}Y_t}{m\sigma^2_0 + \sigma^2} b\,.
\end{align*}
\end{lemma}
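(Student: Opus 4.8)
The plan is to integrate out the scalar $\rho$ by first collapsing the $m$ i.i.d.\ Gaussian likelihood factors into a single Gaussian in $\rho$, then applying the standard Gaussian marginalization identity, and finally reading off the canonical (information-form) parameters of the resulting factor in $\mu$. First I would complete the square in $\rho$ inside $\prod_{t\in[m]}\cN(Y_t;\rho,\sigma^2)$. Writing $\bar Y=\tfrac1m\sum_{t\in[m]}Y_t$, one has $\sum_t(Y_t-\rho)^2=m(\rho-\bar Y)^2+c$, where $c=\sum_t Y_t^2-m\bar Y^2$ depends neither on $\rho$ nor on $\mu$; hence, up to a multiplicative factor independent of $\rho$ and $\mu$,
\[
\prod_{t\in[m]}\cN(Y_t;\rho,\sigma^2)\ \propto\ \cN(\rho;\bar Y,\sigma^2/m).
\]

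Next I would apply the elementary identity $\int_{\mathbb R}\cN(\rho;a,s_1^2)\,\cN(\rho;c,s_2^2)\,d\rho=\cN(a;c,s_1^2+s_2^2)$, which follows by factoring the product of the two Gaussians in $\rho$ as a Gaussian in $\rho$ (integrating to one) times a Gaussian in the difference of their means. Taking $a=\bar Y$, $c=b^\top\mu$, $s_1^2=\sigma^2/m$, and $s_2^2=\sigma_0^2$ yields, again up to a $\mu$-independent constant,
\[
\int_\rho\prod_{t\in[m]}\cN(Y_t;\rho,\sigma^2)\,\cN(\rho;b^\top\mu,\sigma_0^2)\,d\rho\ \propto\ \cN\big(\bar Y;\,b^\top\mu,\,\sigma^2/m+\sigma_0^2\big).
\]

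Finally, I would view the right-hand side as a function of $\mu$ and expand the exponent, collecting the quadratic and linear terms. Using $\sigma^2/m+\sigma_0^2=(\sigma^2+m\sigma_0^2)/m$, the exponent equals $-\tfrac12\,\mu^\top\big[\tfrac{m}{m\sigma_0^2+\sigma^2}bb^\top\big]\mu+\tfrac{\sum_{t}Y_t}{m\sigma_0^2+\sigma^2}\,b^\top\mu$ plus a constant. Reading this in information form identifies the precision $\Sigma_m=\tfrac{m}{m\sigma_0^2+\sigma^2}bb^\top$ and the natural parameter $\eta=\tfrac{\sum_{t}Y_t}{m\sigma_0^2+\sigma^2}\,b$, whence $\mu_m=\Sigma_m^{-1}\eta$, which are exactly the claimed parameters.

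The main subtlety to flag is degeneracy: $\Sigma_m$ has rank one (its range is $\mathrm{span}(b)$), so $\cN(\mu;\mu_m,\Sigma_m)$ is an improper Gaussian and $\Sigma_m$ must be read as a \emph{precision} matrix in canonical form rather than as a covariance, making $\mu_m=\Sigma_m^{-1}\eta$ a purely formal expression. This is harmless here, since a single observation block constrains $\mu$ only through $b^\top\mu$, and these rank-one factors are later summed over arms in the derivation of the effect posterior \eqref{eq:cluster_posterior}, producing a full-rank precision. I would therefore state the conclusion as proportionality in $\mu$ and make the canonical-form reading of $\Sigma_m$ explicit, rather than treating it as an invertible covariance.
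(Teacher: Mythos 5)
Your proof is correct and follows essentially the same route as the paper's: integrate out $\rho$ and read off the quadratic form in $\mu$, with your use of the Gaussian convolution identity $\int\cN(\rho;a,s_1^2)\cN(\rho;c,s_2^2)\,d\rho=\cN(a;c,s_1^2+s_2^2)$ being just a cleaner packaging of the paper's explicit completion of the square. Your remark that $\Sigma_m=\frac{m}{m\sigma_0^2+\sigma^2}bb^\top$ is a rank-one \emph{precision} (so $\cN(\mu;\mu_m,\Sigma_m)$ must be read in canonical form and $\Sigma_m^{-1}$ is only formal) is a valid clarification the paper leaves implicit; it is consistent with how the lemma is consumed in the derivation of the effect posterior, where these rank-one information matrices are summed with $\Sigma^{-1}$ to yield a full-rank precision.
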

\begin{proof}[Proof of \cref{lemma:gaussian_posterior_update}]
By keeping only terms that depend on $\mu$,
\begin{align*}
    f(\mu) &= \int_{\rho} \prod_{t \in [m]} \cN\left( Y_t ; \rho, \sigma^2 \right)\cN\left(\rho ; b^\top \mu, \sigma^2_{0}\right)\dint\rho \\
    &\propto  \int_\rho\exp\left\{ -\frac{1}{2\sigma^2}\sum_{t \in [m]}(Y_t - \rho)^2 - \frac{1}{2\sigma^2_{0}}(\rho - b^\top\mu)^2 \right\} \dint\rho\\
    &\propto \int_\rho \exp\left\{ -\frac{1}{2}\rho^2 \left(\frac{1}{\sigma^2_0} + \frac{m}{\sigma^2}\right) - 2\rho\left(\frac{\sum_{t \in [m]}}{\sigma^2} + \frac{b^\top\mu}{\sigma^2_0}\right) \right\}\dint\rho \exp\left\{-\frac{1}{2\sigma^2_0}\mu^T b b^\top \mu\right\}\\
    &\propto \exp\left\{ \frac{1}{2}\left(\frac{1}{\sigma^2}\sum_{t\in [m]}Y_t + \frac{b^\top \mu}{\sigma^2_0}\right)^2 \frac{\sigma^2_0 \sigma^2}{\sigma^2 + m\sigma^2_0} -\frac{1}{2\sigma^2_0}\mu^T b b^\top \mu\right\}\\
    &\propto \exp\left\{ \frac{\sum_{t\in [m]}Y_t b^\top \mu}{\sigma^2 + m\sigma^2_0}+\frac{\sigma^2}{2(\sigma^2 + m\sigma^2_0)}\mu^\top b b^\top \mu-\frac{1}{\sigma^2_0}b b^\top \right\}\\
    &\propto \exp\left\{ -\frac{1}{2}\left(\mu^\top \frac{m}{\sigma^2 + m\sigma^2_0}b b^\top\mu - 2\mu^\top  \frac{\sum_{t \in [m]}Y_t}{\sigma^2 + m\sigma^2_0} b \right)\right\}\\
    &\propto \cN\left( \mu ; \mu_m, \Sigma_m\right)\,.
\end{align*}
\end{proof}

\begin{lemma}[Joint effect posterior] \label{lemma:joint_effect_posterior}
For any $n \in [N]$, the joint effect posterior is a multivariate Gaussian $Q_n(\mu) = \cN(\breve{\mu}, \breve{\Sigma}_n)$, where
\begin{align}\label{eq:cluster_posterior}
    \breve{\Sigma}_n^{-1} = \Sigma^{-1} + \sum_{i\in[K]} \frac{n_i}{n_i \sigma_{0, i}^2 + \sigma^2} b_i b_i^\top, \quad 
    \breve{\mu}_n = \breve{\Sigma}_n\Big( \Sigma^{-1} \nu + \sum_{i\in[K]} \frac{B_{n, i}}{n_i\sigma_{0, i}^2+\sigma^2}  b_i\Big)\,.
\end{align}
\end{lemma}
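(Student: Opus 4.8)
The goal is to compute the joint posterior $Q_n(\mu) = \mathbb{P}(\mu \mid H_n)$ in the hierarchical model \eqref{eq:model_gaussian} and show it equals $\cN(\breve\mu_n, \breve\Sigma_n)$ with the stated parameters. The plan is to marginalize out the arm means $\theta_i$ first, so that the likelihood of $\mu$ given the history becomes a product of Gaussian factors over the arms, and then combine these with the Gaussian prior on $\mu$. First I would write down, using Bayes' rule and the conditional independence structure of \eqref{eq:model_gaussian}, that
\begin{align*}
Q_n(\mu) \propto \mathbb{P}(\mu) \prod_{i \in [K]} \int_{\theta_i} \Big(\prod_{t \in \mathcal{T}_i}\cN(Y_t;\theta_i,\sigma^2)\Big)\cN(\theta_i; b_i^\top\mu, \sigma_{0,i}^2)\,\dint\theta_i,
\end{align*}
where the factorization across arms holds because, given $\mu$, the $\theta_i$ are independent and each observation $Y_t$ with $t\in\mathcal{T}_i$ depends only on $\theta_i$.

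The central step is to evaluate each inner integral. This is exactly what \cref{lemma:gaussian_posterior_update} provides, applied with $m = n_i$, observations $\{Y_t\}_{t\in\mathcal{T}_i}$, mixing vector $b = b_i$, and prior variance $\sigma_{0,i}^2$. Invoking that lemma, each arm's factor is proportional to $\cN(\mu;\, \mu_{n_i}, \Sigma_{n_i})$ with $\Sigma_{n_i}^{-1} = \tfrac{n_i}{n_i\sigma_{0,i}^2+\sigma^2}b_ib_i^\top$ and the corresponding mean, i.e.\ as a function of $\mu$ it is proportional to
\begin{align*}
\exp\Big\{-\tfrac12\,\mu^\top \tfrac{n_i}{n_i\sigma_{0,i}^2+\sigma^2}b_ib_i^\top \mu + \mu^\top \tfrac{B_{n,i}}{n_i\sigma_{0,i}^2+\sigma^2}b_i\Big\}.
\end{align*}
Multiplying these $K$ factors together with the prior $\cN(\nu,\Sigma)$ and collecting the quadratic and linear terms in $\mu$ gives a Gaussian whose inverse covariance is $\Sigma^{-1} + \sum_{i}\tfrac{n_i}{n_i\sigma_{0,i}^2+\sigma^2}b_ib_i^\top$ and whose natural (precision-weighted) mean is $\Sigma^{-1}\nu + \sum_i \tfrac{B_{n,i}}{n_i\sigma_{0,i}^2+\sigma^2}b_i$. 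Reading off the covariance and mean then yields precisely $\breve\Sigma_n$ and $\breve\mu_n$ as in \eqref{eq:cluster_posterior}.

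I expect the only genuine obstacle to be bookkeeping rather than conceptual: one must justify the factorization of the likelihood across arms from the graphical structure of \eqref{eq:model_gaussian}, and then carefully match the completed-square quadratic form to the standard Gaussian parametrization to confirm that the summed rank-one terms $\tfrac{n_i}{n_i\sigma_{0,i}^2+\sigma^2}b_ib_i^\top$ and linear terms assemble into the claimed $\breve\Sigma_n^{-1}$ and $\breve\Sigma_n^{-1}\breve\mu_n$. Since \cref{lemma:gaussian_posterior_update} already does the single-arm integration in closed form, the remaining work is just to take the product over $i\in[K]$ and identify coefficients, so no additional technical difficulty arises.
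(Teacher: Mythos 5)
Your proposal is correct and follows essentially the same route as the paper's proof: factorize the marginal likelihood of $\mu$ over arms, integrate out each $\theta_i$ via \cref{lemma:gaussian_posterior_update}, and combine the resulting rank-one Gaussian factors with the prior $\cN(\nu,\Sigma)$ by adding natural parameters. No meaningful differences.
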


\begin{proof}[Proof of \cref{lemma:joint_effect_posterior}]
The joint effect posterior can be written as
\begin{align*}
    Q_n(\mu) &\propto \int_\theta \mathcal{L}_\theta\left(Y_{A_1}, ..., Y_{A_n}\right)P_0(\theta \mid \mu) \dint\theta Q(\mu)\\
    &= \prod_{i \in [K]}\int_{\theta_i} \prod_{t \in \mathcal{T}_i} \cN\left(Y_t ; \theta_i, \sigma^2\right)\cN\left( \theta_i; b_i^\top\mu, \sigma^2_{0, i} \right)\dint\theta_i \cN\left( \mu ; \nu, \Sigma \right)\,.
\end{align*}
Applying \cref{lemma:gaussian_posterior_update} gives
\begin{align*}
    \int_{\theta_i} \prod_{t \in \mathcal{T}_i} \cN\left(Y_t ; \theta_i, \sigma^2\right)\cN\left( \theta_i; b_i^\top\mu, \sigma^2_{0, i} \right)\dint\theta_i &\propto \cN\left(\mu;\bmu_{n, i}, \bSigma_{n, i}\right)\,,
\end{align*}
with
\begin{align*}
\breve\Sigma_{n, i}^{-1} = \frac{n_i}{\sigma^2_{0, i}n_i + \sigma^2}b_i b_i^\top, \qquad \breve\mu_{n, i} =  \breve\Sigma_{n, i} b_i\frac{n_i}{\sigma^2_{0, i}n_i + \sigma^2}\frac{B_{n, i}}{n_i}\,.
\end{align*}
Therefore, the joint effect posterior is a product of Gaussian distributions,
\begin{align*}
    Q_n(\mu) \propto \prod_{i \in [K]} \cN\left(\mu;\breve\mu_{n, i}, \breve\Sigma_{n, i}\right)\cN\left(\mu;\nu, \Sigma\right)\propto \cN\left( \mu ; \breve\mu_n, \breve\Sigma_n \right)\,,
\end{align*}
where
\begin{align*}
     \breve\Sigma_n &= \Sigma^{-1} + \sum_{i \in [K]}\breve\Sigma_{n, i} = \Sigma^{-1} + \sum_{i \in [K]}\frac{n_i}{\sigma^2_{0, i}n_i + \sigma^2}b_i b_i^\top\\
     \breve\mu_{n} &= \breve\Sigma_n^{-1}\left(\Sigma^{-1}\nu + \sum_{i \in [K]}\breve\Sigma_{n,i}^{-1}\breve\mu_{n, i} \right)= \breve\Sigma_n^{-1} \left(\Sigma^{-1}\nu + \sum_{i \in [K]}\frac{B_{n, i}}{n_i \sigma^2_{0, i}+\sigma^2}b_i\right)\,.
\end{align*}
\end{proof}

\begin{lemma}[Conditional arm posteriors]
\label{lemma:Conditional_arm_posteriors}
For any $n \in [n]$ and any arm $i \in [K]$, the conditional posterior distribution of arm $i$ is a Gaussian distribution $P_{n, i}(\theta_i\mid\mu) = \cN\left( \tmu_{n, i}, \tsigma^2_{n, i} \right)$, where
\begin{align}\label{eq:conditional_posterior}
    \tilde{\sigma}_{n, i}^{-2} = \frac{1}{\sigma_{0, i}^2}  + \frac{n_i}{\sigma^2}\,, \quad \tilde{\mu}_{n, i} = \tilde{\sigma}_{n, i}^{2} \left( \frac{\mu^\top b_i}{\sigma_{0, i}^{2}} +\frac{B_{n, i}}{\sigma^2} \right)\,.
\end{align}
\begin{proof}[Proof of \cref{lemma:Conditional_arm_posteriors}]
    The conditional posterior of arm $i$ can be written as
    \begin{align*}
        P_{n, i}(\theta_i\mid\mu) &\propto \mathcal{L}_{\theta_i}(Y_{A_1}, ...,Y_{A_n} )P_{0, i}(\theta_i \mid \mu)\\
        &\propto \prod_{t \in \mathcal{T}_i} \cN\left(Y_t ; \theta_i, \sigma^2\right)\cN\left(\theta_i ; b_i^T \mu, \sigma^2_{0, i} \right)\\
        &\propto \exp\left\{ -\frac{1}{2\sigma^2}\sum_{t \in \mathcal{T}_i}(Y_t - \theta_i)^2 - \frac{1}{2\sigma^2_{0, i}}(\theta_i - b_i^\top\mu)^2 \right\}\\
        &\propto \exp\left\{ -\frac{1}{2\sigma^2}\sum_{t \in \mathcal{T}_i}\left(-2 Y_t \theta_i + \theta_i^2\right) - \frac{1}{2\sigma^2_{0, i}}\left( \theta_i^2 - 2\theta_i b_i^\top\mu \right) \right\}\\
        &\propto \exp\left\{ -\frac{1}{2} \left(\theta_i^2 \left(\frac{n_i}{\sigma^2} + \frac{1}{\sigma^2_{0, i}}\right) - 2\theta_i \left(\frac{1}{\sigma^2}\sum_{t\in \mathcal{T}_i}Y_t + \frac{1}{\sigma^2_{0, i}}b_i^\top\mu\right)\right) \right\}\\
        &\propto \cN\left(\theta_i;\tmu_{n, i}, \tsigma^2_{n, i}\right)
    \end{align*}
\end{proof}
\end{lemma}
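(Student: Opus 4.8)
The plan is to derive the conditional arm posterior by a direct Gaussian--Gaussian conjugacy argument, conditioning on the effect parameter $\mu$ so that $\theta_i$ carries the known Gaussian prior $\cN(b_i^\top\mu,\sigma_{0,i}^2)$ and only the rewards collected on arm $i$ are informative about it. First I would invoke Bayes' rule to write $P_{n,i}(\theta_i\mid\mu)$ proportional to the product of the likelihood of the observations and the conditional prior $P_{0,i}(\theta_i\mid\mu)$. Because, given $\mu$, the arm means $\theta_j$ are mutually independent and each reward $Y_t$ depends only on $\theta_{A_t}$, the full likelihood factorizes across arms and all factors not involving $\theta_i$ are absorbed into the proportionality constant; only the rewards indexed by $\mathcal{T}_i=\{t\in[n]:A_t=i\}$ survive, each contributing a Gaussian factor $\cN(Y_t;\theta_i,\sigma^2)$.

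Next I would substitute the explicit Gaussian densities and work inside the exponent, discarding every additive term independent of $\theta_i$. Expanding $\tfrac{1}{\sigma^2}\sum_{t\in\mathcal{T}_i}(Y_t-\theta_i)^2$ and $\tfrac{1}{\sigma_{0,i}^2}(\theta_i-b_i^\top\mu)^2$ and grouping powers of $\theta_i$ yields a quadratic of the form $-\tfrac12\big(a\,\theta_i^2 - 2b\,\theta_i\big)$ with precision $a = n_i/\sigma^2 + 1/\sigma_{0,i}^2$ and linear coefficient $b = B_{n,i}/\sigma^2 + (\mu^\top b_i)/\sigma_{0,i}^2$, recalling that $B_{n,i}=\sum_{t\in\mathcal{T}_i}Y_t$ and $n_i=|\mathcal{T}_i|$. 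Completing the square identifies this as the kernel of a Gaussian with variance $\tsigma_{n,i}^2 = a^{-1}$ and mean $\tmu_{n,i}=\tsigma_{n,i}^2\,b$, which are exactly the expressions claimed in \eqref{eq:conditional_posterior}.

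This argument is a textbook conjugate update, so I do not anticipate any genuine obstacle; the only care required is bookkeeping. Concretely, one must justify retaining only the factors indexed by $\mathcal{T}_i$, which rests on the conditional independence of the $\theta_j$ given $\mu$ (so that the posterior of $\theta_i$ decouples from the other arms), and one must track the $\theta_i$-dependent terms consistently through the chain of proportionality signs. I would conclude by reading off the normalized Gaussian kernel, matching the coefficient of $\theta_i^2$ to $\tsigma_{n,i}^{-2}$ and the coefficient of $\theta_i$ to $\tmu_{n,i}/\tsigma_{n,i}^2$, thereby recovering \cref{lemma:Conditional_arm_posteriors}.
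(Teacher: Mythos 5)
Your proposal is correct and follows essentially the same route as the paper: Bayes' rule, factorization of the likelihood so that only the factors indexed by $\mathcal{T}_i$ survive, expansion of the Gaussian exponents, and completion of the square to read off the precision $n_i/\sigma^2 + 1/\sigma_{0,i}^2$ and mean $\tsigma_{n,i}^2\bigl(B_{n,i}/\sigma^2 + (b_i^\top\mu)/\sigma_{0,i}^2\bigr)$. The only difference is that you make explicit the conditional-independence justification for discarding the other arms' factors, which the paper leaves implicit.
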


\begin{lemma}[Marginal arm posterior]
\label{lemma:Marginal_arm_posterior}
For any $n \in [n]$ and any arm $i \in [K]$, the marginal posterior distribution of arm $i$ is a Gaussian distribution  $\mathbb{P}(\theta_i \mid H_n) = \cN\left(\hat\mu_{n, i}, \hat\sigma^2_{n, i}\right)$, where
\begin{align}\label{eq:posterior}
    \hat{\sigma}_{n, i}^{2} = \tilde{\sigma}_{n, i}^{2} + \frac{\tilde{\sigma}_{n, i}^{4}}{\sigma_{0, i}^4} b_i^\top \breve{\Sigma}_n b_i, \quad \hat{\mu}_{n, i} = \tilde{\sigma}_{n, i}^{2} \Big( \frac{\breve{\mu}_n^\top b_i}{\sigma_{0, i}^{2}} +\frac{B_{n, i}}{\sigma_i^2} \Big)\,.
\end{align}
\end{lemma}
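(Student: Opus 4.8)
The plan is to obtain the marginal arm posterior by integrating out the latent effect $\mu$, i.e. to compute
\begin{align*}
\mathbb{P}(\theta_i \mid H_n) = \int_\mu P_{n,i}(\theta_i \mid \mu)\, Q_n(\mu)\, \dint\mu = \E{\mu \sim Q_n}{P_{n,i}(\theta_i \mid \mu)}\,,
\end{align*}
where $P_{n,i}(\theta_i \mid \mu) = \cN(\tilde\mu_{n,i}, \tilde\sigma^2_{n,i})$ is the conditional arm posterior of \cref{lemma:Conditional_arm_posteriors} and $Q_n(\mu) = \cN(\breve\mu_n, \breve\Sigma_n)$ is the joint effect posterior of \cref{lemma:joint_effect_posterior}. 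The key structural observation is that in $P_{n,i}$ the conditional mean $\tilde\mu_{n,i} = \tilde\sigma^2_{n,i}\big(\frac{b_i^\top \mu}{\sigma_{0,i}^2} + \frac{B_{n,i}}{\sigma^2}\big)$ is \emph{affine} in $\mu$, whereas the conditional variance $\tilde\sigma^2_{n,i}$ is \emph{independent} of $\mu$. This is precisely a linear-Gaussian (conditionally Gaussian) model, so marginalizing the Gaussian $\mu$ yields a Gaussian $\theta_i$, and it suffices to identify its first two moments.

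Concretely, I would represent $\theta_i = \tilde\mu_{n,i}(\mu) + \tilde\sigma_{n,i}\,\epsilon$ with $\epsilon \sim \cN(0,1)$ independent of $\mu \sim \cN(\breve\mu_n, \breve\Sigma_n)$, and then apply the laws of total expectation and variance. For the mean, using $\E{}{b_i^\top\mu} = b_i^\top \breve\mu_n$ and $\E{}{\epsilon} = 0$ gives directly
\begin{align*}
\hat\mu_{n,i} = \tilde\sigma^2_{n,i}\Big(\frac{b_i^\top \breve\mu_n}{\sigma_{0,i}^2} + \frac{B_{n,i}}{\sigma^2}\Big)\,,
\end{align*}
matching \eqref{eq:posterior}. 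For the variance, the independent noise $\tilde\sigma_{n,i}\epsilon$ supplies the ``within'' term $\tilde\sigma^2_{n,i}$, while the affine part supplies the ``between'' term $\big(\frac{\tilde\sigma^2_{n,i}}{\sigma_{0,i}^2}\big)^2 \mathrm{Var}(b_i^\top\mu) = \frac{\tilde\sigma^4_{n,i}}{\sigma_{0,i}^4}\, b_i^\top \breve\Sigma_n b_i$, where I use $\mathrm{Var}(b_i^\top\mu) = b_i^\top \breve\Sigma_n b_i$. Adding the two independent contributions reproduces the stated $\hat\sigma^2_{n,i}$.

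To make the derivation fully self-contained, one can instead mimic the explicit integral in the proof of \cref{lemma:gaussian_posterior_update}: substitute both densities, integrate out $\mu$ (a Gaussian integral over a quadratic form), retain only the factors depending on $\theta_i$, and complete the square. I expect the only delicate bookkeeping here to be tracking how the cross term coupling $\theta_i$ with $b_i^\top\mu$ interacts with the quadratic form $\mu^\top \breve\Sigma_n^{-1}\mu$ of $Q_n$ to generate exactly the variance-inflation factor $\frac{\tilde\sigma^4_{n,i}}{\sigma_{0,i}^4}\, b_i^\top \breve\Sigma_n b_i$; the moment-based route above sidesteps this computation, so I do not anticipate a genuine obstacle beyond careful algebra.
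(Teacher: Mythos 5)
Your proposal is correct and follows essentially the same route as the paper: both write $\mathbb{P}(\theta_i\mid H_n)=\int_\mu P_{n,i}(\theta_i\mid\mu)\,Q_n(\mu)\,\dint\mu$, observe that the conditional mean is affine in $\mu$ while the conditional variance is constant, and invoke the standard marginalization of a linear-Gaussian model (the paper cites this as a Gaussian convolution from Bishop, while you rederive the two moments via the laws of total expectation and variance). The resulting mean and variance match \eqref{eq:posterior} exactly.
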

\begin{proof}[Proof of \cref{lemma:Marginal_arm_posterior}]
The marginal distribution of arm $i$ can be written as
\begin{align*}
\int_\mu P_{n, i}(\theta_i \mid \mu) Q_n(\mu)d\mu &= \int_\mu \cN\left(\theta_i;\tmu_{n, i}, \tsigma^2_{n, i}\right) \cN\left( \mu;\breve\mu_n, \breve\Sigma_n \right)\dint\mu\\
&\propto \int_\mu \cN\left(\theta_i;\tsigma^2_{n, i}\left(\frac{\mu^\top b_i}{\sigma^2_{0, i}}+\frac{B_{n, i}}{\sigma^2}\right), \tsigma^2_{n, i}\right) \cN\left( \mu;\breve\mu_n, \breve\Sigma_n \right)\dint\mu\,.
\end{align*}
The line above is a convolution of Gaussian measures, and can be written as \citep{Bishop2006},
\begin{align*}
\int_\mu P_{n, i}(\theta_i \mid \mu) Q_n(\mu)d\mu &\propto \cN\left(\theta_i ; \tilde{\sigma}_{n, i}^{2} \left( \frac{\breve{\mu}_n^\top b_i}{\sigma_{0, i}^{2}} +\frac{B_{n, i}}{\sigma_i^2} \right), \tsigma_{n, i}^2 + \frac{\tsigma_{n, i}^2}{\sigma^2_{0, i}}b_i^\top \breve\Sigma_n\frac{\tsigma_{n, i}^2}{\sigma^2_{0, i}}b_i \right)\dint\mu\\
&= \cN\left(\theta_i ; \hat\mu_{n, i}, \hat\sigma_{n, i}^2\right)\,.
\end{align*}
\end{proof}
\begin{lemma}[Technical lemma]
\label{lemma:integration_gaussian}
Let $a >0$ and $X \sim \cN(\mu, \sigma^2)$. Then $\E{X}{e^{-\frac{X^2}{2a^2}}}= \frac{1}{\sqrt{1 + \frac{\sigma^2}{a^2}}} e^{-\frac{\mu^2}{2(a^2 + \sigma^2)}}$.
\end{lemma}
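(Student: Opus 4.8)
The plan is to evaluate the expectation directly as a one-dimensional Gaussian integral and reduce it to a completion of the square. Writing out the density of $X \sim \cN(\mu, \sigma^2)$, the quantity to compute is
\[
\E{X}{e^{-\frac{X^2}{2a^2}}} = \frac{1}{\sqrt{2\pi}\,\sigma} \int_{-\infty}^{\infty} \exp\left(-\frac{x^2}{2a^2} - \frac{(x-\mu)^2}{2\sigma^2}\right) \mathrm{d}x\,.
\]
The first step I would take is to merge the two exponents into a single quadratic in $x$. Collecting the coefficient of $x^2$ yields $c = a^{-2} + \sigma^{-2} = (a^2+\sigma^2)/(a^2\sigma^2)$, while the linear and constant contributions come entirely from the $(x-\mu)^2$ term.

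Next I would complete the square in $x$, rewriting the exponent as $-\frac{c}{2}(x - m)^2 + r$ with shift $m = \mu/(c\sigma^2)$ and a leftover constant $r$ independent of $x$. The shifted term then integrates as a pure Gaussian, $\int_{-\infty}^{\infty} e^{-\frac{c}{2}(x-m)^2}\,\mathrm{d}x = \sqrt{2\pi/c}$, which combines with the $1/(\sqrt{2\pi}\,\sigma)$ prefactor to give $1/(\sigma\sqrt{c}) = a/\sqrt{a^2+\sigma^2} = 1/\sqrt{1+\sigma^2/a^2}$, exactly the prefactor claimed in the statement.

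It remains only to identify the constant $r$. A short computation gives $r = \frac{\mu^2}{2\sigma^2}\big(\frac{1}{c\sigma^2} - 1\big)$; substituting $\frac{1}{c\sigma^2} = a^2/(a^2+\sigma^2)$ collapses this to $-\mu^2/(2(a^2+\sigma^2))$, which is precisely the exponent appearing in the conclusion. Multiplying the two factors then yields the claim. There is no genuine obstacle here: the result is a routine Gaussian moment-generating-function identity, and the only point that demands care is the algebraic bookkeeping of the constant term $r$ after completing the square, since a sign or fraction slip there would corrupt the final exponent—but it reduces to the elementary simplification above.
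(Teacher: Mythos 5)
Your computation is correct: the coefficient $c=a^{-2}+\sigma^{-2}$, the shift $m=\mu/(c\sigma^2)$, the prefactor $1/(\sigma\sqrt{c})=1/\sqrt{1+\sigma^2/a^2}$, and the constant $r=\frac{\mu^2}{2\sigma^2}\bigl(\frac{1}{c\sigma^2}-1\bigr)=-\frac{\mu^2}{2(a^2+\sigma^2)}$ all check out. The paper states this lemma without proof, treating it as a standard Gaussian integral identity, and your completion-of-the-square derivation is exactly the canonical argument one would supply.
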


\subsection{Proofs for Multi-armed Bandits}
\label{subsec:app_proof_MAB}

From now, we consider that $\lfloor \omega_k n\rfloor = \omega_k n \in \mathbb{N}$ for sake of simplicity. 

\begin{theorem}[Complete statement of \cref{th:upper bound}] For all $\omega \in \Delta^+_K$, the expected PoE of \emph{$\AdaBAI(\omega)$} under the MAB problem \eqref{eq:bayes_elim_model_gaussian} is upper bounded as
\begin{align}\label{eq:MAB_phi}
  \mathcal{P}_n &\leq  \doublesum\frac{e^{ -\frac{(\mu_{0, i} - \mu_{0, j})^2}{2(\sigma^2_{0, i} + \sigma^2_{0, j})}}}{\sqrt{1 + n\phi_{i, j}(\omega)}}\,,\quad \text{where}\quad
    \phi_{i, j}(\omega) = 
    \frac{\sigma^4_{0, i}\omega_i\left(\frac{\sigma^2}{n} + \omega_{j}\sigma^2_{0, j}\right) + \sigma^4_{0, j}\omega_{j}\left(\frac{\sigma^2}{n} + \omega_{i}\sigma^2_{0, i}\right)}{\sigma^2 \sigma^2_{0, i}\left(\frac{\sigma^2}{n} + \omega_{j}\sigma^2_{0, j}\right) + \sigma^2 \sigma^2_{0, j}\left(\frac{\sigma^2}{n} + \omega_{i}\sigma^2_{0, i}\right)}\,.
\end{align}
\end{theorem}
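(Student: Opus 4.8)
The plan is to start from the measure-swapped reformulation derived in \cref{sec:analysis},
\[
\mathcal{P}_n = \doublesum \E{}{\mathbb{P}\!\left(i_*(\theta) = i \mid J_n = j, H_n\right)\I{J_n = j}},
\]
which holds because $J_n$ is a deterministic function of $H_n$ (so $\{J_n=j\}$ pulls out of the inner probability, and the first equality $\mathcal{P}_n=\E{}{\mathbb{P}(J_n\neq i_*(\theta)\mid H_n)}$ is just the tower property). It then suffices to control one summand. Fix $i\neq j$. Under \eqref{eq:bayes_elim_model_gaussian} the coordinates $\theta_i$ are mutually independent given $H_n$, each distributed as $\cN(\hat\mu_{n,i},\hat\sigma_{n,i}^2)$. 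On the event $\{J_n=j\}$ the event $\{i_*(\theta)=i\}$ forces $\theta_i\ge\theta_j$, so I bound the conditional probability by $\mathbb{P}(\theta_i\ge\theta_j\mid H_n)$. Since $\theta_i-\theta_j\mid H_n\sim\cN(\hat\mu_{n,i}-\hat\mu_{n,j},\,\hat\sigma_{n,i}^2+\hat\sigma_{n,j}^2)$ and $\hat\mu_{n,i}\le\hat\mu_{n,j}$ on $\{J_n=j\}$, the Gaussian tail bound $\Phi(-x)\le e^{-x^2/2}$ (applicable as the mean gap is nonpositive) gives
\[
\mathbb{P}\!\left(i_*(\theta)=i\mid J_n=j,H_n\right)\I{J_n=j}\le\exp\!\left(-\frac{(\hat\mu_{n,i}-\hat\mu_{n,j})^2}{2(\hat\sigma_{n,i}^2+\hat\sigma_{n,j}^2)}\right).
\]

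Next I would take the expectation over $H_n$ and bound $\I{J_n=j}\le 1$, which only loosens the (now symmetric) exponent. The posterior variances $\hat\sigma_{n,i}^2=(\sigma_{0,i}^{-2}+n_i\sigma^{-2})^{-1}$ are deterministic, so all remaining randomness lives in the posterior-mean gap $\hat\mu_{n,i}-\hat\mu_{n,j}$, and the key step is to identify its marginal (prior-predictive) law. As $\hat\mu_{n,i}$ is an affine function of the Gaussian statistic $B_{n,i}$ it is Gaussian; by the tower property its mean is $\E{}{\E{}{\theta_i\mid H_n}}=\mu_{0,i}$, and the law of total variance gives $\mathrm{Var}(\hat\mu_{n,i})=\sigma_{0,i}^2-\hat\sigma_{n,i}^2=:s_i^2$ (since $\mathrm{Var}(\theta_i\mid H_n)=\hat\sigma_{n,i}^2$ is constant). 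By independence of arms, $\hat\mu_{n,i}-\hat\mu_{n,j}\sim\cN(\mu_{0,i}-\mu_{0,j},\,s_i^2+s_j^2)$, so applying \cref{lemma:integration_gaussian} with $a^2=\hat\sigma_{n,i}^2+\hat\sigma_{n,j}^2$ yields
\[
\E{H_n}{\exp\!\left(-\frac{(\hat\mu_{n,i}-\hat\mu_{n,j})^2}{2(\hat\sigma_{n,i}^2+\hat\sigma_{n,j}^2)}\right)}=\frac{1}{\sqrt{1+\frac{s_i^2+s_j^2}{\hat\sigma_{n,i}^2+\hat\sigma_{n,j}^2}}}\exp\!\left(-\frac{(\mu_{0,i}-\mu_{0,j})^2}{2(\hat\sigma_{n,i}^2+\hat\sigma_{n,j}^2+s_i^2+s_j^2)}\right).
\]

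Finally, the same total-variance identity $\hat\sigma_{n,i}^2+s_i^2=\sigma_{0,i}^2$ collapses the exponent's denominator to $\sigma_{0,i}^2+\sigma_{0,j}^2$, producing exactly the claimed Gaussian factor. It then remains to verify the purely algebraic identity $\tfrac{s_i^2+s_j^2}{\hat\sigma_{n,i}^2+\hat\sigma_{n,j}^2}=n\,\phi_{i,j}(\omega)$, which I would obtain by substituting $n_i=\omega_i n$, $\hat\sigma_{n,i}^2=\tfrac{\sigma_{0,i}^2\sigma^2}{\sigma^2+\omega_i n\sigma_{0,i}^2}$ and $s_i^2=\tfrac{\omega_i n\sigma_{0,i}^4}{\sigma^2+\omega_i n\sigma_{0,i}^2}$, clearing the common factors $\sigma^2+\omega_i n\sigma_{0,i}^2=n(\tfrac{\sigma^2}{n}+\omega_i\sigma_{0,i}^2)$, and reading off $\phi_{i,j}$; summing over $i\neq j$ then gives the theorem. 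I expect the main conceptual obstacle to be the middle step: correctly pinning down the prior-predictive law of the posterior means and recognizing that the law of total variance is precisely what yields the clean prior-variance term $\sigma_{0,i}^2+\sigma_{0,j}^2$ in the exponent, whereas the closing computation of $\phi_{i,j}$ is routine if tedious bookkeeping. (A harmless factor $\tfrac12$ from the tail bound could be retained for a slightly sharper constant.)
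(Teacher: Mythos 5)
Your proposal is correct and follows essentially the same route as the paper: the same decomposition of $\mathcal{P}_n$ over ordered pairs, the same Gaussian tail bound conditional on $H_n$ on the event $\{J_n=j\}$, and the same application of \cref{lemma:integration_gaussian} to the prior-predictive law of $\hat\mu_{n,i}-\hat\mu_{n,j}$. The only (harmless) difference is that you obtain $\mathrm{Var}(\hat\mu_{n,i})=\sigma_{0,i}^2-\hat\sigma_{n,i}^2$ and the collapse of the exponent's denominator to $\sigma_{0,i}^2+\sigma_{0,j}^2$ via the law of total variance, whereas the paper computes the variance of $\hat\mu_{n,i}$ directly from the marginal distribution of the reward sums in \eqref{eq:standard_variance_posterior_mean}; both give identical expressions and your identity $\tfrac{s_i^2+s_j^2}{\hat\sigma_{n,i}^2+\hat\sigma_{n,j}^2}=n\,\phi_{i,j}(\omega)$ checks out.
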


\begin{remark}
    When $\sigma^2_{0, i} = \sigma^2_{0, j}\,,\lim_{n \to +\infty} \phi_{i, j} = \lim_{n \to \infty} \frac{\sigma^4_{0, i}\omega_i\left(\frac{\sigma^2}{n} + \omega_{j}\sigma^2_{0, j}\right) + \sigma^4_{0, j}\omega_{j}\left(\frac{\sigma^2}{n} + \omega_{i}\sigma^2_{0, i}\right)}{\sigma^2 \sigma^2_{0, i}\left(\frac{\sigma^2}{n} + \omega_{j}\sigma^2_{0, j}\right) + \sigma^2 \sigma^2_{0, j}\left(\frac{\sigma^2}{n} + \omega_{i}\sigma^2_{0, i}\right)} = \frac{2\sigma^2_0}{\sigma^2}\frac{\omega_i \omega_j}{\omega_i+\omega_j}=\mathcal{O}(1)$.
\end{remark}

\textbf{Proof of \cref{th:upper bound}}
We first write $\mathcal{P}_n$ as a double sum over all possible distinct arms,
\begin{align*}
    \E{}{\mathbb{P}_{}\big(J_n \neq i_*(\theta) \mid H_n \big)} &= \E{}{\I{J_n \neq i_*(\theta)}}\\
    &= \E{}{\E{}{\sum_{i=1}^K \sum_{j=1}^K \I{i \neq j} \I{i_*(\theta) = i} \I{J_n = j} \mid H_n}}\\
    &=\doublesum\E{}{\E{}{\I{i_*(\theta) = i} \I{J_n = j} \mid H_n}}\\
    &=\doublesum\E{}{\mathbb{P}_{}\left(i_*(\theta) = i \cap  J_n = j \mid H_n\right)}
\end{align*}
Since $J_n : H_n \to [K],\,\mathbb{P}(J_n = j \mid H_n) = \I{J_n = j}$. For the conditional probability of $i_*(\theta) = i$ given the event $J_n = j$ to be well defined, we want to make sure to condition on an event with non-zero probability. Considering both events $\{J_n = j\}$ or $\{J_n \neq j\}$ under $H_n$,
\begin{align*}
    \doublesum\E{}{\mathbb{P}_{}\left(i_*(\theta) = i \cap  J_n = j \mid H_n\right)} &= \doublesum\E{}{\mathbb{P}_{}\left(i_*(\theta) = i \cap  J_n = j \mid H_n\right)(\I{J_n = j} + \I{J_n \neq j} ) }\\
    &=\doublesum\E{}{\mathbb{P}\left(i_*(\theta) = i \mid  J_n = j, H_n\right)\mid J_n = j }\mathbb{P}(J_n = j)\,.\\
    &=\doublesum \E{}{\mathbb{P}\left(i_*(\theta) = i \mid  J_n = j, H_n\right)\I{J_n = j}}\,,
\end{align*}
where the second equality holds by tower rule conditionally on $H_n$, and the last equation holds because $J_n$ is deterministic conditionally on $H_n$. Overall, this gives
\begin{align*}
    \mathcal{P}_n  =\doublesum\E{}{\mathbb{P}\left(i_*(\theta) = i \mid  J_n = j, H_n\right)\I{J_n = j}}\,.
\end{align*}
By definition of $i_*(\theta)$ in the MAB setting and applying Hoeffding inequality for sub-Gaussian random variables,
\begin{align}
\label{eq:Hoeffding_2}
\mathbb{P}\left(\argmax_{k \in [K]}\theta_{k}=i \mid H_n, J_n = j \right) &\leq  \mathbb{P}\left( \theta_{i} \geq \theta_{j} \mid H_n, J_n = j \right)\nonumber\\
&= \mathbb{P}\left( (\theta_{i} -\theta_{j}) - (\hat\mu_{n,i} - \hat\mu_{n,j}) \geq - (\hat\mu_{n,i} - \hat\mu_{n,j}) \mid H_n, J_n = j \right)\nonumber\\
&\leq \exp\left( - \frac{(\hat\mu_{n,i} - \hat\mu_{n,j})^2}{2(\hat\sigma^2_{n,i} + \hat\sigma^2_{n,j})}\right)\,.
\end{align}
Therefore,
\begin{align}
\label{temp_check}
    \E{}{\mathbb{P}\left(i_*(\theta) = i \mid  J_n = j, H_n\right)\I{J_n = j}}  &\leq \E{}{\exp\left( - \frac{(\hat\mu_{n,i} - \hat\mu_{n,j})^2}{2(\hat\sigma^2_{n,i} + \hat\sigma^2_{n,j})}\right)}
\end{align}
We now want to compute this above expectation with respect to $H_n$.

First, we remark that because the scheduling of arms $\left(A_1, ..., A_n\right)$ is deterministic, the law of $H_n = \left(A_1, Y_{A_1},..., A_n, Y_{A_n} \right)$ is the law of $\left(Y_{A_1}, ..., Y_{A_n}\right)$. Denoting $\pi_{H_n}$ the marginal distribution of $H_n$,
\begin{align*}
    \pi_{H_n}\left(H_n\right) &= \pi_{H_n}\left(Y_{A_1}, ..., Y_{A_n}\right) = \int_{\theta} \mathcal{L}_{\theta}(Y_{A_1}, ... Y_{A_n})P_0(\theta)\dint\theta\,,
\end{align*}
where $\mathcal{L}_{\theta}(Y_{A_1}, ... Y_{A_n})$ denotes the likelihood of $(Y_{A_1}, ... Y_{A_n})$ given parameter $\theta$ and $P_{0}(\theta) = \prod_{i \in [K]} P_{0, i}(\theta_i)$ since each mean reward $\theta_i$ is drawn independently from $P_{0, i}$ in the MAB setting. Since rewards given parameter $\theta$ are independent and identically distributed,
\begin{align}
\label{eq:convolution_simple_gaussian}
    \pi_{H_n}\left(H_n\right) &= \int_{\theta} \prod_{i\in[K]} \mathcal{L}_{\theta_i}\big((Y_t)_{t \in \mathcal{T}_i}\big) P_{0, i}(\theta_i)\dint\theta_i\nonumber \\
    &=  \int_{\theta} \prod_{i\in[K]} \cN\big((Y_t)_{t \in \mathcal{T}_i};\theta_i\bm{1}_{\omega_i n}, \sigma^2I_{\omega_i n}\big) \cN\big(\theta_i ;\mu_{0, i}, \sigma^2_{0, i}\big) \dint\theta_i\,,
\end{align}
where $\bm{1}_q$ denotes the vector of size $q$ whose all components are $1$s. 

\eqref{eq:convolution_simple_gaussian} is a convolution of Gaussians and can be computed easily \citep{Bishop2006},
\begin{align*}
\cN\big((Y_t)_{t \in \mathcal{T}_i};\theta_i\bm{1}_{\omega_i n}, \sigma^2I_{\omega_i n}\big) \cN\big(\theta_i ;\mu_{0, i}, \sigma^2_{0, i}\big) &= \cN\big( \left(Y_{t}\right)_{t \in \mathcal{A}_i} ; \mu_{0, i}\bm{1}_{\omega_i n} , \sigma^2 I_{\omega_i n} + \sigma^2_{0, i}\bm{1}_{\omega_i n}\bm{1}_{\omega_i n}^\top \big) \,.
\end{align*}
The above covariance matrix exhibits $\sigma^2 + \sigma^2_{0, i}$ on the diagonal and $\sigma^2_{0, i}$ out of diagonal.

We are now ready to compute some useful statistics : for any $i \in [K]$,
\begin{align}
 &\E{}{\hat\mu_{n,i}} = \E{}{\frac{\sigma^2}{\sigma^2 + \sigma^2_{0, i}\omega_i n}\mu_{0, i} + \frac{\sigma^2_{0, i}}{\sigma^2 + \sigma^2_{0, i}\omega_i n}\sum_{t \in \mathcal{T}_i}Y_{t}} = \mu_{0, i}\label{eq:standard_expected_posterior_mean}\\
 &\mathbb{V}\left(\hat\mu_{n,i}\right) = \frac{\sigma^4_{0, i}}{\left(\sigma^2 + \sigma^2_{0, i}\omega_i n\right)^2}\mathbb{V}\left(\sum_{t\in \mathcal{T}_i}Y_{t}\right) = \frac{\sigma^4_{0, i}}{\left(\sigma^2 + \sigma^2_{0, i}\omega_i n\right)}\omega_i n\,\label{eq:standard_variance_posterior_mean}\\
 &\E{}{\frac{1}{\omega_i n}\sum_{t \in \mathcal{T}_i}Y_{t}} = \mu_{0, i} \label{eq:standard_expected_empirical_mean} \\
 &\mathbb{V}\left(\frac{1}{\omega_i n}\sum_{t \in \mathcal{T}_i}Y_{t}\right) = \frac{1}{\omega_i^2 n^2}\left( \omega_i n \left(\sigma^2 + \sigma^2_{0, i}  \right)+ \left(\omega_i^2 n^2 - \omega_i n \right)\sigma^2_{0, i}\right)= \frac{\sigma^2}{\omega_i n} + \sigma^2_{0, i}\label{eq:standard_variance_empirical_mean}
\end{align}
Applying \cref{lemma:integration_gaussian} on \eqref{temp_check} and simplifying terms gives 
\begin{align*}
    \E{}{\mathbb{P}\left(i_*(\theta) = i \mid  J_n = j, H_n\right)\I{J_n = j}}  \leq \E{}{\exp\left( - \frac{(\hat\mu_{n,i} - \hat\mu_{n,j})^2}{2(\hat\sigma^2_{n,i} + \hat\sigma^2_{n,j})}\right)}=\frac{e^{ -\frac{(\mu_{0, i} - \mu_{0, j})^2}{2(\sigma^2_{0, i} + \sigma^2_{0, j})}}}{\sqrt{1 + n \phi_{i,j}}}\,.
\end{align*}

\subsection{Proofs for Linear Bandits}
\label{subsec:app_proof_linear_bandit}

\begin{theorem}[Complete statement of \cref{th:linear_model_upper_bound}]
Assume that $x_i \neq x_j$ for any $i \neq j$. Then, for all $\omega \in \Delta^+_K$, the expected PoE of \emph{$\AdaBAI$} using allocation $\omega$ under linear bandit problem \eqref{eq:linear_gaussian} is upper bounded as
\begin{align*}
  \mathcal{P}_n &\leq \doublesum \frac{1}{\sqrt{1 + \frac{ c_{i,j}(\omega) }{ \lVert x_i - x_j\rVert^2_{\hat\Sigma_n} }}} e^{ -\frac{(\mu_0^\top x_i - \mu_0^\top x_j)^2}{2\lVert x_i - x_j\rVert^2_{\hat\Sigma_0}}} \,,
\end{align*}
where:
\begin{align}\label{eq:linear_c_i_j}
    &\Cova(\hat\mu_n) = \frac{1}{\sigma^4} \hat\Sigma_n \left( \underbrace{\sum_{i \in [K]} \left(\omega_i n (\sigma^2 + x_i^\top \Sigma_0 x_i) \right) x_i x_i^\top}_{\text{variance terms}} + \underbrace{\sum_{i \in [K]}\sum_{j \in [K]\setminus \{j\}} x_i^\top \Sigma_0 x_j \omega_i \omega_j n^2 x_i x_j^\top}_{\text{covariance between arms}} \right) \hat\Sigma_n\nonumber \\
    &c_{i, j}=\lVert x_i - x_j \rVert^2_{ \Cova(\hat\mu_n) }\,.
\end{align}
\end{theorem}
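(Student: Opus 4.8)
The plan is to reuse the Bayesian scheme of \cref{sec:analysis}. The measure-swap and the pair decomposition derived for the MAB case in \cref{subsec:app_proof_MAB} use only that $J_n$ is deterministic given $H_n$ and that the allocation is fixed in advance; neither fact depends on the reward structure, so the identity
\begin{align*}
\mathcal{P}_n = \doublesum \E{}{\mathbb{P}\left(i_*(\theta) = i \mid J_n = j, H_n\right)\I{J_n = j}}
\end{align*}
carries over verbatim to the linear model \eqref{eq:linear_gaussian}. It therefore suffices to bound each summand by the corresponding term of the claimed bound.

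Fix $i \neq j$. Under \eqref{eq:linear_gaussian_posteriors} the posterior of $\theta$ given $H_n$ is $\cN(\hat\mu_n, \hat\Sigma_n)$, so the scalar gap $\theta^\top(x_i - x_j) \mid H_n$ is Gaussian with mean $\hat\mu_n^\top(x_i - x_j)$ and variance $\lVert x_i - x_j\rVert^2_{\hat\Sigma_n}$. Two structural facts of the Gaussian fixed-design model are what I would exploit: $\hat\Sigma_n$ depends only on the (prescheduled) pull counts, hence is \emph{deterministic}, and the hypotheses $x_i \neq x_j$ with $\Sigma_0 \succ 0$ force $\lVert x_i - x_j\rVert^2_{\hat\Sigma_n} > 0$, so the tail bound below is well posed. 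On $\{J_n = j\}$ the decision rule \eqref{eq:decision_rule} gives $\hat\mu_n^\top(x_i - x_j) \le 0$, while $\{i_*(\theta) = i\}$ implies $\theta^\top x_i \ge \theta^\top x_j$; the same sub-Gaussian tail argument as in \eqref{eq:Hoeffding_2} then yields
\begin{align*}
\mathbb{P}\left(i_*(\theta) = i \mid J_n = j, H_n\right)\I{J_n = j} \le \exp\left(-\frac{\big(\hat\mu_n^\top(x_i - x_j)\big)^2}{2\lVert x_i - x_j\rVert^2_{\hat\Sigma_n}}\right).
\end{align*}

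It remains to integrate this over the prior-predictive law of $H_n$. Since $(Y_1, \dots, Y_n)$ is jointly Gaussian (a Gaussian $\theta$ plus Gaussian noise) and $\hat\mu_n$ is an affine image of $B_n = \sum_t Y_t x_{A_t}$, the scalar $\hat\mu_n^\top(x_i - x_j)$ is Gaussian; I would compute its first two moments. Its mean is $\mu_0^\top(x_i - x_j)$, via the normal-equations identity $\hat\Sigma_n^{-1}\mu_0 = \Sigma_0^{-1}\mu_0 + \sigma^{-2}\sum_k n_k x_k x_k^\top \mu_0$ (the exact analogue of \eqref{eq:standard_expected_posterior_mean}). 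Its variance is $\lVert x_i - x_j\rVert^2_{\Cova(\hat\mu_n)} = c_{i,j}(\omega)$, with $\Cova(\hat\mu_n)$ as in \eqref{eq:linear_c_i_j}, obtained from $\Cova(\hat\mu_n) = \sigma^{-4}\hat\Sigma_n \Cova(B_n)\hat\Sigma_n$ and $\Cova(Y_s, Y_t) = x_{A_s}^\top\Sigma_0 x_{A_t} + \sigma^2\I{s = t}$, grouping the double sum over time steps by arm. Applying \cref{lemma:integration_gaussian} with $a^2 = \lVert x_i - x_j\rVert^2_{\hat\Sigma_n}$, mean $\mu_0^\top(x_i - x_j)$ and variance $c_{i,j}(\omega)$ produces the prefactor $\big(1 + c_{i,j}(\omega)/\lVert x_i - x_j\rVert^2_{\hat\Sigma_n}\big)^{-1/2}$ and an exponential whose denominator is $\lVert x_i - x_j\rVert^2_{\hat\Sigma_n} + c_{i,j}(\omega)$. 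Finally I would rewrite that denominator using the law of total variance: $\lVert x_i - x_j\rVert^2_{\hat\Sigma_n} + c_{i,j}(\omega)$ equals $\E{}{\mathbb{V}(\theta^\top(x_i - x_j)\mid H_n)} + \mathbb{V}(\E{}{\theta^\top(x_i - x_j)\mid H_n}) = \mathbb{V}(\theta^\top(x_i - x_j)) = \lVert x_i - x_j\rVert^2_{\Sigma_0}$, the prior variance of the gap, which is the stated $\lVert x_i - x_j\rVert^2_{\hat\Sigma_0}$. Summing over pairs gives the theorem.

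The main obstacle is the variance computation of $\hat\mu_n^\top(x_i - x_j)$, i.e.\ deriving $\Cova(\hat\mu_n)$ in \eqref{eq:linear_c_i_j}. Unlike the MAB case, where rewards of distinct arms are independent, here all arms share the common parameter $\theta$, so $\Cova(Y_s, Y_t) \neq 0$ even for $s \neq t$; this generates the cross-arm ``covariance between arms'' terms that have no MAB counterpart, and bookkeeping these while collapsing the time-step double sum into an arm-indexed one is the delicate step. The law-of-total-variance simplification of the exponent is what keeps the final bound clean and lets it reduce to \cref{th:upper bound} when $x_i = e_i$ and $\Sigma_0$ is diagonal.
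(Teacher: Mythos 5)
Your proposal is correct and follows essentially the same route as the paper's proof in \cref{subsec:app_proof_linear_bandit}: the same pair decomposition, the same sub-Gaussian tail bound conditioned on $\{J_n=j\}$, the identity $\E{}{\hat\mu_n}=\mu_0$, the computation of $\Cova(\hat\mu_n)=\sigma^{-4}\hat\Sigma_n\Cova(B_n)\hat\Sigma_n$ with the cross-arm covariance terms, and the law-of-total-variance simplification of the exponent to $\lVert x_i-x_j\rVert^2_{\Sigma_0}$. The only cosmetic difference is that you apply \cref{lemma:integration_gaussian} directly to the scalar $\hat\mu_n^\top(x_i-x_j)$, whereas the paper reaches the same expression through a matrix computation via the Sherman--Morrison identity.
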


\textbf{Proof of \cref{th:linear_model_upper_bound}. }

The proof for the linear model follows the same steps as the MAB model by rewriting $\mathcal{P}_n$ as
\begin{align*}
    \mathcal{P}_n  =\doublesum\E{}{\mathbb{P}\left(i_*(\theta) = i \mid  J_n = j, H_n\right)\I{J_n = j}}\,.
\end{align*}
By definition of $i_*(\theta)$ and $J_n$ in the linear bandit setting,
\begin{align*}
    \mathbb{P}\left(i_*(\theta) = i \mid H_n, J_n = j\right) &= \mathbb{P}\left(\forall k\in [K],\, \theta^\top x_i \geq \theta^\top x_k \mid H_n, J_n = j\right)  \\
    &\leq \mathbb{P}\left(\theta^\top x_i \geq \theta^\top x_j \mid J_n = j, H_n\right)\\
    &\leq \exp\left( -\frac{\lVert \hat\mu_n\rVert^2_{(x_i - x_j)(x_i - x_j)^\top}}{2 \lVert x_i - x_j \rVert^2_{\hat\Sigma_n}} \right)\,,
\end{align*}
where the last inequality follows from Hoeffding inequality for sub-Gaussian random variables. Taking the expectation with respect to $H_n$,
\begin{align}
\label{eq:temp_expectation_linear}
    \E{}{\mathbb{P}\left(i_*(\theta) = i \mid H_n, J_n = j\right)\I{J_n = j}} &\leq \E{}{\exp\left( -\frac{\lVert \hat\mu_n\rVert^2_{(x_i - x_j)(x_i - x_j)^\top}}{2 \lVert x_i - x_j \rVert^2_{\hat\Sigma_n}} \right)}\,.
\end{align}
Then we remark that the expectation of $\hat\mu_n$ with respect to $H_n$ is
\begin{align*}
    \E{}{\hat\mu_n} = \E{}{\hat\Sigma_n \left(\Sigma_0^{-1}\mu_0 + \frac{1}{\sigma^2}\sum_{t\in [n]}Y_t x_{A_t}\right)} = \hat\Sigma_n \left(\Sigma_0^{-1}\mu_0 + \frac{1}{\sigma^2} \E{}{\sum_{t\in [n]}Y_t x_{A_t}}\right)\,,
\end{align*}
since the scheduling $(A_1, \dots A_n)$ is known beforehand. Now,
\begin{align*}
    \E{}{\sum_{t\in [n]}Y_t x_{A_t}} = \sum_{t \in [n]} \E{}{Y_t}x_{A_t} = \sum_{t \in [n]}\mu_0^\top x_{A_t} x_{A_t}\,,
\end{align*}
where $\E{}{Y_t}$ was obtained by marginalizing the likelihood over the prior distribution as in \eqref{eq:convolution_simple_gaussian}.

Rearranging the terms permits to conclude that $\E{}{\hat\mu_n} = \mu_0$. Then we can compute the expectation in \eqref{eq:temp_expectation_linear} by applying \cref{lemma:integration_gaussian}, Sylvester identity, and some simplifications:
\begin{align}
\label{eq:temp_justification_Lambda_ij}
    \E{}{\exp\left( -\frac{\lVert \hat\mu_n\rVert^2_{(x_i - x_j)(x_i - x_j)^\top}}{2 \lVert x_i - x_j \rVert^2_{\hat\Sigma_n}} \right)} &= \frac{1}{\sqrt{1 + \frac{ \lVert x_i - x_j \rVert^2_{ \Cova(\hat\mu_n) } }{ \lVert x_i - x_j\rVert^2_{\hat\Sigma_n} }}} e^{ -\frac{1}{2}\lVert \mu_0 \rVert^2_{\Lambda_{ij}} } \,. 
\end{align}
where from \cref{lemma:integration_gaussian},
\begin{align*}
    \Lambda_{i,j} &= \Cova(\hat\mu_n)^{-1} - \Cova(\hat\mu_n)^{-1}\left(\Cova(\hat\mu_n)^{-1} + \frac{(A_i - A_j)(A_i - A_j)}{\lVert A_i - A_j\rVert^2_{\hat\Sigma_n}}\right)^{-1}\Cova(\hat\mu_n)^{-1}\\
    &= \Cova(\hat\mu_n)^{-1} -\Cova(\hat\mu_n)^{-1}\left(I_d + \Cova(\hat\mu_n)\frac{(x_i - x_j)(x_i - x_j)}{\lVert x_i - x_j\rVert^2_{\hat\Sigma_n}}\right)^{-1}\\
    &= \frac{(x_i - x_j)(x_i - x_j)^\top}{\lVert x_i - x_j \rVert^2_{\hat\Sigma_n}+\lVert x_i - x_j \rVert^2_{ \Cova(\hat\mu_n)}}\,.
\end{align*}
The last equality follows from an application of Sherman-Morrison identity.
Applying the law of total expectation,
\begin{align*}
    \Cova(\theta) = \E{}{\Cova(\theta \mid H_n)} + \Cova(\E{}{\theta \mid H_n}) = \hat\Sigma_n + \Cova(\hat\mu_n)\,.
\end{align*}
Therefore,
\begin{align*}
    \lVert x_i - x_j \rVert^2_{\hat\Sigma_n}+\lVert x_i - x_j \rVert^2_{ \Cova(\hat\mu_n)} = \lVert x_i - x_j \rVert^2_{\hat\Sigma_n +  \Cova(\hat\mu_n)}= \lVert x_i - x_j \rVert^2_{\Sigma_0}\,.
\end{align*}
Plugging these into \eqref{eq:temp_justification_Lambda_ij}, we obtain
\begin{align*}
    \mathbb{E}\left[\mathbb{P}\left(i_*(\theta) = i \mid H_n, J_n = j\right)\right] \leq \frac{e^{ -\frac{\lVert\mu_0 \rVert^2_{(x_i - x_j)(x_i - x_j)^\top}}{2\lVert x_i - x_j\rVert^2_{\hat\Sigma_0}}}}{\sqrt{1 + \frac{ \lVert x_i - x_j \rVert^2_{ \Cova(\hat\mu_n) } }{ \lVert x_i - x_j\rVert^2_{\hat\Sigma_n} }}} = \frac{e^{ -\frac{(\mu_0^\top x_i - \mu_0^\top x_j)^2}{2\lVert x_i - x_j\rVert^2_{\hat\Sigma_0}}}}{\sqrt{1 + \frac{ \lVert x_i - x_j \rVert^2_{ \Cova(\hat\mu_n) } }{ \lVert x_i - x_j\rVert^2_{\hat\Sigma_n} }}} \,.
\end{align*}

\textbf{Computation of $\boldsymbol{\Cova(\hat\mu_n)}$.}

By definition of Gaussian posteriors in linear bandit in \eqref{eq:linear_gaussian_posteriors},
\begin{align*}
    \Cova(\hat\mu_n) &= \Cova\left(\hat\Sigma_n \left(\Sigma_0^{-1}\mu_0 + \frac{1}{\sigma^2}B_n\right) \right) = \hat\Sigma_n \Cova\left(\Sigma_0^{-1}\mu_0 + \frac{1}{\sigma^2}B_n\right)\hat\Sigma_n =\frac{1}{\sigma^4}\hat\Sigma_n \Cova( B_n) \hat\Sigma_n\,,
\end{align*}
and
\begin{align*}
    \Cova(B_n) &= \sum_{t \in [n]}\mathbb{V}(Y_t x_{A_t})+ \sum_{t \in [n]}\sum_{t' \in [n], t\neq t'}\Cova (Y_t x_{A_t}, Y_{t'}x_{A_{t'}})\\
    &= \sum_{t \in [n]}\mathbb{V}(Y_t) x_{A_t} x_{A_t}^\top + \sum_{t \in [n]}\sum_{t' \in [n], t\neq t'}\Cova(Y_t, Y_{t'}) x_{A_t} x_{A_{t'}}^\top\\
    &= \sum_{k \in [K]} \omega_k n \mathbb{V}(Y_{x_k})x_k x_k^\top + \doublesum\omega_i \omega_j n^2 \Cova(Y_{x_i}, Y_{x_j})x_i x_j^\top\\
    &= \sum_{k \in [K]} \omega_k n (\sigma^2 + x_k^\top \Sigma_0 x_k))x_k x_k^\top + \doublesum\omega_i \omega_j n^2 (x_i^\top \Sigma_0 x_j) x_i x_j^\top\,.
\end{align*}

\textbf{Proof of \cref{cor:linear_with_optimal_design}.}

We first prove a useful lemma that holds for Bayesian G-optimal design.
\begin{lemma}
    \label{lemma:G_optimal_design_uuseful}
    Let $\cX$ a finite set such that $|\cX| = K$, $\xi : \cX\to [0, 1]$ a distribution on $\cX$ so that $\sum_{x \in \cX}\xi(x) = 1$, $V_n(\xi) = \sum_{x \in \cX} \xi(x) x x^\top + \frac{\sigma^2}{n}\Sigma_0^{-1}$, $\Sigma_0 \in \real^{d\times d}$ a diagonal matrix, and $f(\xi) = \log\det\big(V_n(\xi)\big)$.
    If $\xi^*=\argmin_{\xi \in \Delta_K} f(\xi)$, then $\max_{x\in \cX}\lVert x \rVert^2_{V_n(\xi)^{-1}}\leq d$.
\end{lemma}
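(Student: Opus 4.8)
The plan is to recognize this as a regularized version of the Kiefer--Wolfowitz equivalence theorem for (D-/G-)optimal design, and to extract the bound from a first-order optimality condition at $\xi^*$. Note first that, consistently with the G-optimal design in \eqref{eq:optimal_design}, $\xi^*$ should be read as the \emph{maximizer} of $f(\xi)=\log\det V_n(\xi)$ over the simplex (the log-determinant is being maximized). Since $V_n(\xi)=\sum_{x}\xi(x)xx^\top+\frac{\sigma^2}{n}\Sigma_0^{-1}\succ 0$ for every $\xi$, the objective is finite, $V_n(\xi)$ is always invertible (so no spanning assumption on $\cX$ is needed here, the regularizer takes care of it), and $f$ is concave on $\Delta_K$, so maximizers are characterized by first-order conditions along feasible directions.

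First I would fix an arbitrary $x_0\in\cX$ and perturb $\xi^*$ toward the vertex $\delta_{x_0}$, i.e. set $\xi_t=(1-t)\xi^*+t\,\delta_{x_0}$, which stays in $\Delta_K$ for $t\in[0,1]$. Writing $M=\sum_{x\in\cX}\xi^*(x)\,xx^\top=V_n(\xi^*)-\frac{\sigma^2}{n}\Sigma_0^{-1}$, linearity gives $V_n(\xi_t)=V_n(\xi^*)+t\,(x_0x_0^\top-M)$. Then, using $\frac{d}{dt}\log\det V(t)=\mathrm{tr}\!\big(V(t)^{-1}\dot V(t)\big)$,
\[
\left.\frac{d}{dt}f(\xi_t)\right|_{t=0^+}=\mathrm{tr}\!\big(V_n(\xi^*)^{-1}(x_0x_0^\top-M)\big)=\lVert x_0\rVert^2_{V_n(\xi^*)^{-1}}-\mathrm{tr}\!\big(V_n(\xi^*)^{-1}M\big).
\]

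The key computation is the trace term: substituting $M=V_n(\xi^*)-\frac{\sigma^2}{n}\Sigma_0^{-1}$ yields $\mathrm{tr}(V_n(\xi^*)^{-1}M)=\mathrm{tr}(I_d)-\frac{\sigma^2}{n}\mathrm{tr}(V_n(\xi^*)^{-1}\Sigma_0^{-1})=d-\frac{\sigma^2}{n}\mathrm{tr}(V_n(\xi^*)^{-1}\Sigma_0^{-1})$. Because $\xi^*$ is a maximizer and $\xi_t$ is a feasible direction, the right derivative at $0$ is nonpositive, so $\lVert x_0\rVert^2_{V_n(\xi^*)^{-1}}\le d-\frac{\sigma^2}{n}\mathrm{tr}(V_n(\xi^*)^{-1}\Sigma_0^{-1})$. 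Since $V_n(\xi^*)^{-1}\succ 0$ and $\Sigma_0^{-1}\succ 0$, their product has nonnegative trace, hence $\lVert x_0\rVert^2_{V_n(\xi^*)^{-1}}\le d$; taking the maximum over $x_0\in\cX$ gives the claim.

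The main obstacle is bookkeeping around the regularizer rather than any deep difficulty: one must carry the $\frac{\sigma^2}{n}\Sigma_0^{-1}$ term through the trace identity to see that it only \emph{improves} the bound (turning the classical equality $\max_x\lVert x\rVert^2_{V^{-1}}=d$ of the unregularized equivalence theorem into an inequality $\le d$), and one must justify the first-order condition, which follows cleanly from concavity of $\log\det$ and the fact that the vertices $\delta_{x_0}$ furnish admissible feasible directions at any maximizer. An equivalent route is to write the KKT stationarity conditions directly, using $\partial f/\partial\xi(x)=\lVert x\rVert^2_{V_n(\xi)^{-1}}$ together with the simplex multiplier, but the single-vertex perturbation above avoids having to separate support from non-support coordinates.
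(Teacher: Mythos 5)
Your proof is correct and follows essentially the same route as the paper's: a first-order optimality condition for the concave log-determinant objective at the optimizer $\xi^*$ along the vertex directions $\delta_{x_0}$, combined with showing that $\sum_{x}\xi^*(x)\lVert x\rVert^2_{V_n(\xi^*)^{-1}}\le d$. Your direct substitution of $M=V_n(\xi^*)-\frac{\sigma^2}{n}\Sigma_0^{-1}$ into the trace is a slightly cleaner way to obtain that bound than the paper's Woodbury-identity computation, and you correctly observe that the $\argmin$ in the statement should be read as $\argmax$, consistent with the G-optimal design objective and with the variational inequality the paper itself invokes.
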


\begin{proof}[Proof of \cref{lemma:G_optimal_design_uuseful}]
By concavity of $\xi \mapsto f(\xi)$, we have for any $\xi$ that
\begin{align*}
    0 &\geq \langle \nabla f(\xi^*), \xi - \xi^*
   \rangle = \sum_{x \in \cX}\xi(x)\big[ \nabla f(\xi^*) \big]_x - \langle\xi^*,\nabla f(\xi^*) \rangle\,,
\end{align*}
and since this holds for any pdf $\xi$, choosing $\xi=\delta_{x'}$ for an arbitrary action $x'$ yields 
\begin{align*}
\big[ \nabla f(\xi^*) \big]_{x'} \leq \langle\xi^*,\nabla f(\xi^*) \rangle \quad \text{ for any }x'\in \cX\,.
\end{align*}
Since r.h.s. does not depend on $x'$,
\begin{align}
\label{eq:temp_dirac}
\max_{x\in\cX}\big[ \nabla f(\xi^*) \big]_{x} \leq \langle\xi^*,\nabla f(\xi^*) \rangle \,.
\end{align}

By the property of the gradient of log-determinant, $\big[ \nabla f(\xi^*) \big]_{x} = \lVert x\rVert_{V_n(\xi^*)^{-1}}$. Therefore, for any $\xi$,
\begin{align*}
    \langle \xi, \nabla f(\xi)\rangle &= \sum_{x \in \cX} \xi(x)\lVert x \rVert^2_{V^{-1}(\xi)}\\
    &= \sum_{x \in \cX} \xi(x)x^\top V_n(\xi)^{-1}x\\
    &= \mathrm{Tr}\left(\sum_{x \in \cX} \xi(x)x^\top V_n(\xi)^{-1}x\right)\\
    &= \mathrm{Tr}\left(\sum_{x \in \cX} \xi(x) x x^\top \left(\sum_{x' \in \cX } \xi(x')x'x'^\top + \frac{\sigma^2}{n}\Sigma_0^{-1}\right)^{-1}\right)\\
    &= \mathrm{Tr}\left( E\left(E + \frac{\sigma^2}{n}\Sigma_0^{-1} \right)^{-1} \right)\qquad \qquad \qquad \qquad \text{ where }E = \sum_{x \in \cX}\xi(x)x x^\top\\
    &= \mathrm{Tr}\left(\left(I_d + \frac{\sigma^2}{n}\Sigma_0^{-1} E^{-1}\right)^{-1}\right)\\
    &= \mathrm{Tr}\left(I_d - \frac{\sigma^2}{n}\Sigma_0^{-1} \left(I_d + \frac{\sigma^2}{n}E^{-1}\Sigma_0^{-1}\right)^{-1}E^{-1}\right)\qquad \text{(Woodburry identity)}\\
    &=  \mathrm{Tr}(I_d) - \frac{\sigma^2}{n}\mathrm{Tr}\left(\left(E\Sigma_0 + \frac{\sigma^2}{n}I_d\right)^{-1}\right)\\
    &\leq  \mathrm{Tr}(I_d) = d\,.
\end{align*}
All putting together in \eqref{eq:temp_dirac} with $\xi^* = \argmin_{\xi \in \Delta_K}f(\xi)$ implies $\max_{x \in \cX} \lVert x\rVert_{V(\xi^*)^{-1}}\leq d$.
\end{proof}
A direct implication of \cref{lemma:G_optimal_design_uuseful} is that $\max_{x\in \cX} \lVert x\rVert_{\hat\Sigma_n}\leq \frac{d\sigma^2}{n}$. Therefore,
\begin{align*}
    \mathcal{P}_n \leq \doublesum \frac{e^{ -\frac{(\mu_0^\top x_i - \mu_0^\top x_j)^2}{2\lVert x_i - x_j\rVert^2_{\hat\Sigma_0}}}}{\sqrt{1 + \frac{ c_{i, j} }{ \lVert x_i - x_j\rVert^2_{\hat\Sigma_n} }}} \leq \doublesum \frac{e^{ -\frac{(\mu_0^\top x_i - \mu_0^\top x_j)^2}{2\lVert x_i - x_j\rVert^2_{\hat\Sigma_0}}}}{\sqrt{1 + \frac{ c_{i, j} }{ 2\max_{x \in \cX}\lVert x\rVert^2_{\hat\Sigma_n} }}} \leq \doublesum \frac{e^{ -\frac{(\mu_0^\top x_i - \mu_0^\top x_j)^2}{2\lVert x_i - x_j\rVert^2_{\hat\Sigma_0}}}}{\sqrt{1 + n\frac{ c_{i, j} }{ 2d\sigma^2 }}} \,.
\end{align*}
\subsection{Proofs for Hierarchical Bandits}
\label{subsec:app_proof_hierarchical}
We begin by stating the complete proof.

\begin{theorem}[Complete statement of \cref{th:upper_bound_hierarchical_model}]
\label{th:complete_upper_bound_hierarchical_model}
    For all $\omega \in \Delta^+_K$, the expected PoE of \emph{$\AdaBAI$} using allocation $\omega$ under the hierarchical bandit problem \eqref{eq:model_gaussian} is upper bounded as
    \begin{align*}
    \mathcal{P}_n \leq \doublesum \frac{1}{\sqrt{1 + \frac{ c_{i,j} }{\hat\sigma_{n,i}+\hat\sigma_{n,j}}}}e^{-\frac{( \nu^\top b_i - \nu^\top b_j )^2}{2( \normw{b_i - b_j}{\Sigma}^2 + \sigma^2_{0, i}+\sigma^2_{0, j})}}\,,
    \end{align*}
where    
\begin{tiny} 
    \begin{align}\label{eq:hierarchical_c_i_j}
    &c_{i, j} = \mathbb{V}(\hat \mu_{n,i} - \hat\mu_{n,j}) = \frac{\sigma^4_{0, i}}{(\sigma^2_{0, i}\omega_i n + \sigma^2)^2}\mathbb{V}(B_{n,i}) + \frac{\sigma^4_{0, j}}{(\sigma^2_{0, j}\omega_j n + \sigma^2)^2}\mathbb{V}(B_{n,j})\\
    &+\frac{\sigma^4}{(\sigma^2_{0, i}\omega_i n + \sigma^2)^2}\sum_{k \in [K]}\frac{(b_k^\top \breve{\Sigma}_nb_i)^2}{(\sigma^2 + \omega_k n\sigma^2_{0, k})^2}\mathbb{V}(B_{n,k}) +  \frac{\sigma^4}{(\sigma^2_{0, j}\omega_j n + \sigma^2)^2}\sum_{k \in [K]}\frac{(b_k^\top \breve{\Sigma}_nb_j)^2}{(\sigma^2 + \omega_kn\sigma^2_{0, k})^2}\mathbb{V}(B_{n,k})\nonumber\\
    &+ \frac{\sigma^4}{(\sigma^2_{0, i}\omega_i n + \sigma^2)^2}\sum_{k \in [K]}\sum_{k^\prime \in [K]\setminus\{k\}} \frac{(b_k^\top \breve{\Sigma}_nb_i).(b_{k^\prime}^\top \breve{\Sigma}_nb_i)}{(\sigma^2_{0, k}\omega_k n + \sigma^2).(\sigma^2_{0, k^\prime}\omega_{k^\prime} n + \sigma^2)}\cov(B_{n,k}, B_{n,k^\prime}) \nonumber\\
    &+ \frac{\sigma^4}{(\sigma^2_{0, j}\omega_j n + \sigma^2)^2}\sum_{k \in [K]}\sum_{k^\prime \in [K]\setminus\{k\}} \frac{(b_k^\top \breve{\Sigma}_nb_j).(b_{k^\prime}^\top \breve{\Sigma}_nb_j)}{(\sigma^2_{0, k}\omega_k n + \sigma^2).(\sigma^2_{0, k^\prime}\omega_{k^\prime} n + \sigma^2)}\cov(B_{n,k}, B_{n,k^\prime})\nonumber\\
    &- \frac{2\sigma^4}{(\sigma^2_{0, i}\omega_i n + \sigma^2)(\sigma^2_{0, j}\omega_j n + \sigma^2)} \left[\sum_{k \in [K]}\frac{(b_k^\top \breve{\Sigma}_n b_i)(b_k^\top \breve{\Sigma}_nb_j)}{(\sigma^2 + \omega_k n \sigma^2_{0, k})^2}\mathbb{V}(B_{n,k}) + \sum_{k \in [K]}\sum_{k^\prime \in [K]\setminus\{k\}} \frac{(b_k^\top \breve{\Sigma}_nb_i).(b_{k^\prime}^\top \breve{\Sigma}_nb_j)}{(\sigma^2_{0, k}\omega_k n + \sigma^2).(\sigma^2_{0, k^\prime}\omega_{k^\prime} n + \sigma^2)}\cov(B_{n,k}, B_{n,k^\prime})\right]\nonumber\\
    &- \frac{2\sigma^2_{0, i}\sigma^2_{0, j}}{(\sigma^2_{0, i}\omega_i n + \sigma^2)(\sigma^2_{0, j}\omega_j n + \sigma^2)}\cov(B_{n,i}, B_{n,j}) \nonumber\\
    &+ \frac{2\sigma^2\sigma^2_{0, i}}{(\sigma^2_{0, i}\omega_i n + \sigma^2)^2}\left[\sum_{k \in [K]\setminus \{i\}} \frac{b_k^\top \breve{\Sigma}_nb_i}{(\sigma^2_{0, k}\omega_k n + \sigma^2)}\cov(B_{n,k}, B_{n,i}) + \frac{b_i^\top \breve{\Sigma}_nb_i}{(\sigma^2_{0, i}\omega_i n + \sigma^2)}\mathbb{V}(B_{n,i})\right]\nonumber\\
    &+ \frac{2\sigma^2\sigma^2_{0, j}}{(\sigma^2_{0, j}\omega_j n + \sigma^2)^2}\left[\sum_{k \in [K]\setminus \{j\}} \frac{b_k^\top \breve{\Sigma}_nb_j}{(\sigma^2_{0, k}\omega_k n + \sigma^2)}\cov(B_{n,k}, B_{n,j}) + \frac{b_j^\top \breve{\Sigma}_nb_i}{(\sigma^2_{0, j}\omega_j n + \sigma^2)}\mathbb{V}(B_{n,j})\right]\nonumber\\
    &- \frac{2\sigma^2 \sigma^2_{0, j}}{(\sigma^2_{0, i}\omega_i n + \sigma^2)(\sigma^2_{0, j}\omega_j n + \sigma^2)}\left[\sum_{k \in [K]\setminus\{j\}}\frac{b_k^\top \breve{\Sigma}_nb_i}{\sigma^2 + \omega_k n \sigma^2_{0, k}}\cov(B_{n,k}, B_{n,i}) + \frac{b_j^\top \breve{\Sigma}_nb_i}{\sigma^2 + \omega_jn\sigma^2_{0, j}}\mathbb{V}(B_{n,j})\right]\nonumber\\
    &- \frac{2\sigma^2 \sigma^2_{0, i}}{(\sigma^2_{0, j}\omega_j n + \sigma^2)(\sigma^2_{0, i}\omega_i n + \sigma^2)}\left[\sum_{k \in [K]\setminus\{i\}}\frac{b_k^\top \breve{\Sigma}_nb_j}{\sigma^2 + \omega_k n \sigma^2_{0, k}}\cov(B_{n,k}, B_{n,j}) + \frac{b_i^\top \breve{\Sigma}_nb_j}{\sigma^2 + \omega_i n\sigma^2_{0, i}}\mathbb{V}(B_{n,i})\right]\,,\nonumber
\end{align}
\end{tiny}

where we defined $\breve\Sigma_n$ from \eqref{eq:cluster_posterior},
\begin{align*}
    &\breve{\Sigma}_n^{-1} = \Sigma^{-1} + \sum_{k \in [K]}\frac{\omega_k n}{\sigma^2 + \omega_k n \sigma^2_{0, k}}b_k b_k^\top\,, &\mathbb{V}(B_{n,k}) =  \omega_k n \sigma^2 + \omega_k^2 n^2 (\sigma^2_{0, k} +b_k^\top \Sigma b_k)\\
    &\cov(B_{n,k}, B_{n,k^\prime}) = \omega_k \omega_{k^\prime}n^2 b_k^\top \Sigma b_{k^\prime}\,.
\end{align*}
\end{theorem}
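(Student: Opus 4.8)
The plan is to reuse verbatim the general Bayesian scheme of \cref{subsec:app_proof_MAB,subsec:app_proof_linear_bandit}, since the reformulation of the expected PoE and the Gaussian tail argument are independent of the reward model; the only genuinely new ingredient is the explicit evaluation of the variance term $c_{i,j}=\mathbb{V}(\hat\mu_{n,i}-\hat\mu_{n,j})$. First I would invoke the model-agnostic key reformulation to write
\[
\mathcal{P}_n = \doublesum \E{}{\mathbb{P}\left(i_*(\theta)=i \mid J_n=j, H_n\right)\I{J_n=j}}\,.
\]
For a fixed ordered pair $(i,j)$ I would bound the conditional probability using $\{i_*(\theta)=i\}\subseteq\{\theta_i\geq\theta_j\}$ together with \cref{lemma:Marginal_arm_posterior}: under the hierarchical model \eqref{eq:model_gaussian} the joint posterior of $(\theta_i,\theta_j)$ given $H_n$ is Gaussian, so $\theta_i-\theta_j\mid H_n$ is Gaussian with mean $\hat\mu_{n,i}-\hat\mu_{n,j}$ and a \emph{deterministic} posterior variance (posterior covariances in Gaussian conjugate models do not depend on the observed rewards). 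A sub-Gaussian tail bound then yields $\mathbb{P}(\theta_i\geq\theta_j\mid H_n,J_n=j)\leq \exp(-(\hat\mu_{n,i}-\hat\mu_{n,j})^2/(2\,v_{n,ij}))$, where $v_{n,ij}$ denotes this posterior variance of the difference (written $\hat\sigma_{n,i}+\hat\sigma_{n,j}$ in the statement).

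Next I would take the expectation over $H_n$. As in the MAB and linear proofs, after marginalizing the likelihood over the prior the history is Gaussian, hence $\hat\mu_{n,i}-\hat\mu_{n,j}$ is a Gaussian function of $H_n$; I would identify its mean and variance separately. Plugging the closed forms \eqref{eq:cluster_posterior}--\eqref{eq:posterior} into $\hat\mu_{n,i}$ and using $\E{}{B_{n,k}}=\omega_k n\,\nu^\top b_k$ and $\E{}{\breve\mu_n}=\nu$, a short computation gives the unbiasedness $\E{}{\hat\mu_{n,i}}=\nu^\top b_i$, so $\E{}{\hat\mu_{n,i}-\hat\mu_{n,j}}=\nu^\top(b_i-b_j)$, while $\mathbb{V}(\hat\mu_{n,i}-\hat\mu_{n,j})=c_{i,j}$ by definition. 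Applying \cref{lemma:integration_gaussian} then produces the prefactor $1/\sqrt{1+c_{i,j}/v_{n,ij}}$ and an exponential with denominator $v_{n,ij}+c_{i,j}$. To recover the stated exponent I would invoke the law of total variance for the prior-marginal quantity $\theta_i-\theta_j$, namely $\mathbb{V}(\theta_i-\theta_j)=\E{}{v_{n,ij}}+\mathbb{V}(\hat\mu_{n,i}-\hat\mu_{n,j})=v_{n,ij}+c_{i,j}$ (the first term being deterministic), and evaluate the left-hand side directly from \eqref{eq:model_gaussian}: writing $\theta_i=b_i^\top\mu+\eta_i$ with $\eta_i\sim\cN(0,\sigma_{0,i}^2)$ independent of $\mu\sim\cN(\nu,\Sigma)$ gives $\mathbb{V}(\theta_i-\theta_j)=\normw{b_i-b_j}{\Sigma}^2+\sigma_{0,i}^2+\sigma_{0,j}^2$, which is exactly the exponent denominator.

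The main obstacle is the final step: evaluating $c_{i,j}=\mathbb{V}(\hat\mu_{n,i}-\hat\mu_{n,j})$ to obtain \eqref{eq:hierarchical_c_i_j}. Unlike the MAB case, where $\hat\mu_{n,i}$ depends only on $B_{n,i}$, here $\hat\mu_{n,i}$ depends on \emph{all} reward sums $B_{n,k}$ through the effect posterior mean $\breve\mu_n$. I would therefore first express $\hat\mu_{n,i}$ explicitly as a linear combination of $\{B_{n,k}\}_{k\in[K]}$ via \eqref{eq:posterior} and \eqref{eq:cluster_posterior}, then expand $\mathbb{V}(\hat\mu_{n,i}-\hat\mu_{n,j})$ as a quadratic form in these coefficients. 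Completing this requires the marginal second moments of the reward sums, $\mathbb{V}(B_{n,k})=\omega_k n\sigma^2+\omega_k^2 n^2(\sigma_{0,k}^2+b_k^\top\Sigma b_k)$ and $\cov(B_{n,k},B_{n,k'})=\omega_k\omega_{k'}n^2\,b_k^\top\Sigma b_{k'}$, which I would derive from the Gaussian reward law (rewards of one arm are correlated through $\theta_k$, rewards of distinct arms through the shared effect $\mu$). Carefully tracking the several groups of cross terms --- the two diagonal $B_{n,i}/\sigma^2$ contributions, the $\breve\mu_n^\top b_i$ contributions, and the $i$--$j$ couplings between them --- produces the lengthy expression \eqref{eq:hierarchical_c_i_j}; this part is purely mechanical but highly error-prone bookkeeping, and is where essentially all the work lies.
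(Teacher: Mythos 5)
Your proposal follows the paper's own proof essentially step for step: the same pairwise reformulation of $\mathcal{P}_n$, the same sub-Gaussian tail bound conditional on $H_n$, the same application of the Gaussian integration lemma after establishing $\E{}{\hat\mu_{n,k}}=\nu^\top b_k$, the same law-of-total-variance identity to turn the exponent denominator into $\normw{b_i-b_j}{\Sigma}^2+\sigma_{0,i}^2+\sigma_{0,j}^2$, and the same mechanical expansion of $\mathbb{V}(\hat\mu_{n,i}-\hat\mu_{n,j})$ as a quadratic form in the $B_{n,k}$ using their marginal variances and covariances. This is correct and matches the paper's argument, including your observation that the only substantive new work relative to the MAB case is the bookkeeping for $c_{i,j}$ caused by $\hat\mu_{n,i}$ depending on all reward sums through $\breve\mu_n$.
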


\begin{proof}[Proof of \cref{th:complete_upper_bound_hierarchical_model}]
This proof follows the same idea of the proof of \cref{th:upper bound}. We first write $\mathcal{P}_n$ as
\begin{align*}
    \mathcal{P}_n  =\doublesum\E{}{\mathbb{P}\left(i_*(\theta) = i \mid  J_n = j, H_n\right)\I{J_n = j}}\,.
\end{align*}
Following \eqref{eq:Hoeffding_2}, by applying Hoeffding inequality for sub-Gaussian random variables,
\begin{align*}
\mathbb{P}\left(\argmax_{k \in [K]}\theta_{k}=i \mid H_n, J_n = j \right) &\leq  \mathbb{P}\left( \theta_{i} \geq \theta_{j} \mid H_n, J_n = j \right)\nonumber\\
&= \mathbb{P}\left( (\theta_{i} -\theta_{j}) - (\hat\mu_{n,i} - \hat\mu_{n,j}) \geq - (\hat\mu_{n,i} - \hat\mu_{n,j}) \mid H_n, J_n = j \right)\nonumber\\
&\leq \exp\left( - \frac{(\hat\mu_{n,i} - \hat\mu_{n,j})^2}{2(\hat\sigma^2_{n,i} + \hat\sigma^2_{n,j})}\right)\,,
\end{align*}
where $\hat\mu_{n,i}$ and $\hat\sigma^2_{n,i}$ are given by \eqref{eq:posterior}. Taking the expectation with respect to the history $H_n$,
\begin{align*}
  \E{}{\mathbb{P}(i_*(\theta)=i \mid J_n = j, H_n) \I{J_n = j}}\leq \E{}{e^{-\frac{(\hat\mu_{n,j} - \hat\mu_{n,i})^2}{2(\hat\sigma^2_{n,i} + \hat\sigma^2_{n,j})}}}\,.
\end{align*}
Since $\hat\mu_{n,i} - \hat\mu_{n,j} \sim \cN\big( \E{}{\hat\mu_{n,i}}-\E{}{\hat\mu_{n,j}}, \mathbb{V}(\hat\mu_{n,i} - \hat\mu_{n,j})\big)$, applying \cref{lemma:integration_gaussian} gives
\begin{align*}
    \E{}{e^{-\frac{(\hat\mu_{n,j} - \hat\mu_{n,i})^2}{2(\hat\sigma^2_{n,i} + \hat\sigma^2_{n,j})}}} &= \frac{1}{\sqrt{1 + \frac{\mathbb{V}(\hat\mu_{n,i} - \hat\mu_{n,j})}{\hat\sigma^2_{n,i} + \hat\sigma^2_{n,j}}}}\exp\left(-\frac{(\E{}{\hat\mu_{n,i}} - \E{}{\hat\mu_{n,j}})^2}{2(\hat\sigma^2_{n,i} + \hat\sigma^2_{n,j})}\frac{1}{1 + \frac{\mathbb{V}(\hat\mu_{n,i} - \hat\mu_{n,j})}{\hat\sigma^2_{n,i} + \hat\sigma^2_{n,j}}}\right)
\end{align*}
Therefore,
\begin{align*}
    \E{}{\mathbb{P}(i_*(\theta)=i \mid J_n = j, H_n)}\leq \frac{1}{\sqrt{1 + \frac{\mathbb{V}(\hat\mu_{n,i} - \hat\mu_{n,j})}{\hat\sigma^2_{n,i} + \hat\sigma^2_{n,j}}}}e^{-\frac{\left(\E{}{\hat\mu_{n,i}} - \E{}{\hat\mu_{n,j}}\right)^2}{2\left(\hat\sigma^2_{n,i} + \hat\sigma^2_{n,j} + \mathbb{V}(\hat\mu_{n,i} - \hat\mu_{n,j}) \right)}}\,.
\end{align*}
Now we want to simplify $\hat\sigma^2_{n,i} + \hat\sigma^2_{n,j} + \mathbb{V}(\hat\mu_{n,i} - \hat\mu_{n,j})$. on one hand, by the law of total variance,
\begin{align*}
    \mathbb{V}(\theta_i - \theta_j) &= \E{}{\mathbb{V}(\theta_i - \theta_j \mid H_n)} + \mathbb{V}(\E{}{\theta_i - \theta_j \mid H_n})= \hat\sigma^2_{n, i}+ \hat\sigma^2_{n, j}+\mathbb{V}(\hat\mu_{n, i}-\hat\mu_{n, j})\,,
\end{align*}
On the other hand,
\begin{align*}
    \mathbb{V}(\theta_i - \theta_j) = \E{}{\mathbb{V}(\theta_i - \theta_j \mid \mu)} + \mathbb{V}(\E{}{\theta_i - \theta_j \mid \mu}) &= \sigma^2_{0, i} + \sigma^2_{0, j} + \mathbb{V}( (b_i - b_j)^\top \mu )\\
    &= \sigma^2_{0, i} + \sigma^2_{0, j} + \normw{b_i - b_j}{\Sigma}^2\,.
\end{align*}
Combining these two last equations gives $\hat\sigma^2_{n, i}+ \hat\sigma^2_{n, j}+\mathbb{V}(\hat\mu_{n, i}-\hat\mu_{n, j}) = \sigma^2_{0, i} + \sigma^2_{0, j} + \normw{b_i - b_j}{\Sigma}^2\,$.

Therefore,
\begin{align}\label{eq:temp eq 1 hierarchical model}
    \E{}{\mathbb{P}(i_*(\theta)=i \mid J_n = j, H_n)}\leq \frac{1}{\sqrt{1 + \frac{\mathbb{V}(\hat\mu_{n,i} - \hat\mu_{n,j})}{\hat\sigma^2_{n,i} + \hat\sigma^2_{n,j}}}}e^{-\frac{\left(\E{}{\hat\mu_{n,i}} - \E{}{\hat\mu_{n,j}}\right)^2}{2\left(\sigma^2_{0, i} + \sigma^2_{0, j} + \normw{b_i - b_j}{\Sigma}^2\right)}}\,.
\end{align}

\textbf{Computing} $\bm{\mathbb{V}(\hat\mu_{n,i} - \hat\mu_{n,j})}.$

The rest of the proof consists to compute $\E{}{\hat\mu_{n,i}}$ and $\mathbb{V}(\hat\mu_{n,i} - \hat\mu_{n,j})$ for $(i, j)$. Denoting $Q$ the latent prior distribution $\mu \sim Q$ and $\pi_{H_n}$ the law of $H_n$,
\begin{align*}
    \pi_{H_n}(H_n) &= \pi_{H_n}(Y_{A_1}, ..., Y_{A_n})\\
    &= \iint_{(\theta, \mu)}\mathcal{L}_{\theta}(Y_{A_1}, ..., Y_{A_n})P_0(\theta\mid\mu)Q(\mu)\dint\theta \dint\mu\\
    &= \iint_{(\theta, \mu)} \prod_{i \in [K]}\mathcal{L}_{\theta_i}\left((Y_{t})_{t \in \mathcal{T}_i}\right)P_{0, i}(\theta_i\mid\mu)Q(\mu)\dint\theta_i \dint\mu\\
    &= \int_{\mu}\left[ \prod_{i\in [K]}\int_{\theta_i}\cN\left((Y_t)_{t \in \mathcal{T}_i}) ; \theta_i\bm{1}_{\omega_i n}, \sigma^2 I_{\omega_i n}\right)\cN(\theta_i ; b_i^\top \mu, \sigma^2_{0, i})\dint\theta_i \right]Q(\mu)\dint\mu\,.
\end{align*}
From properties of Gaussian convolutions \citep{Bishop2006},
\begin{align*}
    \int_{\theta_i}\cN\left((Y_{t})_{t \in \mathcal{T}_i}) ; \theta_i\bm{1}_{\omega_i n}, \sigma^2 I_{\omega_i n}\right)\cN(\theta_i ; b_i^\top \mu, \sigma^2_{0, i})\dint\theta_i  &= \cN\left((Y_{t})_{t \in \mathcal{T}_i}) ; (b_i^\top\mu)\bm{1}_{\omega_i n}, \sigma^2. I_{\omega_i n} + \bm{1}_{\omega_i n}\bm{1}_{\omega_i n}^\top \sigma^2_{0, i} \right)\,.
\end{align*}
Therefore,
\begin{align*}
    &\prod_{i \in [K]}\int_{\theta_i}\cN\left((Y_{t})_{t \in \mathcal{T}_i}) ; \theta_i\bm{1}_{\omega_i n}, \sigma^2 I_{\omega_i n}\right)\cN(\theta_i ; b_i^\top \mu, \sigma^2_{0, i})\dint\theta_i\\
    &= \cN(H_n ; \sum_{i \in [K]} e_i\left(\real^K\right) \otimes\left(\sum_{t\in [\omega_i n]} e_t\left(\real^{\omega_i n}\right) \otimes b_i^\top\right)\otimes \mu, \, I_K \otimes (\sigma^2 I_{\omega_i n} + \bm{1}_{\omega_i n}\bm{1}_{\omega_i n}^\top \sigma^2_{0, i}))\,,
\end{align*}
where we define explicitly $e_i\left(\real^K\right)$ as the $i^{th}$ base vector of $\real^K$.

Therefore,
\begin{align*}
    \pi(H_n) &= \int_{\mu}\cN(H_n ; \sum_{i \in [K]} e_i\left(\real^K\right) \otimes\left(\sum_{t\in \omega_i n}e_t\left(\real^{\omega_i n}\right) \otimes b_i^\top\right)\otimes \mu, \, I_K \otimes (\sigma^2 I_{\omega_i n} + \bm{1}_{\omega_i n}\bm{1}_{\omega_i n}^\top \sigma^2_{0, i}))\cN(\mu;\nu, \Sigma)\dint\mu\\
    &= \cN(H_n ; \mathring{\mu}, \mathring{\Sigma})\,,
\end{align*}
where $\mathring{\mu}\in \mathbb{R}^n$, $\mathring{\Sigma}\in \mathbb{R}^{n\times n}$ with
\begin{align}\label{eq:marginal posterior hierarchical}
    &\mathring{\mu} = \sum_{i \in [K]} e_i\left(\real^K\right) \otimes\left(\sum_{t\in \omega_i n}e_t\left(\real^{\omega_i n}\right) \otimes b_i^\top\right)\otimes \nu \nonumber\\
    &\mathring{\Sigma} = I_K \otimes (\sigma^2 I_{\omega_i n} + \bm{1}_{\omega_i n}\bm{1}_{\omega_i n}^\top \sigma^2_{0, i}) \nonumber\\
    &+ \left[\sum_{i \in [K]}e\left(\real^K\right) \otimes\left(\sum_{t\in [\omega_i n]}e_t\left(\real^{\omega_i n}\right) \otimes b_i^\top\right)\right] \Sigma \left[\sum_{i \in [K]}e_i\left(\real^K\right) \otimes\left(\sum_{t\in [\omega_i n]}e_t\left(\real^{\omega_i n}\right) \otimes b_i^\top\right)\right]^\top \,.
\end{align}
The covariance matrix $\mathring{\Sigma}$ seems complex but has a simple structure. The first term $I_K \otimes (\sigma^2 I_{\omega_i n} + \bm{1}_{\omega_i n}\bm{1}_{\omega_i n}^\top \sigma^2_{0, i})$ is the same as in the standard model. The remaining term accounts for the correlation between distinct arms $(i, j)$, and this correlation is of the form $b_i^\top \Sigma b_j$.

Now we are ready to compute $\E{}{\hat\mu_{n,k}}$ for any arm $k\in[K]$: from \eqref{eq:conditional_posterior} and \eqref{eq:posterior},
\begin{align*}
    \E{}{\hat\mu_{n,k}} = \E{}{\tilde{\sigma}^2_{n,k}\left(\frac{\breve\mu_n^\top b_k}{\sigma^2_{0, k}}+\frac{B_{n,k}}{\sigma^2}\right)} &=\E{}{ \frac{\sigma^2\sigma^2_{0, i}}{\sigma^2_{0, k}\omega_k n + \sigma^2}\left(\frac{\breve\mu_n^\top b_k}{\sigma^2_{0, k}}+\frac{B_{n,k}}{\sigma^2}\right)}\\
    &= \frac{\sigma^2}{\sigma^2_{0, k}\omega_k n + \sigma^2}\E{}{\breve\mu_n^\top b_k} + \frac{\sigma^2_{0, k}}{\sigma^2_{0, k}\omega_k n + \sigma^2}\E{}{B_{n,k}}\,.
\end{align*}
From \eqref{eq:cluster_posterior},
\begin{align*}
    \breve\mu_n^\top b_k = \left( \nu^\top \Sigma^{-1} + \sum_{i \in [K]}\frac{B_{n,i}}{\sigma^2 + \omega_i n \sigma^2_{0, i}}b_i^\top \right)\breve{\Sigma}_n b_k &= \nu^\top \Sigma^{-1}\breve{\Sigma}_n b_k + \sum_{i \in [K]}\frac{B_{n,i}}{\sigma^2 + \omega_i n \sigma^2_{0, i}}b_i^\top \breve{\Sigma}_n b_k\,.
\end{align*}
By linearity,
\begin{align*}
    \E{}{\hat\mu_{n,k}} &= \frac{\sigma^2}{\sigma^2_{0, k}\omega_k n +\sigma^2}\left(\nu^\top \Sigma^{-1}\breve{\Sigma}_nb_k + \sum_{i \in [K]}\frac{\E{}{B_{n,i}} }{\sigma^2 + \omega_i n \sigma^2_{0, i}}b_i^\top \breve{\Sigma}_nb_k\right) + \frac{\sigma^2_{0, k}}{\sigma^2_{0, k} \omega_k n + \sigma^2}\E{}{B_{n,k}}\,.
\end{align*}
From \cref{eq:marginal posterior hierarchical}, $\,\E{}{B_{n,i}} = \omega_i n \nu^\top b_i$. Therefore,
\begin{align}\label{eq:temp_covariance_diff}
    \E{}{\hat\mu_{n,k}}  &= \frac{\sigma^2}{\sigma^2_{0, k}\omega_k n +\sigma^2}\left(\nu^\top \Sigma^{-1}\breve{\Sigma}_nb_k + \sum_{i \in [K]}\frac{  \omega_i n \nu^\top b_i}{\sigma^2 + \omega_i n \sigma^2_{0, i}}b_i^\top \breve{\Sigma}_nb_k\right) + \frac{\sigma^2_{0, k}}{\sigma^2_{0, k} \omega_k n + \sigma^2}\omega_k n \nu^\top b_k= \nu^\top b_k
\end{align}
Now we are ready to compute $\mathbb{V}(\hat\mu_{n,i} - \hat\mu_{n,j})$ for any $(i, j)$. From \eqref{eq:temp_covariance_diff},
\begin{align*}
    &\hat\mu_{n,i} - \hat\mu_{n,j} = \frac{\sigma^2}{\sigma^2_{0, i}\omega_i n + \sigma^2}\breve\mu_n^\top b_i + \frac{\sigma^2_{0, i}}{\sigma^2_{0, i}\omega_i n + \sigma^2}B_{n,i} -  \frac{\sigma^2}{\sigma^2_{0, j}\omega_j n + \sigma^2}\breve\mu_n^\top b_j - \frac{\sigma^2_{0, j}}{\sigma^2_{0, i}\omega_j n + \sigma^2}B_{n,j}\\
    &= \underbrace{\frac{\sigma^2}{\sigma^2_{0, i}\omega_i n + \sigma^2}\nu^\top \Sigma^{-1}\breve{\Sigma}_nb_i - \frac{\sigma^2}{\sigma^2_{0, j}\omega_j n + \sigma^2}\nu^\top \Sigma^{-1}\breve{\Sigma}_nb_j}_{\text{does not depend on observations}}   \\
    &+\underbrace{\frac{\sigma^2}{\sigma^2_{0, i}\omega_i n + \sigma^2}.\sum_{k \in [K]}\frac{B_{n,k}}{\omega_k n \sigma^2_{0, k}+\sigma^2}b_k^\top \breve{\Sigma}_nb_i}_{\textcolor{purple}{(1)}} + \underbrace{\frac{-\sigma^2}{\sigma^2_{0, j}\omega_j n + \sigma^2}.\sum_{k \in [K]}\frac{B_{n,k}}{\omega_k n \sigma^2_{0, k}+\sigma^2}b_k^\top \breve{\Sigma}_nb_j}_{\textcolor{purple}{(2)}} \\
    &+\underbrace{\frac{\sigma^2_{0, j}}{\sigma^2_{0, i}\omega_i n + \sigma^2}B_{n,i}}_{\textcolor{purple}{(3)}}+ \underbrace{\frac{-\sigma^2_{0, j}}{\sigma^2_{0, j}\omega_j n + \sigma^2}B_{n,j}}_{\textcolor{purple}{(4)}}\,.
\end{align*}
Since \textcolor{purple}{(1)}, \textcolor{purple}{(2)},\textcolor{purple}{(3)} and \textcolor{purple}{(4)} are correlated,
\begin{align}
    &\mathbb{V}(\hat\mu_{n,i} + \hat\mu_{n,j}) = \mathbb{V}\left({\textcolor{purple}{(1)}} + {\textcolor{purple}{(2)}} + {\textcolor{purple}{(3)}} + {\textcolor{purple}{(4)}}\right) = \sum_{i=1}^4 \mathbb{V}(\textcolor{purple}{(i)}) + \sum_{i=1}^4 \sum_{j=1, j\neq i}^4 2\cov({\textcolor{purple}{(i)}}, {\textcolor{purple}{(j)}})\label{eq:variance_marginal_posterior}\,.
\end{align}
We now compute each term of \eqref{eq:variance_marginal_posterior}:
\begin{tiny}
\begin{align*}
    \mathbb{V}({\textcolor{purple}{(1)}}) &= \frac{\sigma^4}{(\sigma^2_{0, i}\omega_i n + \sigma^2)^2}\mathbb{V}\left( \sum_{k \in [K]}\frac{B_{n,k}}{\omega_k n \sigma^2_{0, k}+\sigma^2}b_k^\top \breve{\Sigma}_nb_i \right)\\
    &= \frac{\sigma^4}{(\sigma^2_{0, i}\omega_i n + \sigma^2)^2}\left(\sum_{k \in [K]}\frac{(b_k^\top \breve{\Sigma}_nb_i)^2}{(\sigma^2 + \omega_k n \sigma^2_{0, k})^2}\mathbb{V}(B_{n,k}) + \sum_{(k, k^\prime), k\neq k^\prime}\cov\left(\frac{B_{n,k}}{\sigma^2 + \omega_k n \sigma^2_{0, k}}b_k^\top \breve{\Sigma}_nb_i, \frac{B_{n,k^\prime}}{\sigma^2 + \omega_{k^\prime} n \sigma^2_{0, k^\prime}}b_{k^\prime}^\top \breve{\Sigma}_nb_i\right)\right)\\
    &= \frac{\sigma^4}{(\sigma^2_{0, i}\omega_i n + \sigma^2)^2}\left(\sum_{k \in [K]}\frac{(b_k^\top \breve{\Sigma}_nb_i)^2}{(\sigma^2 + \omega_k n \sigma^2_{0, k})^2}\mathbb{V}(B_{n,k}) + \sum_{(k, k^\prime), k\neq k^\prime}\frac{(b_k^\top \breve{\Sigma}_nb_i)(b_{k^\prime}^\top \breve{\Sigma}_nb_i)}{(\sigma^2 + \omega_k n \sigma^2_{0, k})(\sigma^2 + \omega_{k^\prime} n \sigma^2_{0, k^\prime})}\cov(B_{n,k}, B_{n,k^\prime})\right)\\
    \mathbb{V}(\textcolor{purple}{(2)})&= \frac{\sigma^4}{(\sigma^2_{0, j}\omega_j n + \sigma^2)^2}\left(\sum_{k \in [K]}\frac{(b_k^\top \breve{\Sigma}_nb_j)^2}{(\sigma^2 + \omega_k n \sigma^2_{0, k})^2}\mathbb{V}(B_{n,k}) + \sum_{(k, k^\prime), k\neq k^\prime}\frac{(b_k^\top \breve{\Sigma}_nb_i)(b_{k^\prime}^\top \breve{\Sigma}_nb_j)}{(\sigma^2 + \omega_k n \sigma^2_{0, k})(\sigma^2 + \omega_{k^\prime} n \sigma^2_{0, k^\prime})}\cov(B_{n,k}, B_{n, k^\prime})\right)\,,\\
    \mathbb{V}(\textcolor{purple}{(3)})&= \frac{\sigma^4_{0, i}}{(\sigma^2_{0, i}\omega_i n+\sigma^2)^2}\mathbb{V}(B_{n,i})\\
    \mathbb{V}(\textcolor{purple}{(4)})&= \frac{\sigma^4_{0, j}}{(\sigma^2_{0, j}\omega_j n+\sigma^2)^2}\mathbb{V}(B_{n,j})\,,
\end{align*}
\begin{align*}
    \cov( \textcolor{purple}{(1)}, \textcolor{purple}{(2)} ) &= -\cov\left( \frac{\sigma^2}{\sigma^2_{0, i}\omega_i n + \sigma^2}.\sum_{k \in [K]}\frac{B_{n,k}}{\omega_k n \sigma^2_{0, k}+\sigma^2}b_k^\top \breve{\Sigma}_nb_i, \frac{\sigma^2}{\sigma^2_{0, j}\omega_j n + \sigma^2}.\sum_{k \in [K]}\frac{B_{n,k}}{\omega_k n \sigma^2_{0, k}+\sigma^2}b_k^\top \breve{\Sigma}_nb_j \right)\\
    &= -\frac{\sigma^4}{(\sigma^2_{0, i}\omega_i n + \sigma^2)(\sigma^2_{0, j}\omega_j n + \sigma^2)}\cov\left( \sum_{k \in [K]}\frac{B_{n,k}}{\omega_k n \sigma^2_{0, k}+\sigma^2}b_k^\top \breve{\Sigma}_nb_i, \sum_{k \in [K]}\frac{B_{n,k}}{\omega_k n \sigma^2_{0, k}+\sigma^2}b_k^\top \breve{\Sigma}_n b_j  \right)\\
    &=  -\frac{\sigma^4}{(\sigma^2_{0, i}\omega_i n + \sigma^2)(\sigma^2_{0, j}\omega_j n + \sigma^2)}\bigg[ \sum_{k \in [K]}\frac{(b_k^\top \breve{\Sigma}_nb_i)(b_k^\top \breve{\Sigma}_nb_j)}{(\sigma^2 + \omega_k n \sigma^2_{0, k})}\mathbb{V}(B_{n,k}) \\
    &\qquad\qquad\qquad\qquad\qquad\qquad\qquad\qquad+ \sum_{(k, k^\prime), k\neq k^\prime}\frac{(b_k^\top \breve{\Sigma}_nb_i)(b_{k^\prime}^\top \breve{\Sigma}_nb_j)}{(\sigma^2 + \omega_k n \sigma^2_{0, k})(\sigma^2 + \omega_{k^\prime} n \sigma^2_{0, k^\prime})}\cov(B_{n,k}, B_{n,k^\prime})\bigg]\\
    \cov( \textcolor{purple}{(3)}, \textcolor{purple}{(4)} ) &= -\cov\left(\frac{\sigma^2_{0, i}}{\sigma^2_{0, i}\omega_i n + \sigma^2}B_{n,i}, \frac{\sigma^2_{0, j}}{\sigma^2_{0, j}\omega_j n + \sigma^2}B_{n,j} \right)\\
    &= -\frac{\sigma^2_{0, i}\sigma^2_{0, j}}{(\sigma^2_{0, i}\omega_i n + \sigma^2)(\sigma^2_{0, j}\omega_j n + \sigma^2)}\cov(B_{n,i}, B_{n,j})\\
    \cov( \textcolor{purple}{(1)}, \textcolor{purple}{(4)} ) &= -\cov\left(\frac{\sigma^2}{\sigma^2_{0, i}\omega_i n + \sigma^2}.\sum_{k \in [K]}\frac{B_{n,k}}{\omega_k n \sigma^2_{0, k}+\sigma^2}b_k^\top \breve{\Sigma}_nb_i, \frac{\sigma^2_{0, j}}{\sigma^2_{0, j}\omega_j n + \sigma^2}B_{n,j}\right)\\
     &= -\frac{\sigma^2 \sigma^2_{0, j}}{(\sigma^2_{0, i}\omega_i n + \sigma^2)(\sigma^2_{0, j}\omega_j n + \sigma^2)}\cov\left( \sum_{k \in [K]}\frac{B_{n,k}}{\sigma^2 + \omega_k n \sigma^2_{0, k}}b_k^\top \breve{\Sigma}_nb_i, B_{n,j} \right)\\
     &= -\frac{\sigma^2 \sigma^2_{0, j}}{(\sigma^2_{0, i}\omega_i n + \sigma^2)(\sigma^2_{0, j}\omega_j n + \sigma^2)}\left[\sum_{k \in [K]\setminus\{j\}} \frac{b_k^\top \breve{\Sigma}_nb_i}{\sigma^2_{0, k}\omega_k n + \sigma^2} + \frac{b_j^\top \breve{\Sigma}_nb_i}{\sigma^2_{0, j}\omega_j n + \sigma^2} \mathbb{V}(B_{n,j}) \right]\,.
\end{align*}
\end{tiny}

The remaining terms are obtained by symmetry. Finally, for any $(i, j)$ :
\begin{align*}
&\mathbb{V}(B_{n,i}) = \mathbb{V}\left(\sum_{t \in \mathcal{A}_i}Y_{t}\right) = \omega_i n \sigma^2 + \omega_i^2 n^2 (\sigma^2_{0, i} + b_i^\top \Sigma b_i)\\
&\cov(B_{n,i}, B_{n,j}) = \cov\left(\sum_{t\in \mathcal{A}_i}Y_{t}, \sum_{t\in \mathcal{A}_j}Y_{t}\right) = \sum_{t\in \mathcal{A}_i}\sum_{t\in \mathcal{A}_j} \cov(Y_{t}, Y_{t}) = \omega_i \omega_j n^2 b_i^\top \Sigma b_j\,.
\end{align*}

\begin{remark}[Computing the upper bound for hierarchical bandit with \cref{th:linear_model_upper_bound}]
The reader can wonder why transforming the hierarchical model into a linear model thanks to \eqref{eq:first_lb}, and plug directly the transformed prior and actions to the linear upper bound (\cref{th:linear_model_upper_bound}). While this is what we do to optimize numerically the bound, it is challenging to give explicit terms with this method. In fact, it would yield to the following upper bound,
\begin{align*}
\mathcal{P}_n &\leq \doublesum \frac{1}{\sqrt{1 + \frac{ \normw{\bb_i - \bb_j}{\Bar{\Cova}_n}^2 }{ \normw{\bb_i - \bb_j}{\bSigma_n}^2 }}}e^{- \frac{ (\bnu^\top \bb_i - \bnu^\top \bb_j)^2 }{2( \normw{\bb_i - \bb_j}{\bSigma}^2 }} = \doublesum \frac{1}{\sqrt{1 + \frac{ \normw{\bb_i - \bb_j}{\Bar{\Cova}_n}^2}{\normw{\bb_i - \bb_j}{\bSigma_n}^2 }}}e^{- \frac{(\nu^\top b_i - \nu^\top b_j)^2}{2( \normw{b_i - b_j}{\Sigma}^2 + \sigma^2_{0, i}+\sigma^2_{0, j})}}\,,
\end{align*}
where
\begin{align*}
    &\bSigma_n = \left( \bSigma + \frac{1}{\sigma^2}\sum_{i \in [K]}\omega_i n \bb_i \bb_i^\top \right)^{-1}\,,\quad\Bar{\Cova}_n = \frac{1}{\sigma^4}\bSigma_n\left( \sum_{i \in [K]} \omega_i n (\sigma^2 + \bb_i \bSigma \bb_i)\bb_i \bb_i^\top + \doublesum \bb_i \bSigma \bb_j \omega_i \omega_j n^2 \bb_i \bb_j^\top \right)\bSigma_n\,.
\end{align*}
However, computing $\normw{\bb_i - \bb_j}{\Bar{\Cova}_n}^2$ and $\normw{\bb_i - \bb_j}{\bSigma_n}^2$ is computationally challenging because it requires first to compute $\bSigma$ with block-matrix inversion, then to recover the marginal and posterior covariances $\Tilde{\sigma}^2_{n, i},\,\breve{\sigma}^2_{n, i}$ and $\hat\sigma^2_{n,i}$ from \eqref{eq:cluster_posterior}, \eqref{eq:conditional_posterior} and \eqref{eq:posterior}. 
\end{remark}
\end{proof}

\subsection{Proof for Prior Misspecification}
\label{subsec:app_proof_prior_misspecification}
\begin{proof}[Proof of \cref{lemma:misspecified_prior_Gaussian}]
Let's denote as $(\tmu_{n,i}, \tsigma^2_{n,i})$ the posterior mean and variance of arm $i$ when using the misspecified priors $\cN(\tmu_{0,i}, \tsigma^2_{0})$, that is,
\begin{align*}
&\tsigma^{-2}_{n,i} = \tsigma^{-2}_{0} + n_i\sigma^{-2}\,,&\tmu_{n, i} = \tsigma^2_{n,i}( \tsigma_{0}^{-2}\mu_{0, i} +\sigma^{-2}B_{n, i})\,,
\end{align*}
where we recall $B_{n,i} = \sum_{t\in\mathcal{T}_i}Y_t$, where $\mathcal{T}_i= \{t\in[n]\,:A_t = i\}$ and $|\mathcal{T}_i = n_i|$. 
The user's decision $J_n$ is defined as $J_n = \argmax_{i\in[K]}\tmu_{n,i}$.
Then, for a fixed bandit $\theta\in\real^K$, 
\begin{align*}
    \mathbb{P}(J_n \neq i_*(\theta)\mid\theta) &\leq \mathbb{P}\left(\exists j \neq i_*(\theta)\,,\tmu_{n,j}>\tmu_{n,i_*(\theta)}\mid \theta\right)\\
    &\leq \sum_{j\in[K]}\mathbb{P}\left(\tmu_{n,j}>\tmu_{n,i_*(\theta)}\mid \theta\right)\,.
\end{align*}
Then, using the misspecified posterior means formulas and the fact that we perform uniform allocations,
\begin{align*}
\mathbb{P}\left(\tmu_{n,j}>\tmu_{n,i_*(\theta)}\mid \theta\right) &= \mathbb{P}\left(\sum_{t\in\mathcal{T}_i} Y_t - \sum_{t\in\mathcal{T}_{i_*(\theta)}} Y_t > \frac{\tsigma^2_{0,i}}{\sigma^2}\mu_{0, i_*(\theta)} - \frac{\tsigma^2_{0, i}}{\sigma^2}\mu_{0, i}\mid\theta \right)\,.
\end{align*}
Using Lemma 3 from \citet{atsidakou2022bayesian}, we can upper bound this latter probability as
\begin{small}
\begin{align*}
\mathbb{P}\left(\tmu_{n,j}>\tmu_{n,i_*(\theta)}\mid \theta\right)\leq \exp\left( -\frac{1}{4\sigma^2_0}\left(\frac{\tsigma^2_{0, i}}{\sigma^2}\mu_{0, i} - \frac{\tsigma^2_{0, i_*(\theta)}}{\sigma^2}\mu_{0, i_*(\theta)} \right)^2 - \frac{1}{2\sigma^2_0}\left(\frac{\tsigma^2_{0, i}}{\sigma^2}\mu_{0, i} - \frac{\tsigma^2_{0, i_*(\theta)}}{\sigma^2}\mu_{0, i_*(\theta)}\right) \left(\theta_i -\theta_{i^*(\theta)}\right)\right)\,.
\end{align*}
\end{small}
Considering all the possible $i_*(\theta)\in[K]$ and integrating with respect to bandit instances $\theta_i \sim \cN(\mu_{0, i}, \sigma^2_0)$ with \cref{lemma:integration_gaussian} gives the result.
\end{proof}

\section{ADDITIONAL EXPERIMENTS}
\label{sec:app_experiments}
In this section, we consider the two following settings (we set $\sigma=1$ for both settings):

\textbf{Random setting.} For MAB, each $\mu_{0,i}$ is sampled from $\mathcal{U}([0, 1])$ and $\sigma_{0,i}$'s are evenly spaced between $0.1$ and $0.5$. For linear bandits, each $\mu_{0,i}$ is sampled from $\mathcal{U}([0, 1])$ and $\Sigma_{0}$ is a diagonal matrix whose entries are evenly spaced between $0.1$ and $0.5$. For hierarchical bandits, $\nu_i\sim\mathcal{U}([-1, 1])$ and $\Sigma$ and $\Sigma_0$ are diagonal with entries evenly spaced in $[0.1, 0.5]$.

\textbf{Fixed setting.} For MAB, each $\mu_{0,i}$'s are evenly spaced between $0$ and $1$, and $\sigma_{0,i}$'s are evenly spaced between $0.1$ and $0.5$. For linear bandits, each $\mu_{0}$ is set flat, $\mu_0 = (1,\dots,1)$ and $\Sigma_{0}$ is a diagonal matrix whose entries are evenly spaced between $0.1$ and $0.5$. For hierarchical bandits, $\nu_i$'s are evenly spaced between $-1$ and $1$. $\Sigma$ and $\Sigma_0$ are diagonal with entries evenly spaced in $[0.1, 0.5]$.

Note that the random setting corresponds to the setting used for \cref{fig:main_standard_experiments}.

\subsection{Prior Misspecification}
\label{subsec:app_prior_misspecification}
We conduct additional synthetic experiments in the standard Gaussian model \eqref{eq:bayes_elim_model_gaussian}, where we instantiate $\AdaBAI$ with different allocations (uniform, optimized, G-optimal and warm-up allocations with TS) with misspecified prior. We set all the parameters as in \cref{sec:experiments}, that is, $\sigma=1$, $K=10$, $\sigma_{0, i}$ evenly spaced between $0.1$ and $0.5$ and $\mu_{0,i}\sim\mathcal{U}([0,1])$. Then construct the misspecified prior mean as $\tmu_{0} = \mu_{0} + \alpha u$, where $u\sim\cN(0, I_K)$. For the variance misspecification, we set the true prior variance to $\sigma_{0,i}=0.3$ for all arms, then construct the misspecified prior as $\tsigma_{0,i} = \sigma_{0,i}+\alpha$. Results are shown in \cref{fig:misspecification} (first row for prior misspecification, second row for variance misspecification), where we let $\alpha$ vary from $0$ (no misspecification) to $0.5$. Overall, it shows that the effect of prior misspecification vanishes as $n$ increases, as expected (see \cref{lemma:misspecified_prior_Gaussian}), except for the TS warm-up. This is because TS is more aggressive, that is, it does not explore suboptimal arms based on his prior belief, yielding to long-term consequences when sticking to the same allocation rule.

\begin{figure}[H]
\centering
\includegraphics[width=15cm]{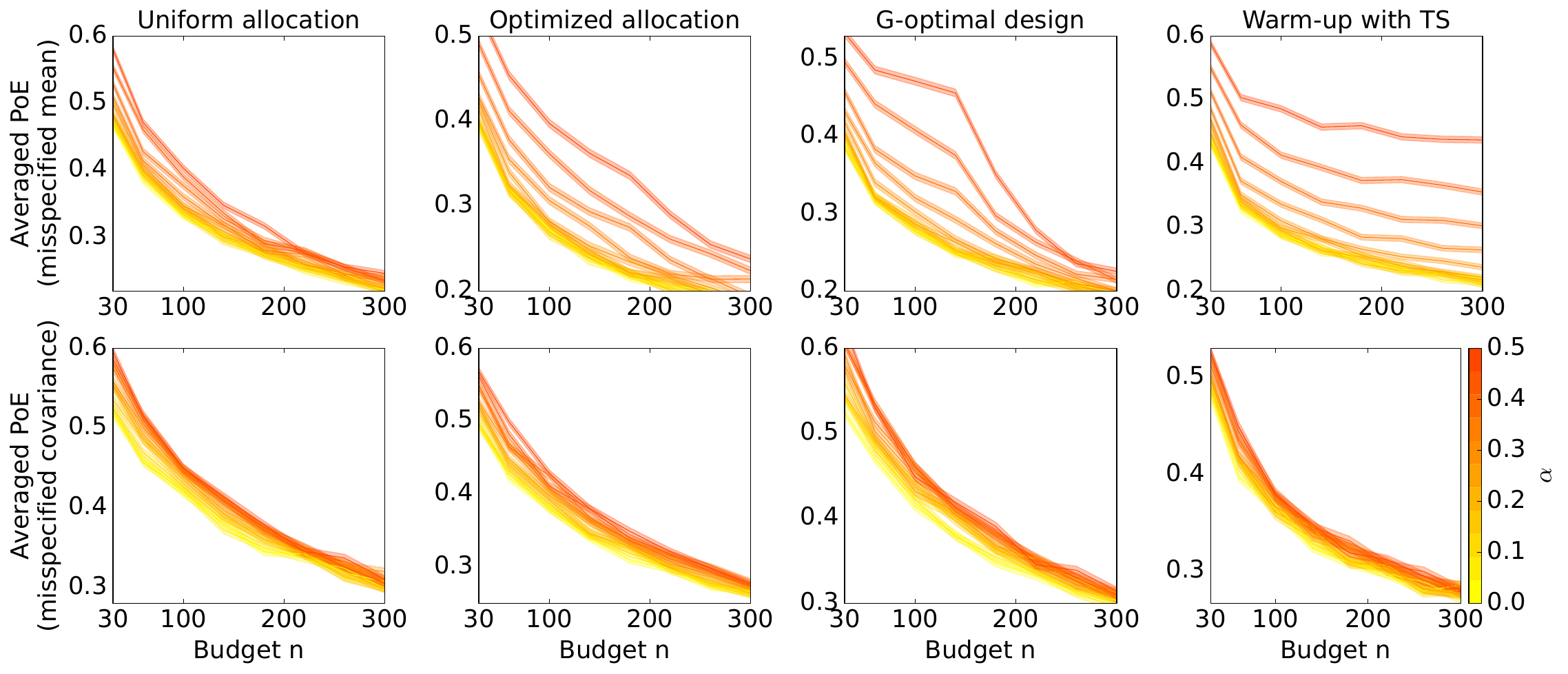}
\caption{Average PoE of $\AdaBAI$ with varying budget and allocations (uniform, optimized, G-optimal and TS warmed-up weights) for different level of mean misspecification (first row) and variance misspecification (second row).}
\label{fig:misspecification}
\end{figure}

\subsection{Additional Setting}
\label{subsec:app_additional_settings}
We provide empirical results for the Fixed setting. All remaining parameters $K$, $d$ and $L$ are the same as in \cref{fig:main_standard_experiments} of \cref{sec:experiments}. \cref{fig:fixed_setting} shows that our methods outperform existing baselines in this setting.

\begin{figure}[H]
\centering
\includegraphics[width=13cm]{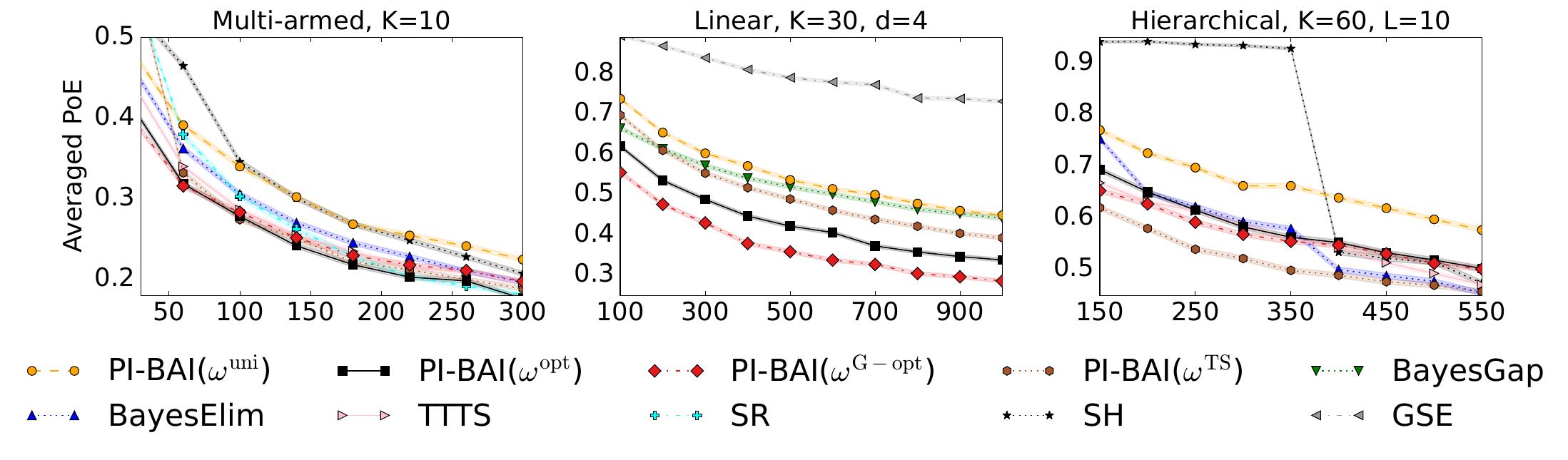}
\caption{Average PoE with varying budgets for the Fixed setting: for MAB the $\mu_{0,i}$'s are evenly spaced between $0$ and $1$ and $\sigma_{0,i}$'s are evenly spaced between $0.1$ and $0.5$. For linear bandits, each $\mu_{0,i}$ is sampled from $\mathcal{U}([0, 1])$ and $\Sigma_{0}$ is a diagonal matrix whose entries are evenly spaced between $0.1$ and $0.5$. For hierarchical bandits, $\nu_i\sim\mathcal{U}([-1, 1])$ and $\Sigma$ and $\Sigma_0$ are diagonal with entries evenly spaced in $[0.1^2, 0.5^2]$.} 
\label{fig:fixed_setting}
\end{figure}

\subsection{MovieLens Experiments}\label{subsec:app_MovieLens}
We provide more information on our MovieLens experiments in \cref{fig:main_standard_experiments}. The MovieLens dataset provides 1 million ratings ratings given by 6040 users to 3952 movies, forming a sparse rating matrix $M$ of size $6,040 \times 3,952$. To learn a prior suitable for our algorithm, we complete $M$ using alternating least squares with rank $d = 5$ (low-rank factorization), resulting in the decomposition $M = U^\top V$, where rows $U_i$ and $V_j$ represent user $i$ and movie $j$, respectively. We then construct a linear Gaussian prior (see \cref{subsec:linear_bandit}) by setting the mean $\mu_0 = \frac{1}{3952} \sum_{i \in [3952]} V_j $ and the covariance $\Sigma_0= \mathrm{diag}(v) $ where $v \in \mathbb{R}^d$ is the empirical variance of $V_j$ for $j \in [3952]$ along each dimension. We use a subset of $K=100$ randomly picked movies. In our experiments, the bandit instances aren't sampled from a Gaussian prior, but we learned a Gaussian prior that we employ in our algorithm. Despite this mismatch, $\AdaBAI$ with such prior performs very well. Of course, the prior is very informative as we used the whole dataset to learn it. But still, the bandit instances are not sampled from it. All results are averaged over $10^4$ rounds.

\subsection{Logistic Bandits}
\label{subsec:experiments_GLB}
We consider the Random and Fixed setting for $K=30$ arms and $d \in \{3, 4\}$. As explained in \cref{sec:app_discussions}, We use a Laplace approximation to estimate the posterior means. For the warmed-up allocations, we use TS with Bernoulli-logistic likelihood \citep{chapelle11empirical}. The G-optimal design weights are obtained as if we face a linear Gaussian bandit.

\cref{fig:GLB} shows that the generalization of \AdaBAI with G-optimal design allocations on has good performances beyond linear settings.
\begin{figure}[H]
\centering
\includegraphics[width=15cm]{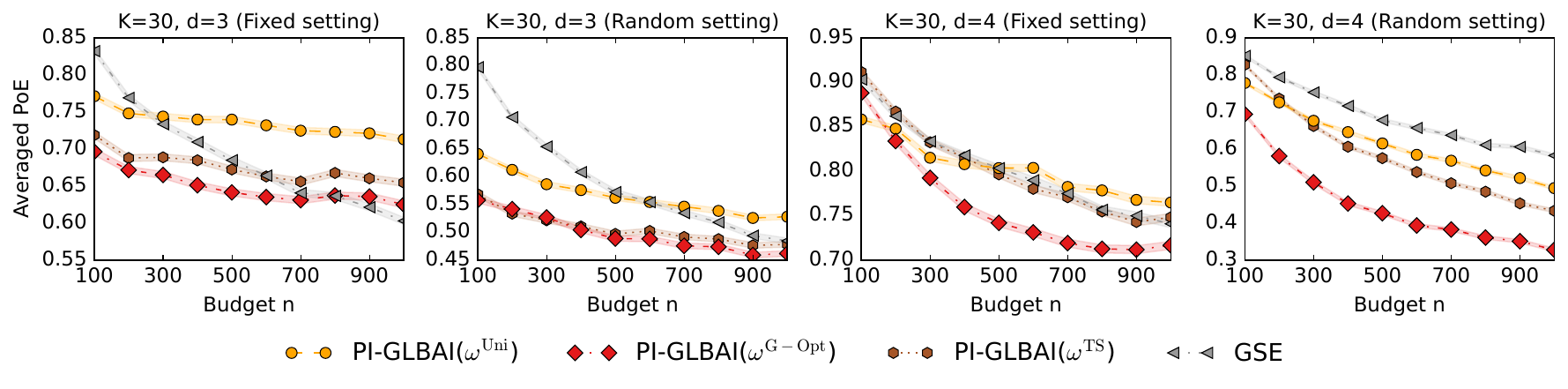}
\caption{Average PoE with varying budgets for the Fixed setting and the Random setting in the GLB framework.} 
\label{fig:GLB}
\end{figure}
\subsection{Choice of Baselines}
\label{subsec:choice_baselines}

\textbf{A remark on \texttt{TTTS}.}
Top two sampling algorithms is a family of algorithms that is known to have good performances in BAI. In \cref{sec:experiments}, we used \texttt{TS-TCI} with $\beta=0.5$ from \citet{jourdan2022top} and denoted it as \texttt{TTTS} for sake of notation simplicity.

\textbf{Influence of elimination.} We empirically compare the influence of using elimination on top of our methods. The elimination procedure is the same as the one used in \citet{atsidakou2022bayesian}. There are $\lfloor \log_2(K) \rfloor$ rounds, and each lasts $\lfloor \frac{n}{R} \rfloor$ steps. At each round, we pull each remaining arm $i$ $\lfloor \frac{\omega_i n}{R} \rfloor$ times. At the end of the round, half of arms are eliminated. These correspond to the arms that have the least posterior mean reward (so $\hat\mu_{n,i}$ in the MAB setting). The allocation $\omega$ is then normalized to allocate more budget to remaining arms. Note that we draft all observations at the end of each round, as it is the case in all methods that rely on successive halvings \citep{atsidakou2022bayesian,azizi2021fixed,karnin2013almost}. \cref{fig:elimination} shows that using elimination does not give better performances, and hence we chose to not add these baselines in \cref{sec:experiments}.

\begin{figure}[H]
\centering
\includegraphics[width=\linewidth]{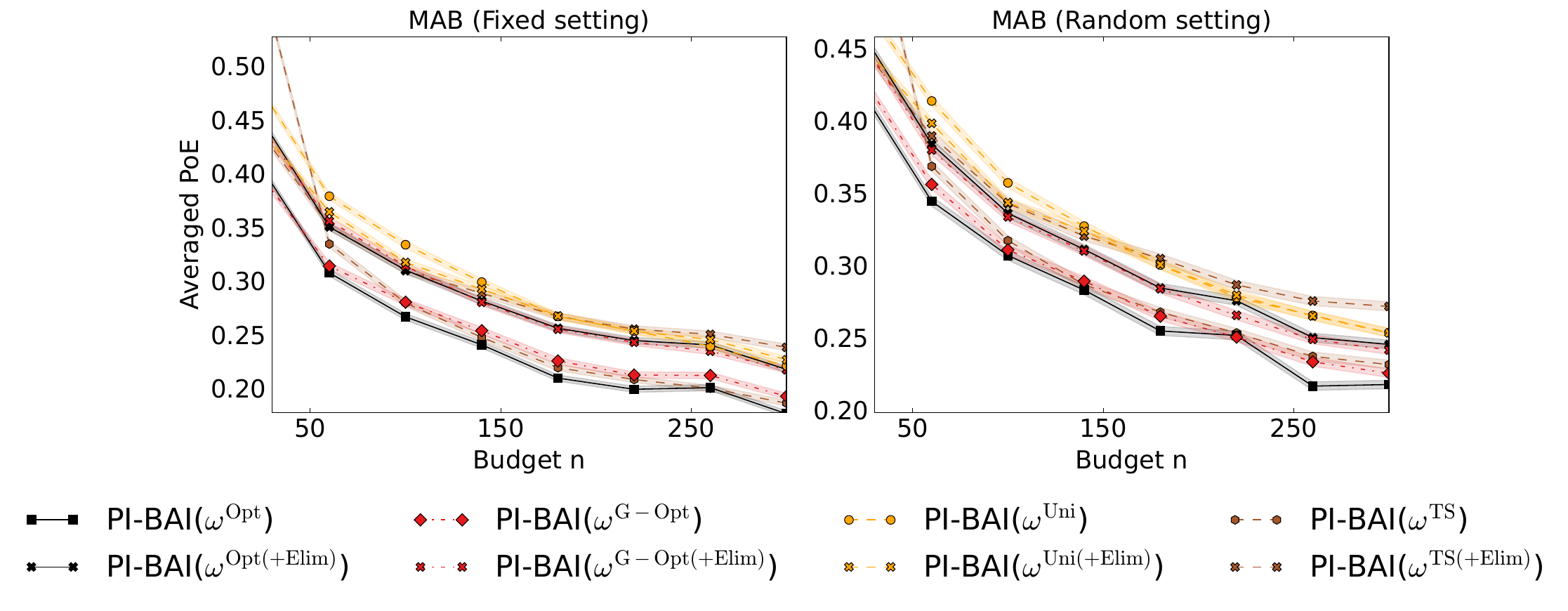}
\caption{Average PoE of \AdaBAI instantiated with different weights with or without elimination in the MAB setting.} 
\label{fig:elimination}
\end{figure}

\subsection{Simple Regret}
\label{subsec:simple_regret}
\cref{fig:simple_regret} compares the performances of our methods based on the Bayesian simple regret (see discussion in \cref{sec:app_related_work}). Overall, it shows that our method outperform existing baselines in terms of simple regret in the Random and Fixed settings. We do not compare our method to \citet{komiyama2023rate}, since it is designed for Bernoulli bandits.

\begin{figure}[H]
\centering
\includegraphics[width=15cm]{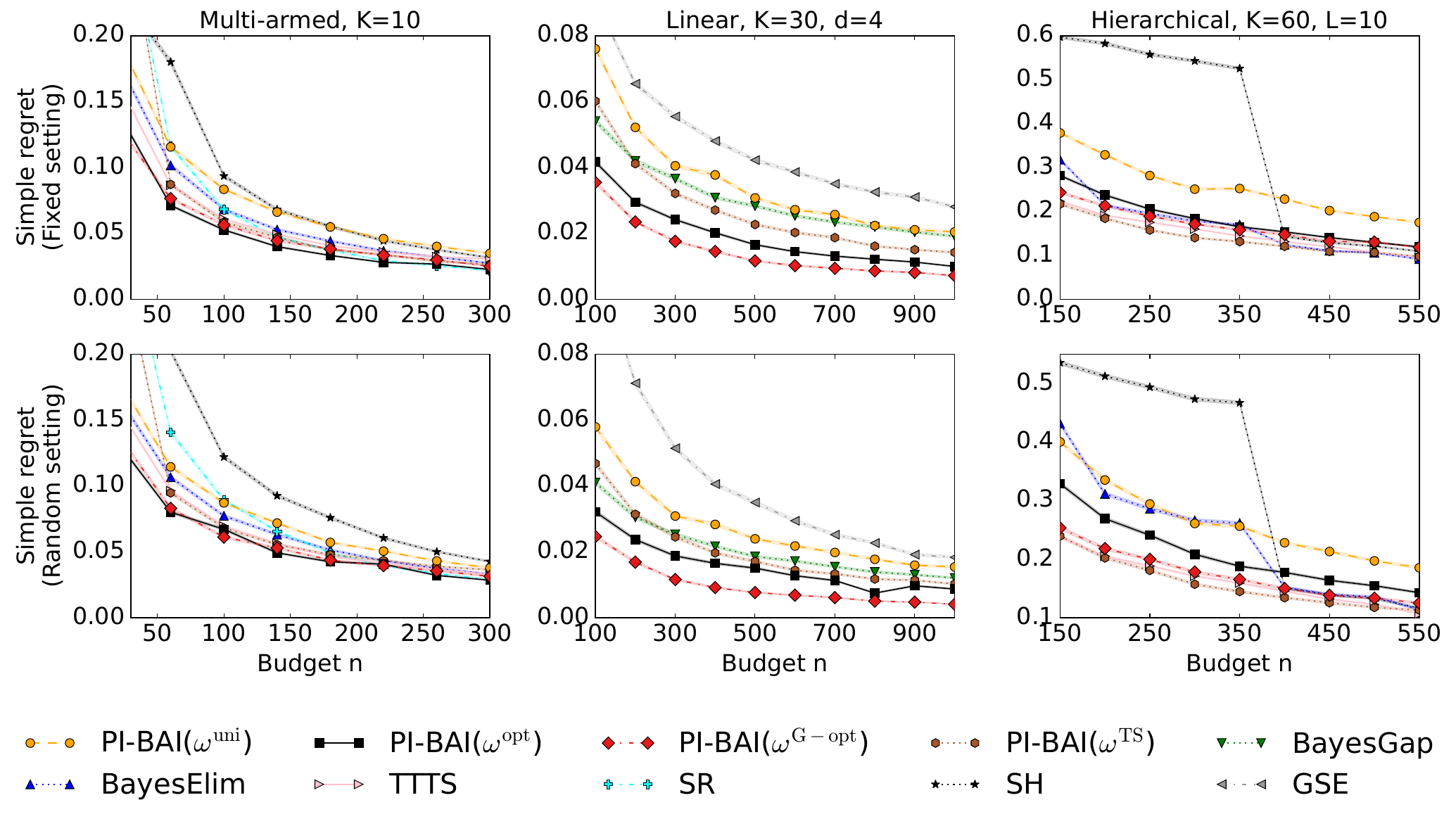}
\caption{Average simple regret with varying budgets for fixed and randomized settings.} 
\label{fig:simple_regret}
\end{figure}

\subsection{Hyperparameters}
\label{subsec:app_experiments_hyperparameters}

\textbf{Warm-up length $n_{\rm{w}}$.}
We try different values of warm-up length $n_{\rm{w}}$ for our warm-up policies. We emphasize that methods based on \texttt{TTTS} require $n_{\rm{w}} > K$ because each arm has to be pulled at the beginning. \cref{fig:hyperparameters_warm_up} suggests picking $n_{\rm{w}} = 2K$ for the warm-up with \texttt{T3C} and  \texttt{TSTCI}, and $n_{\rm{w}} = K$ for the warm-up with TS.

\begin{figure}[H]
\centering
\includegraphics[width=\linewidth]{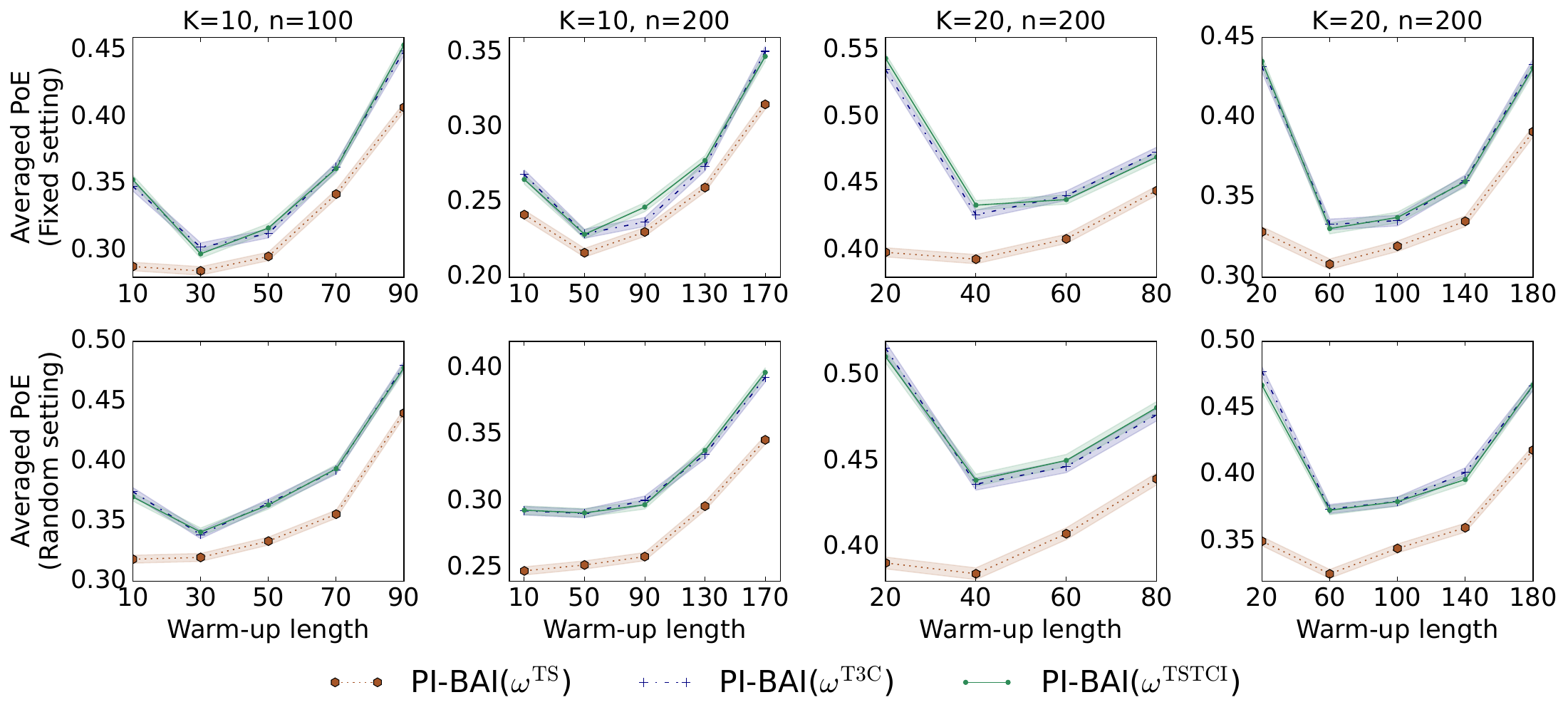}
\caption{Average PoE of \AdaBAI instantiated with different warm-up policies for different warm-up lengths $n_{\rm{w}}$.} 
\label{fig:hyperparameters_warm_up}
\end{figure}

\textbf{Choice of warm-up policy.} We evaluate different warm-up policies,\emph{TS} and two Top-Two algorithms, \texttt{TSTCI} and \texttt{T3C} from \citet{jourdan2022top}. The experiments shown in \cref{fig:warm_up_choice} are run in the same setting as in \cref{sec:experiments}, with $K=10$ arms in the MAB setting, and with $K=60$ and $d=4$ in the hierarchical setting. \cref{fig:warm_up_choice} suggests to pick \texttt{TS} as a warm-up policy for the MAB setting and \texttt{meTS} of \citet{aouali2022mixed} for the hierarchical setting. 
\begin{figure}[H]
\centering
\includegraphics[width=\linewidth]{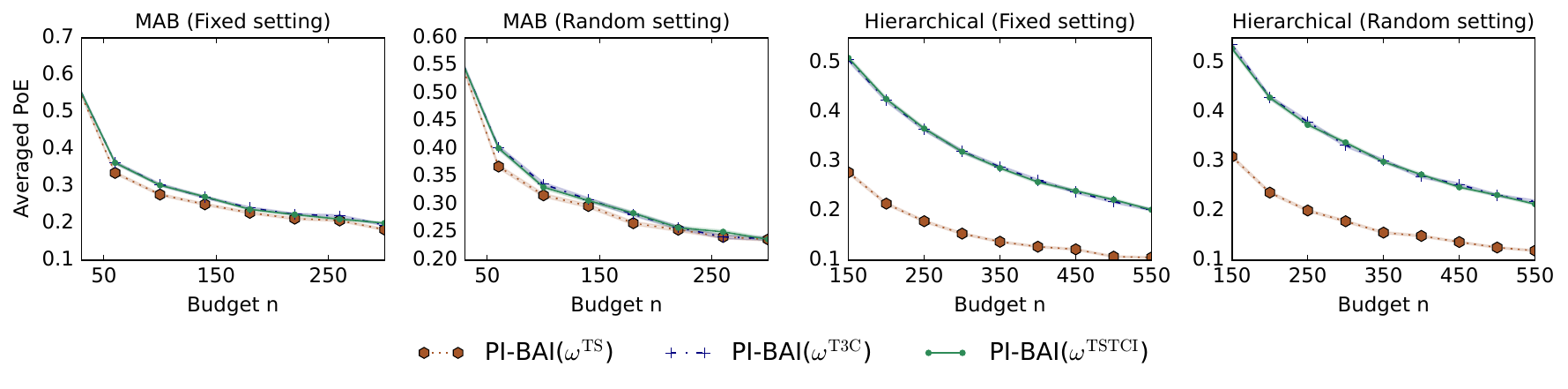}
\caption{Average PoE of \AdaBAI instantiated with different warm-up policies.} 
\label{fig:warm_up_choice}
\end{figure}

\textbf{Mixture parameter $\alpha$.} We discuss the choice of the mixture parameter $\alpha$ in the settings presented in \cref{sec:experiments,subsec:app_additional_settings}. We recall that in the "Random setting", each $\mu_{0,i}$ is sampled from $\mathcal{U}([0, 1])$, whereas in the "fixed setting", $\mu_{0,i}$ are evenly spaced between $0$ and $1$. In both settings, $\sigma_{0,i}$ are evenly spaced from $0.1$ to $0.5$. We recall that  we use the heuristic $\alpha \wOpt_i + (1 - \alpha)\frac{\mu_{0,i}\sigma_{0,i}}{\sum_{k \in [K]}\mu_{0,k}\sigma_{0,k}}$ in our experiments. \cref{fig:hyperparameters_alpha} shows that for the fixed setting, adding the vector $\frac{\mu_{0,i}\sigma_{0,i}}{\sum_{k \in [K]}\mu_{0,k}\sigma_{0,k}}$ helps improve the performances. This is not necessarily the case in the random setting. We find that the best performance is reached at $\alpha\approx 0.5$.
\begin{figure}[H]
\centering
\includegraphics[width=\linewidth]{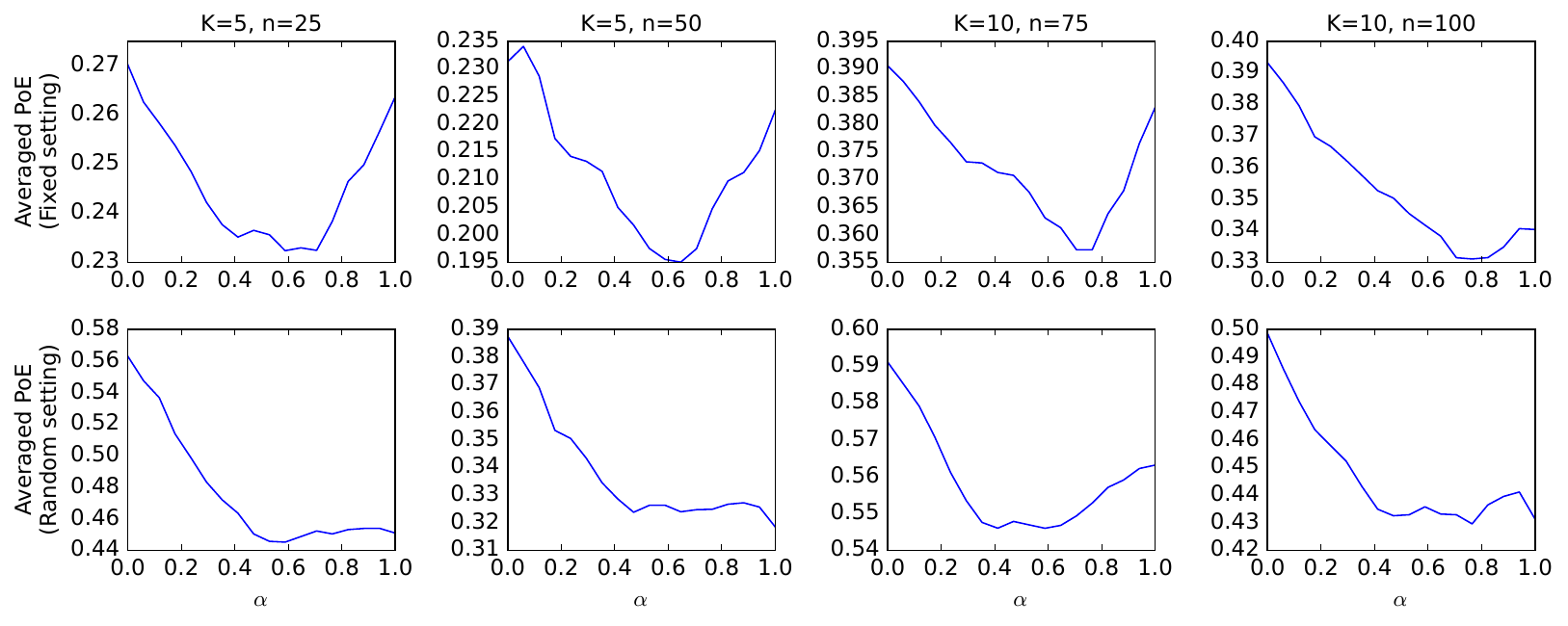}
\caption{Average PoE of $\AdaBAI(\wOpt)$ for different mixture parameter $\alpha$.} 
\label{fig:hyperparameters_alpha}
\end{figure}

\subsection{Confidence Intervals on Sampled Instances}
\label{app:subsec_confidence_intervals}
We show different type of confidence intervals in the Random and Fixed settings. In the first row of \cref{fig:confidence_intervals}, for each instance, we repeat the experiments 100 times on a same bandit environment to get a probability of error. Then, we sample 1000 different bandit instances and show one standard deviation of the resulting probability of error. In the second row of \cref{fig:confidence_intervals}, we plot the PoE of each method subtracted by the PoE of the method having the least PoE in each setting ($\AdaBAI(\wOpt)$ in MAB, $\AdaBAI(\wGOpt)$ in linear bandits and $\AdaBAI(\wTS)$ in hierarchical bandits).
\begin{figure}[H]
\centering
\includegraphics[width=16cm]{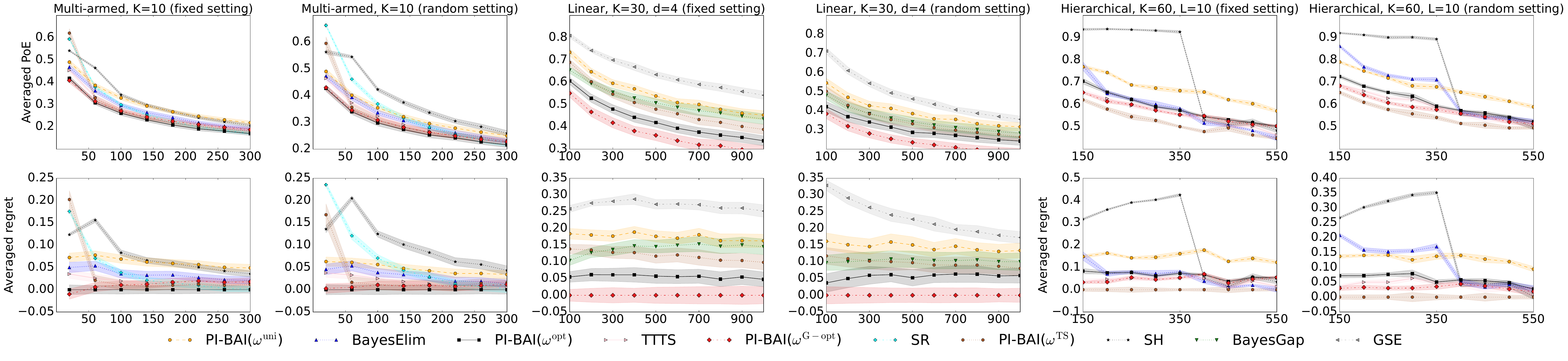}
\caption{PoE of several methods (first row) and PoE of each method substracted by the PoE of the most performing method in each setting (second row). For each instance, we repeat the experiments 100 times and we average the results over 1000 instances. The confidence intervals show one standard deviation.}
\label{fig:confidence_intervals}
\end{figure}

\section{COMPUTE RESSOURCES}
\label{sec:app_compute_ressources}
All experiments have been run locally on a 2021 Mac Book Pro with 8 CPUs.

\end{document}